\documentclass[11pt]{article}

\usepackage[margin=1in]{geometry}
\usepackage{times}
\usepackage[round,authoryear]{natbib}
\usepackage{amsmath,amsfonts,bm}

\def\eqref#1{equation~\ref{#1}}
\def\1{\bm{1}}

\DeclareMathAlphabet{\mathsfit}{\encodingdefault}{\sfdefault}{m}{sl}
\SetMathAlphabet{\mathsfit}{bold}{\encodingdefault}{\sfdefault}{bx}{n}

\usepackage{amsmath}
\usepackage{amssymb}
\usepackage{amsthm}
\usepackage{booktabs}
\usepackage{graphicx}
\usepackage{algorithm}
\usepackage{algpseudocode}
\usepackage{placeins}
\usepackage{flafter}
\usepackage{hyperref}
\usepackage{url}
\numberwithin{equation}{section}
\hypersetup{
  hidelinks,
  pdftitle={FREESIA: Covariance-Aware Posterior Transport for Expressive and Scalable Data Assimilation},
  pdfauthor={Shiwei Ni; Yangwen Zhang; Hang Qi; Xiaofei Guan; Lili Ju}
}
\title{FREESIA: Covariance-Aware Posterior Transport for\\
Expressive and Scalable Data Assimilation}
\author{%
\href{https://orcid.org/0009-0004-5625-1682}{Shiwei Ni}\textsuperscript{1}
\quad
\href{https://orcid.org/0009-0002-1502-8332}{Yangwen Zhang}\textsuperscript{1}
\quad
\href{https://orcid.org/0009-0007-2727-2734}{Hang Qi}\textsuperscript{1}
\quad
\href{https://orcid.org/0000-0002-2062-2486}{Xiaofei Guan}\textsuperscript{1}
\quad
\href{https://people.math.sc.edu/ju/}{Lili Ju}\textsuperscript{2}\\[0.5em]
\small \textsuperscript{1}School of Mathematical Sciences, Tongji University,
Shanghai, China\\
\small \textsuperscript{2}Department of Mathematics, University of South Carolina,
Columbia, SC 29208, USA\\
\small Email: \texttt{2433953@tongji.edu.cn}}
\date{}

\newcommand{\method}{FREESIA}

\newtheorem{theorem}{Theorem}
\newtheorem{proposition}{Proposition}
\newtheorem{corollary}{Corollary}
\newtheorem{lemma}{Lemma}

\begin{document}

\maketitle

\begin{abstract}
Data assimilation aims to infer the state of complex dynamical systems based on
observational data. However, accurate inference of the multimodal posteriors
induced by nonlinear or non-injective observation operators remains a key
challenge under high-dimensional and sparse observation conditions. Ensemble
filters scale to high dimensions but are confined by restrictive distributional
assumptions, while training-free generative filters (e.g., EnSF, EnFF) alleviate
this limitation but may introduce structural errors and hinder information
propagation under sparse observations. To address these issues, we propose a
training-free, asymptotically exact posterior transport method. Firstly, a
covariance-aware posterior transport scheme is designed, which embeds the
forecast cross-covariance into flow-based transport and accurately recovers
unobserved states while preserving the non-Gaussian posterior structure.
Furthermore, the method combines a tractable observation-adaptive proposal with
posterior correction, ensuring accurate approximation of the nonlinear
posterior distribution. Finally, we establish the corresponding posterior flow
theory, from which the asymptotic exactness of the proposed method relative to
finite-ensemble surrogates and the Wasserstein error bound are derived.
Experiments on Double-Well, Lorenz--96, and Kolmogorov flow show that the
proposed method captures complex posterior structure and remains accurate under
sparse, nonlinear, and non-injective observations. In the sparse non-injective
setting, it reduces RMSE by 56\% relative to the best baseline.

\end{abstract}

\section{Introduction}

Data assimilation (DA) combines dynamical-model forecasts with observations to
estimate the evolving state of a dynamical system
and quantify its uncertainty, with important applications in weather
forecasting, climate modeling, and other areas of science and engineering
\citep{law2015dataassimilation}. In Bayesian filtering, the state estimate and
its uncertainty are represented by the filtering distribution, which
characterizes the state conditioned on the available observations and is
updated sequentially as new data become available.
In high-dimensional nonlinear systems, sparse observations provide only
partial information about the state, while nonlinear or non-injective
observation operators can produce non-Gaussian or multimodal filtering
distributions. Effective filtering must therefore both propagate observational
information to unobserved variables and represent complex filtering
distributions.

Classical ensemble Kalman methods such as the EnKF and LETKF propagate
observational information across the state through forecast covariances, using
localization and inflation to remain effective with finite ensembles in high
dimensions \citep{evensen2003enkf,hunt2007letkf,gaspari1999localization}.
Iterative and tempered variants further improve robustness to nonlinear
observations through repeated updates \citep{emerick2012enkfmda}. However,
these methods rely on near-Gaussian approximations of the filtering
distribution, which can be restrictive under strongly nonlinear or
non-injective observations. By contrast, particle filters can represent
non-Gaussian and
multimodal filtering distributions, but suffer severe weight degeneracy in
high dimensions \citep{gordon1993particle,snyder2008particle}.

These limitations have motivated score-based generative approaches for
representing complex, high-dimensional, non-Gaussian filtering distributions.
Score-based Data Assimilation learns a generative score model of dynamical
trajectories and incorporates observations through guidance during assimilation
\citep{rozet2023scorebased}, whereas SSLS recursively learns the forecast score
at each assimilation step \citep{ding2026ssls}. Related learned generative
filters include flow-based Bayesian filtering, which uses normalizing flows to
construct a latent linear-Gaussian state-space model for efficient
high-dimensional nonlinear filtering \citep{wang2025flowbased}. These
approaches therefore rely on learned generative representations, either trained
offline or re-estimated from data. To avoid such training dependence, the
Ensemble Score Filter (EnSF) estimates the forecast score directly from the
current ensemble and incorporates observational information through likelihood
guidance during reverse diffusion \citep{bao2024ensf}. This training-free
score-filtering framework has since been extended to geophysical turbulence,
multiphase-flow state estimation, adaptive learning of stochastic PDE
solutions, and data-driven dynamical-system forecasting
\citep{bao2025geophysical,hu2026twophase,huynh2026spde,tang2026forecasting}.

However, EnSF's heuristic posterior-score construction introduces structural
error, while under sparse component-wise observations the likelihood gradient
acts directly only on observed variables. Several recent extensions have sought
to address these limitations from different directions. IEnSF derives a
covariance-aware posterior-score formulation under a Gaussian-mixture prior and
iteratively refines its approximation to reduce structural error
\citep{zhang2027iensf}. For partial observations, EnSF with image inpainting
reconstructs unobserved states after the score-based update
\citep{liang2025inpainting}, while EnSF-LR transfers observed-state analysis
increments to unobserved variables through prior-covariance linear regression
\citep{xiong2026ensflr}. Latent-EnSF instead addresses sparse observations
through learned coupled latent representations, but requires pretrained
encoder--decoder models and performs assimilation in the learned latent space
\citep{si2025latentensf}.

Beyond score-based diffusion, flow matching provides an ODE-based alternative
for generative transport
\citep{lipman2023flowmatching,tong2024cfm,feng2025guidance}. Its flexible design
of probability paths and couplings can yield straighter trajectories, allowing
accurate sampling with fewer integration steps. The Ensemble Flow Filter (EnFF)
adapts this framework to
training-free data assimilation and introduces a filtering-to-predictive flow
that exploits the sequential Bayesian structure of DA, improving sampling
efficiency and robustness \citep{transue2025enff}. EnFF incorporates
observations through a guidance mechanism, but its practical localized
guidance simplifies the covariance-dependent matrix structure to a tuned
scalar multiple of the identity. This simplification limits the explicit use
of cross-covariances for propagating sparse observational information to
unobserved variables.

Taken together, these developments motivate a covariance-aware, training-free
posterior-flow formulation for nonlinear data assimilation. We therefore
propose FREESIA (Flow-based Reweighted Ensemble Endpoint Sampling for Inference
and Assimilation), which constructs the posterior flow directly from the
current forecast ensemble. FREESIA embeds forecast cross-covariances into the
transport, allowing sparse observational information to propagate to
unobserved variables while retaining non-Gaussian posterior structure. To
realize this construction, the forecast distribution is represented by a
Gaussian mixture model (GMM) surrogate whose covariance structure is retained in the
conditional endpoint distribution. For nonlinear observation operators, local
linearization is used only to construct tractable endpoint proposals, while
likelihood-ratio reweighting preserves the intended nonlinear Bayesian target
under this surrogate.

Our main contributions are:
\begingroup
\setlength{\topsep}{3pt}
\setlength{\partopsep}{0pt}
\begin{itemize}
    \setlength{\itemsep}{2pt}
    \setlength{\parsep}{0pt}
    \setlength{\parskip}{0pt}
    \item We develop covariance-aware posterior transport for
    training-free nonlinear data assimilation, embedding forecast
    cross-covariances into the transport to enable accurate recovery of
    unobserved states while retaining non-Gaussian posterior structure.
    \item We introduce a proposal-and-correction strategy that combines
    tractable observation-adaptive proposals with posterior correction,
    maintaining fidelity to the nonlinear posterior target.
    \item We establish theoretical guarantees for the resulting
    posterior flow, proving asymptotic exactness with respect to the
    finite-ensemble surrogate and deriving a one-step Wasserstein error bound
    that separates finite-sampling, flow-discretization, and forecast-surrogate
    errors.
\end{itemize}
\endgroup

\noindent We evaluate FREESIA on Double-Well, Lorenz--96, and
Kolmogorov flow, covering multimodal posterior approximation, sparse
information transfer, and non-injective observations. We begin in Section~2 by
formulating the sequential Bayesian filtering problem and the finite-ensemble
surrogate underlying FREESIA. Section~3 then develops the covariance-aware
posterior transport and its proposal-and-correction construction, followed by
the theoretical analysis in Section~4. We evaluate the resulting method on
increasingly challenging nonlinear data-assimilation problems in Section~5 and
conclude in Section~6.

\section{Problem Formulation}
\label{sec:problem}

This section defines the sequential Bayesian filtering problem and the
finite-ensemble surrogate targeted by \method{}. We retain only the notation
needed to distinguish the underlying forecast/analysis laws from the
ensemble-based surrogate used to construct the posterior flow in
Section~\ref{sec:method}.

At assimilation times \(k=1,\ldots,K\), the latent states
\(X_k\in\mathbb{R}^d\) and observations \(Y_k\in\mathbb{R}^{m_k}\) satisfy
\begin{equation}
X_k = \psi_{k-1}(X_{k-1}) + \omega_k,
\qquad
Y_k = h_k(X_k) + \eta_k,
\label{eq:state_obs_model}
\end{equation}
where \(X_0\sim p_0\), \(\psi_{k-1}\) is the dynamical model, and \(h_k\) is a
potentially nonlinear observation operator. The model-error law is left
general, whereas the additive observation error satisfies
\begin{equation}
\eta_k\sim\mathcal{N}(0,R_k),
\qquad R_k\succ 0.
\label{eq:obs_noise}
\end{equation}
The observation noises are assumed independent across assimilation cycles and
independent of the state and model-noise processes.
Let \(p_k(x\mid x'):=p(X_k=x\mid X_{k-1}=x')\) denote the Markov transition
density, and let \(y_{1:k}=(y_1,\ldots,y_k)\) denote the realized observations.

Given \(y_{1:k-1}\), the forecast density is
\begin{equation}
p_k^f(x)
:=
p(x\mid y_{1:k-1})
=
\int p_k(x\mid x')p_{k-1}^a(x')\,dx'.
\label{eq:forecast_distribution}
\end{equation}
Given \(y_k\), Bayes' rule gives the analysis density
\begin{equation}
p_k^a(x)
:=
p(x\mid y_{1:k})
=
\frac{\ell_k(x;y_k)p_k^f(x)}{\mathcal Z_k(y_k)},
\qquad
\mathcal Z_k(y_k)
=
\int \ell_k(x';y_k)p_k^f(x')\,dx',
\label{eq:bayes_analysis}
\end{equation}
where \(\ell_k(x;y_k):=p(y_k\mid x)\) is the observation likelihood.

\method{} does not assume direct access to \(p_k^f\). Instead, the current
forecast ensemble
\begin{equation}
\mathcal E_k^f=\{x_k^{f,(i)}\}_{i=1}^{M}
\label{eq:forecast_ensemble}
\end{equation}
defines a finite-ensemble surrogate \(\widetilde p_k^f\), and the corresponding
surrogate analysis is
\begin{equation}
\widetilde p_k^a(x)
\propto
\ell_k(x;y_k)\widetilde p_k^f(x).
\label{eq:surrogate_analysis}
\end{equation}
Section~\ref{sec:method} constructs a posterior flow for this surrogate target,
while Section~\ref{sec:theory} separates finite-flow error from
forecast-surrogate error.

\section{FREESIA Posterior Flow}
\label{sec:method}
\begingroup
\setlength{\abovedisplayskip}{2.5pt plus 1pt minus 1pt}
\setlength{\belowdisplayskip}{2.5pt plus 1pt minus 1pt}
\setlength{\abovedisplayshortskip}{0.5pt plus 1pt minus 0.5pt}
\setlength{\belowdisplayshortskip}{0.5pt plus 1pt minus 0.5pt}
\setlength{\intextsep}{4pt plus 2pt minus 2pt}
\setlength{\abovecaptionskip}{1pt}
\setlength{\belowcaptionskip}{0pt}

This section derives the FREESIA posterior flow for a single assimilation cycle
from the current forecast ensemble. We use $k$ for the assimilation-cycle
index, $t$ for probability-flow time, $j$ for GMM components, and $s$ for
endpoint importance samples. Within one cycle, we suppress $k$, write the
forecast ensemble as $\mathcal E^f=\{x_j^f\}_{j=1}^M$, and set $h:=h_k$,
$R:=R_k$, and $y:=y_k$.

The posterior-flow path targets the surrogate analysis distribution
$\widetilde p^a$ defined in Section~\ref{sec:problem}, with $Z_0,Z_t,Z_1$
denoting auxiliary flow variables rather than the physical filtering state
$X_k$. The observation is associated with the endpoint $Z_1$ through
$Y=h(Z_1)+\eta$, where $\eta\sim\mathcal N(0,R)$ is independent of $Z_0$ and
$Z_1$. FREESIA constructs a localized Gaussian-mixture surrogate of the forecast
distribution from the current ensemble. Its shared covariance structure yields
an analytically tractable conditional endpoint distribution that retains the
forecast cross-covariances used for sparse information transfer.

\begin{figure}[H]
    \centering
    \includegraphics[width=0.61\linewidth]{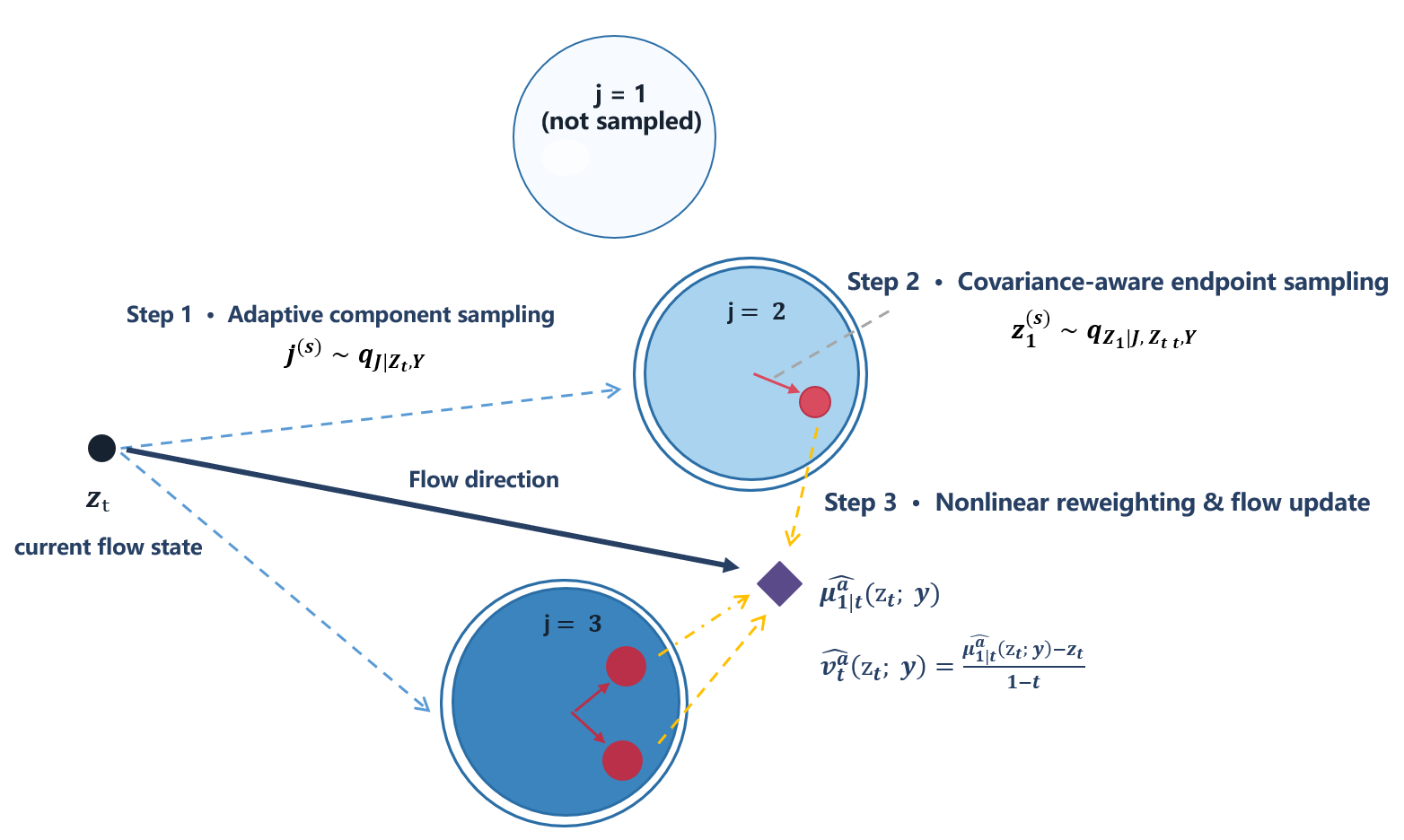}
    \caption{Overview of the FREESIA posterior-flow update. Given $z_t$, Step~1 samples an observation-adaptive GMM component, Step~2 draws a covariance-aware conditional endpoint, and Step~3 applies nonlinear likelihood-ratio correction to estimate the endpoint mean and advance the flow.}
    \label{fig:freesia-posterior-velocity}
\end{figure}

\subsection{Posterior-Flow Construction with a Gaussian-Mixture Surrogate}
\label{sec:gmm-bridge}

This subsection derives the conditional endpoint distribution along the
straight probability path
\begin{equation}
Z_t=(1-t)Z_0+tZ_1,
\qquad t\in[0,1],
\label{eq:straight_path}
\end{equation}
where \(Z_1\sim\widetilde p^f\) follows the forecast surrogate and \(Z_0\) is an
independent source variable. Conditioning this path on the current observation
defines the surrogate posterior path
\begin{equation}
\widetilde p_t^a:=\mathcal L(Z_t\mid Y=y),
\qquad t\in[0,1],
\qquad
\widetilde p_1^a=\widetilde p^a.
\label{eq:surrogate-posterior-path}
\end{equation}
Since $Z_0$ is independent of $Y$, the path starts from the source distribution
$\mathcal L(Z_0)$ and terminates at the surrogate posterior $\widetilde p^a$.
The velocity is given by conditional endpoint regression.

\begin{theorem}
\label{thm:posterior-flow}
For a fixed observation \(Y=y\) and \(t\in[0,1)\),
\begin{equation}
v_t^a(z_t;y)
=
\frac{
\mathbb{E}[Z_1\mid Z_t=z_t,Y=y]-z_t
}{
1-t
}.
\label{eq:posterior_velocity}
\end{equation}
\end{theorem}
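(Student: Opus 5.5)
The velocity $v_t^a$ is understood as the marginal velocity field that generates the posterior path $\widetilde p_t^a=\mathcal L(Z_t\mid Y=y)$ via the continuity equation. The plan is to work entirely under the conditional law given $Y=y$ and apply the standard flow-matching marginalization argument to the straight path \eqref{eq:straight_path}. Two structural facts come first. Conditioning on $Y=y$ changes only the law of the endpoint $Z_1$, turning $\widetilde p^f$ into $\widetilde p^a$. The source $Z_0$ stays independent of $Z_1$ under the conditional law, because $Y=h(Z_1)+\eta$ with $\eta$ independent of $(Z_0,Z_1)$, so $Z_0$ is independent of $(Z_1,Y)$. Hence, given $Y=y$, the pair $(Z_0,Z_1)$ has law $\mathcal L(Z_0)\otimes\widetilde p^a$. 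Conditionally on $Y=y$, the process $Z_t$ is therefore again a straight interpolation, now between the source and the surrogate posterior.

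Next I would identify the conditional velocity. Along each realized path, $\tfrac{d}{dt}Z_t=Z_1-Z_0$. For $t<1$, the relation $Z_0=(Z_t-tZ_1)/(1-t)$ lets me rewrite this as $\tfrac{d}{dt}Z_t=(Z_1-Z_t)/(1-t)$. This expression is a function of $(Z_t,Z_1)$ alone, which is what lets the final formula involve only the endpoint.

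Then I would verify the continuity equation weakly. Take a test function $\varphi\in C_c^\infty(\mathbb R^d)$ and differentiate $\mathbb E[\varphi(Z_t)\mid Y=y]$ in $t$. The chain rule and dominated convergence give
\[
\frac{d}{dt}\mathbb E[\varphi(Z_t)\mid Y=y]=\mathbb E\!\left[\nabla\varphi(Z_t)\cdot\tfrac{Z_1-Z_t}{1-t}\,\Big|\,Y=y\right].
\]
Conditioning additionally on $Z_t$ (the tower property) turns the right-hand side into $\mathbb E[\nabla\varphi(Z_t)\cdot v_t^a(Z_t;y)\mid Y=y]$, with $v_t^a$ as in \eqref{eq:posterior_velocity}. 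This is exactly the weak form of $\partial_t\widetilde p_t^a+\nabla\cdot(\widetilde p_t^a v_t^a)=0$. Equivalently, $v_t^a(z_t;y)$ is the $L^2$-minimizer of $\mathbb E[\|v(Z_t)-(Z_1-Z_0)\|^2\mid Y=y]$, i.e.\ the conditional endpoint regression mentioned before the theorem.

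The main obstacle is justifying the interchange of derivative and expectation. This needs an integrable endpoint, for instance a finite first moment of $\widetilde p^a$. For a GMM surrogate with Gaussian likelihood this holds because the likelihood is bounded, so $\widetilde p^a$ has Gaussian tails. It also needs a source $\mathcal L(Z_0)$ with a finite first moment (e.g.\ Gaussian). These conditions make $v_t^a$ well defined $\widetilde p_t^a$-a.e.\ for every $t<1$. The case $t=1$ is excluded by the statement because of the $1/(1-t)$ singularity. If uniqueness of the generated flow is also desired, I would add a local Lipschitz property of $v_t^a$. With a Gaussian source this follows from the smoothness of the conditional density of $Z_t$ given $Y=y$, but it is not needed for the identity itself.
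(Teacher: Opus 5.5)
Your proposal is correct and follows essentially the same route as the paper. Both arguments condition the coupling on $Y=y$, differentiate $\mathbb E[\varphi(Z_t)\mid Y=y]$ against test functions under a first-moment condition, condition on $Z_t$ to obtain the weak continuity equation, and use $Z_1-Z_0=(Z_1-Z_t)/(1-t)$. You substitute that identity before conditioning, while the paper conditions $Z_1-Z_0$ first and rewrites afterward. Your observation that $Z_0$ stays independent of $Z_1$ given $Y$ is true but not needed for the identity.
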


The proof is given in Appendix~\ref{app:proof-posterior-flow}. We write $v_t^a$
and $\mu_{1\mid t}^a$ for the velocity and conditional endpoint mean associated
with the surrogate posterior path $\widetilde p_t^a$. The theorem therefore
reduces posterior-flow construction to estimating the conditional endpoint mean
\[
\mu_{1\mid t}^a(z_t;y)
:=\mathbb{E}[Z_1\mid Z_t=z_t,Y=y].
\]

To evaluate this mean, we specify the ensemble-GMM surrogate. Given
$\mathcal E^f=\{x_j^f\}_{j=1}^M$, assign equal mixture weights
$\pi_j=1/M$, $j=1,\ldots,M$. Hence
\begin{equation}
\bar x^f=\frac1M\sum_{j=1}^M x_j^f,
\qquad
\mu_j=(1-\gamma)\bar x^f+\gamma x_j^f,
\qquad 0\le\gamma<1.
\label{eq:gmm-component-means}
\end{equation}
Let \(\Sigma^f\succ0\) be the localized, inflated covariance described in
Appendix~\ref{app:covariance-implementation}. We specify \(Z_0\) and \(Z_1\) as
follows, with the mixture law of \(Z_1\) represented hierarchically by \(J\) and
\(Z_1\mid J\):
\begin{equation}
\label{eq:forecast-gmm-decomposition}
\begin{aligned}
Z_0&\sim\mathcal N(0,\Sigma^f),\qquad
Z_1\sim\frac1M\sum_{j=1}^M\mathcal N(\mu_j,(1-\gamma^2)\Sigma^f),
\qquad Z_0\perp(Z_1,J),\\
J&\sim\operatorname{Cat}\!\left(\frac1M,\ldots,\frac1M\right),\qquad
Z_1\mid J=j\sim\mathcal N(\mu_j,(1-\gamma^2)\Sigma^f).
\end{aligned}
\end{equation}
Together with the straight path, these Gaussian endpoint distributions yield
a closed-form conditional endpoint distribution. It decomposes as follows.
Here $r_{J\mid Z_t}$ denotes the conditional component probabilities, and
$\phi_{1\mid t,j}$ denotes the component conditional densities:
\begin{equation}
\label{eq:conditional-gmm-decomposition}
\begin{aligned}
p_{Z_1\mid Z_t}(z_1\mid z_t)
&=\sum_{j=1}^M r_{J\mid Z_t}(j\mid z_t)\,
\phi_{1\mid t,j}(z_1\mid z_t),\\
r_{J\mid Z_t}(j\mid z_t)
&=
\frac{
\pi_j\mathcal N\!\left(
z_t;t\mu_j,[(1-t)^2+(1-\gamma^2)t^2]\Sigma^f
\right)
}{
\sum_\ell\pi_\ell\mathcal N\!\left(
z_t;t\mu_\ell,[(1-t)^2+(1-\gamma^2)t^2]\Sigma^f
\right)
},\\
\phi_{1\mid t,j}(z_1\mid z_t)
&:=\mathcal N(z_1;\mu_{1\mid t,j}(z_t),\Sigma_{1\mid t}),\\
\mu_{1\mid t,j}(z_t)
&=
\mu_j+
\frac{t(1-\gamma^2)}{(1-t)^2+(1-\gamma^2)t^2}
(z_t-t\mu_j),\\
\Sigma_{1\mid t}
&=
\frac{(1-\gamma^2)(1-t)^2}{(1-t)^2+(1-\gamma^2)t^2}
\Sigma^f.
\end{aligned}
\end{equation}
The conditional means $\mu_{1\mid t,j}(z_t)$ determine the component-wise
endpoint locations along the path, while $\Sigma_{1\mid t}$ retains the
cross-coordinate covariance structure needed to propagate sparse observational
information in the subsequent conditioning step.

\subsection{Nonlinear Conditioning and Finite-Sample Approximation}
\label{sec:nonlinear-proposal}

Direct Monte Carlo estimation of $\mu^a_{1\mid t}(z_t;y)$ using samples from
the observation-independent bridge can be inefficient when the observation
likelihood is highly concentrated relative to the bridge distribution.
\method{} instead constructs an observation-adaptive locally linearized
proposal distribution and corrects it using the exact nonlinear likelihood
ratio. The resulting computation consists of observation-adaptive component
sampling, covariance-aware endpoint sampling, and nonlinear likelihood-ratio
correction to estimate the conditional endpoint mean and posterior-flow velocity.

For each component and flow state $z_t$, locally linearize $h$ at
$\mu_{1\mid t,j}(z_t)$, using
$H_{t,j}(z_t):=Dh(\mu_{1\mid t,j}(z_t))$ at differentiable points and a
bounded local linearization selection at nondifferentiable points;
Appendix~\ref{app:is-control} covers both cases. The exact and component-wise
linearized likelihoods are
\begin{equation}
\label{eq:nonlinear-linearized-likelihood}
\begin{aligned}
p(y\mid z_1)&=\mathcal N(y;h(z_1),R),\\
p_{\mathrm{lin}}(y\mid z_1,j,z_t)
&=\mathcal N\!\left(
y;
h(\mu_{1\mid t,j}(z_t))+
H_{t,j}(z_t)(z_1-\mu_{1\mid t,j}(z_t)),
R
\right).
\end{aligned}
\end{equation}
The nonlinear likelihood defines the target, whereas the local linearization is
used only to construct the proposal distribution.

Conditioned on the current flow state \(z_t\), the nonlinear augmented target
and its locally linearized proposal are
\begin{equation}
\label{eq:augmented-target-proposal}
\begin{aligned}
p_{J,Z_1\mid Z_t,Y}(j,z_1\mid z_t,y)
&\propto
r_{J\mid Z_t}(j\mid z_t)\,
\phi_{1\mid t,j}(z_1\mid z_t)\,
p(y\mid z_1),\\
q_{J,Z_1\mid Z_t,Y}(j,z_1\mid z_t,y)
&=
q_{J\mid Z_t,Y}(j\mid z_t,y)\,
q_{Z_1\mid J,Z_t,Y}(z_1\mid j,z_t,y)\\
&\propto
r_{J\mid Z_t}(j\mid z_t)\,
\phi_{1\mid t,j}(z_1\mid z_t)\,
p_{\mathrm{lin}}(y\mid z_1,j,z_t).
\end{aligned}
\end{equation}
Gaussian marginalization and conditioning give
\begin{equation}
\label{eq:proposal-parameters}
\begin{aligned}
S_{t,j}(z_t)
&=H_{t,j}(z_t)\Sigma_{1\mid t}H_{t,j}(z_t)^\top+R,\\
G_{t,j}(z_t)
&=\Sigma_{1\mid t}H_{t,j}(z_t)^\top S_{t,j}(z_t)^{-1},\\
\mu^q_{1\mid t,j}(z_t;y)
&=\mu_{1\mid t,j}(z_t)
+G_{t,j}(z_t)[y-h(\mu_{1\mid t,j}(z_t))],\\
\Sigma^q_{1\mid t,j}(z_t)
&=\Sigma_{1\mid t}
-G_{t,j}(z_t)H_{t,j}(z_t)\Sigma_{1\mid t}.
\end{aligned}
\end{equation}
\begin{equation}
\label{eq:proposal-components}
\begin{aligned}
q_{J\mid Z_t,Y}(j\mid z_t,y)
&\propto
r_{J\mid Z_t}(j\mid z_t)\,
\mathcal N\!\left(y;h(\mu_{1\mid t,j}(z_t)),S_{t,j}(z_t)\right),\\
q_{Z_1\mid J,Z_t,Y}(z_1\mid j,z_t,y)
&=
\mathcal N\!\left(
z_1;\mu^q_{1\mid t,j}(z_t;y),\Sigma^q_{1\mid t,j}(z_t)
\right).
\end{aligned}
\end{equation}
For brevity, we suppress dependence on $z_t$ and $y$ where no ambiguity arises.
The categorical proposal $q_{J\mid Z_t,Y}$ favors mixture components compatible
with the observation, while $G_{t,j}$ propagates observational information
through the conditional covariance $\Sigma_{1\mid t}$. For compactness, write
$r_j:=r_{J\mid Z_t}(j\mid z_t)$,
$\phi_j(z_1):=\phi_{1\mid t,j}(z_1\mid z_t)$,
$q(j,z_1):=q_{J,Z_1\mid Z_t,Y}(j,z_1\mid z_t,y)$, and
$g(j,z_1):=p(y\mid z_1)/p_{\mathrm{lin}}(y\mid z_1,j,z_t)$.

\begin{theorem}
\label{thm:nonlinear-regression}
For the nonlinear target and proposal defined in
Eq.~\ref{eq:augmented-target-proposal},
\begin{equation}
\begin{aligned}
\mu^a_{1\mid t}(z_t;y)
&=
\frac{\sum_j r_j\int z_1\phi_j(z_1)p(y\mid z_1)\,dz_1}
     {\sum_j r_j\int \phi_j(z_1)p(y\mid z_1)\,dz_1} \\
&=
\frac{\sum_j\int z_1 q(j,z_1)g(j,z_1)\,dz_1}
     {\sum_j\int q(j,z_1)g(j,z_1)\,dz_1}.
\end{aligned}
\label{eq:corrected-endpoint-mean}
\end{equation}
\end{theorem}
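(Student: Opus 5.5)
We split the argument into two equalities. For the first, we write the joint law of $(J,Z_1,Z_t,Y)$ under the hierarchical model \eqref{eq:forecast-gmm-decomposition}, the straight path \eqref{eq:straight_path}, and $Y=h(Z_1)+\eta$. Since $\eta$ is independent of $(Z_0,Z_1,J)$ and $Z_t$ depends only on $(Z_0,Z_1)$, we have $p(y\mid j,z_1,z_t)=p(y\mid z_1)=\mathcal N(y;h(z_1),R)$. The joint density therefore factors as
\[
p(j,z_1,y\mid z_t)=r_{J\mid Z_t}(j\mid z_t)\,\phi_{1\mid t,j}(z_1\mid z_t)\,p(y\mid z_1),
\]
using the closed-form decomposition \eqref{eq:conditional-gmm-decomposition}, which we take as given. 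Applying Bayes' rule conditional on $Z_t=z_t$, the conditional law of $(J,Z_1)$ given $(Z_t,Y)$ is this product normalized over $j$ and $z_1$; this is exactly the augmented target in \eqref{eq:augmented-target-proposal}. We then marginalize over $J$ and take the mean of $Z_1$, which gives the first ratio. The normalizer is positive because $p(y\mid z_1)>0$ and $\phi_j$ is a nondegenerate Gaussian, since $\Sigma_{1\mid t}\succ0$ for $t<1$ and $\gamma<1$. The numerator is finite provided $\int\|z_1\|\phi_j(z_1)\,dz_1<\infty$, which holds because $p(y\mid z_1)\le(2\pi)^{-m/2}\det(R)^{-1/2}$.

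For the second equality we use an importance-sampling identity. From the definition of $q$ and Gaussian conjugacy, equations \eqref{eq:proposal-parameters}--\eqref{eq:proposal-components} give
\[
q(j,z_1)=\frac{r_j\,\phi_j(z_1)\,p_{\mathrm{lin}}(y\mid z_1,j,z_t)}{C},\qquad C=\sum_\ell r_\ell\,\mathcal N\bigl(y;h(\mu_{1\mid t,\ell}),S_{t,\ell}\bigr).
\]
To justify this, note that for fixed $j$ the product $\phi_j(z_1)p_{\mathrm{lin}}(y\mid z_1,j,z_t)$ is a linear-Gaussian joint in $(z_1,y)$. It factors as $\mathcal N(y;h(\mu_{1\mid t,j}),S_{t,j})\,\mathcal N(z_1;\mu^q_{1\mid t,j},\Sigma^q_{1\mid t,j})$ by the standard Kalman marginalization/conditioning formulas. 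Multiplying by $g(j,z_1)=p(y\mid z_1)/p_{\mathrm{lin}}$ then gives $q\,g=r_j\phi_j(z_1)p(y\mid z_1)/C$. The constant $C$ cancels between numerator and denominator, and we recover the first expression.

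We expect the main obstacle to be the legitimacy of this division rather than the algebra. The ratio $g$ requires $p_{\mathrm{lin}}>0$, which is automatic because it is a Gaussian density with $R\succ0$. We also need $q$ to be well defined when $H_{t,j}$ is only a selected local linearization at nondifferentiable points of $h$. Here we will note that the factorization uses only that $H_{t,j}$ is a fixed matrix for given $(j,z_t)$, so the argument is unchanged in that case. Finally, integrability of $z_1 q g$ reduces to the integrability shown in the first step.
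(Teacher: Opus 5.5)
Your proposal is correct and follows essentially the same route as the paper. The paper observes that the augmented target is proportional to $r_j\phi_j(z_1)p(y\mid z_1)$ and the proposal to $r_j\phi_j(z_1)p_{\mathrm{lin}}(y\mid z_1,j,z_t)$, so their ratio is proportional to $g$ and the normalizing constants cancel in the self-normalized expectation. You add three details the paper leaves implicit: you derive the target as the true conditional law from the generative model (this needs only $\eta$ independent of $(Z_0,Z_1)$, since $J$ can be marginalized out first), you identify the proposal normalizer $C$ explicitly, and you check positivity and integrability; none of this changes the argument.
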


At the population level, likelihood-ratio reweighting exactly recovers the
nonlinear surrogate target; local linearization affects only the proposal. The
proof is in Appendix~\ref{app:proof-nonlinear-correction}.

\begin{corollary}
\label{cor:self-normalized-regression}
For $s=1,\ldots,N_{\mathrm{IS}}$, let
$J^{(s)}\sim q_{J\mid Z_t,Y}(\cdot\mid z_t,y)$ and
$Z_1^{(s)}\sim q_{Z_1\mid J,Z_t,Y}(\cdot\mid J^{(s)},z_t,y)$.
Define
\begin{equation}
\widetilde w^{(s)}=g(J^{(s)},Z_1^{(s)}),
\qquad
w^{(s)}=\frac{\widetilde w^{(s)}}{\sum_{r=1}^{N_{\mathrm{IS}}}\widetilde w^{(r)}}.
\label{eq:snis-weights}
\end{equation}
Then
\begin{equation}
\hat\mu^a_{1\mid t,N_{\mathrm{IS}}}(z_t;y)
=\sum_{s=1}^{N_{\mathrm{IS}}}w^{(s)}Z_1^{(s)},
\qquad
\hat v^a_{t,N_{\mathrm{IS}}}(z_t;y)
=\frac{\hat\mu^a_{1\mid t,N_{\mathrm{IS}}}(z_t;y)-z_t}{1-t}.
\label{eq:finite-endpoint-regression}
\end{equation}
\end{corollary}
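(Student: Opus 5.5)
The corollary is the finite-sample version of the second representation in Theorem~\ref{thm:nonlinear-regression}, so the plan is to read it as a self-normalized importance sampling estimator and prove what it asserts. There are two things to check. First, the velocity estimator in Eq.~\ref{eq:finite-endpoint-regression} is the plug-in of Theorem~\ref{thm:posterior-flow}. Second, $\hat\mu^a_{1\mid t,N_{\mathrm{IS}}}$ is a consistent estimator of $\mu^a_{1\mid t}(z_t;y)$, converging almost surely as $N_{\mathrm{IS}}\to\infty$. The first point is immediate once the second holds: $z\mapsto (z-z_t)/(1-t)$ is continuous and affine for fixed $t<1$, so convergence of the endpoint mean transfers to the velocity.

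For the main step, define the pairs $(J^{(s)},Z_1^{(s)})$ as i.i.d.\ draws from $q(j,z_1)$. Drawing $J$ from $q_{J\mid Z_t,Y}$ and then $Z_1$ from the Gaussian conditional is exactly ancestral sampling of the factorization in Eq.~\ref{eq:augmented-target-proposal}. Write
\begin{equation*}
\hat\mu^a_{1\mid t,N_{\mathrm{IS}}}
=\frac{N_{\mathrm{IS}}^{-1}\sum_s g(J^{(s)},Z_1^{(s)})Z_1^{(s)}}{N_{\mathrm{IS}}^{-1}\sum_s g(J^{(s)},Z_1^{(s)})}.
\end{equation*}
By the strong law of large numbers, the numerator converges almost surely to $\sum_j\int z_1 q(j,z_1)g(j,z_1)\,dz_1$. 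The denominator converges to $\sum_j\int q(j,z_1)g(j,z_1)\,dz_1$. The ratio map is continuous wherever the denominator is nonzero, so the ratio converges almost surely to the right-hand side of Eq.~\ref{eq:corrected-endpoint-mean}. Theorem~\ref{thm:nonlinear-regression} identifies that limit with $\mu^a_{1\mid t}(z_t;y)$.

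The obstacle is verifying the integrability needed for the strong law, namely $\mathbb E_q[g]<\infty$ and $\mathbb E_q[g\|Z_1\|]<\infty$, together with positivity of the limiting denominator.

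\emph{Positivity.} Here $q\,g=r_j\phi_j p(y\mid z_1)/C$, where $C$ is the normalizing constant of the proposal. Hence $\mathbb E_q[g]=\sum_j r_j\int\phi_j p(y\mid z_1)\,dz_1/C$. This is strictly positive because the Gaussian likelihood is positive everywhere and $\Sigma_{1\mid t}\succ0$ for $t<1$.

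\emph{Finiteness.} Since $p(y\mid z_1)\le(2\pi)^{-m/2}\det R^{-1/2}$, we get $\mathbb E_q[g\|Z_1\|]\le C^{-1}\sup p\cdot\sum_j r_j\int\|z_1\|\phi_j\,dz_1<\infty$, because each $\phi_j$ is Gaussian. Both moments are therefore finite without any assumption on $h$ beyond measurability.

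A further condition is that $q$ dominate the target, so that the importance identity holds. This is automatic because both the linearized Gaussian likelihood and $\phi_j$ are everywhere positive. The bounded linearization selection at nondifferentiable points, deferred to Appendix~\ref{app:is-control}, is what guarantees that $H_{t,j}$, $S_{t,j}\succeq R\succ0$ and $C$ are well defined. If a variance or CLT rate were also wanted, I would bound $\mathbb E_q[g^2\|Z_1\|^2]$. This is the harder part: $g^2 q$ contains $p_{\mathrm{lin}}^{-1}$, which can grow like $\exp(c\|z_1\|^2)$, so I would control it via $\Sigma^q\preceq\Sigma_{1\mid t}$ and the boundedness of $H_{t,j}$. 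Consistency itself needs only the first-moment argument above.
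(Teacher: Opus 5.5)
Your argument is correct and is essentially the paper's proof of Proposition~\ref{prop:pointwise-snis} in Appendix~\ref{app:is-control}. Both use the same cancellation $q\,g\propto r_j\phi_j\,p(y\mid z_1)$, the bound $p(y\mid z_1)\le C_R$ with Gaussian first moments of $\phi_j$, and the strong law applied separately to numerator and denominator; the only cosmetic difference is that the paper works with the velocity integrand $u_t(z_1;z_t)=(z_1-z_t)/(1-t)$ and deduces the endpoint mean, whereas you go the other way.

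One caution on your optional aside: finite second moments of $g$ do not follow from $\Sigma^q_{1\mid t,j}\preceq\Sigma_{1\mid t}$ and bounded $H_{t,j}$ alone. Since $g^2q\propto\phi_j\,p(y\mid z_1)^2/p_{\mathrm{lin}}$, this can fail to be integrable (e.g.\ for bounded $h$ with small $R$ at early flow times) unless a spectral condition such as $\lambda_{\max}(\Sigma_{1\mid t}^{1/2}H_{t,j}^\top R^{-1}H_{t,j}\Sigma_{1\mid t}^{1/2})<1$ holds, which is why the paper's rate results impose $\delta\kappa<1$.
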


Appendix~\ref{app:is-control} proves pointwise consistency under first-moment
conditions and, under the stronger accumulated self-normalized importance
sampling (SNIS) regularity condition, controls the flow-integrated finite-sample contribution at rate
$O(N_{\mathrm{IS}}^{-1/2})$. Starting
from source particles $Z_0^{(i)}\sim\mathcal N(0,\Sigma^f)$, we integrate the
posterior-flow ODE using the velocity estimator in
Eq.~\ref{eq:finite-endpoint-regression} and take the terminal particles as the
analysis ensemble. The complete finite-ensemble algorithm,
structured-covariance implementation, and leading-order computational and
memory complexity are detailed in Appendix~\ref{app:covariance-implementation}.
\FloatBarrier
\endgroup

\section{THEORETICAL ANALYSIS}
\label{sec:theory}

This section quantifies the one-step error of the finite \method{} update
relative to the Bayesian posterior $p^a$. We separate the error arising from
finite posterior-flow computation from the discrepancy introduced by the
ensemble-based forecast surrogate.

For one assimilation cycle, the underlying and surrogate posterior densities are
\[
p^a(x)\propto p(y\mid x)p^f(x),
\qquad
\widetilde p^a(x)\propto p(y\mid x)\widetilde p^f(x).
\]
Define the surrogate-posterior discrepancy by
\begin{equation}
\varepsilon_{\mathrm{sur}}(y)
:=W_1(\widetilde p^a,p^a).
\label{eq:surrogate-error}
\end{equation}
Let $\widehat p^a_{N_{\mathrm{IS}},T}$ denote the law produced by the finite
\method{} update using $N_{\mathrm{IS}}$ endpoint proposals per velocity
evaluation and $T$ Euler flow steps.

The total one-step error decomposes as
\begin{equation}
W_1\!\left(\widehat p^a_{N_{\mathrm{IS}},T},p^a\right)
\le
W_1\!\left(\widehat p^a_{N_{\mathrm{IS}},T},\widetilde p^a\right)
+\varepsilon_{\mathrm{sur}}(y).
\label{eq:error-split}
\end{equation}

Although Eq.~\ref{eq:posterior_velocity} contains the factor $1/(1-t)$, the
conditional Gaussian bridge removes the apparent endpoint singularity.
The finite-sampling error can be controlled directly for bounded nonlinear
and affine observation maps. Appendix~\ref{app:is-control} gives the proofs
and retains an accumulated-SNIS formulation for more general unbounded
nonlinear observations.
\begin{theorem}
\label{thm:total-error}
Suppose the ideal posterior velocity $v_t^a$ is uniformly $L_v$-Lipschitz in
the state for $t\in[0,1]$, and its exact flow has a uniform one-step Euler
truncation error of order $O((\Delta t)^2)$. For bounded nonlinear
observations, assume
\[
B_y:=\sup_x\|y-h(x)\|_{R^{-1}}<\infty,
\]
and
\[
\kappa:=(1-\gamma^2)
\sup_x\lambda_{\max}\!\left(
(\Sigma^f)^{1/2}H(x)^\top R^{-1}H(x)(\Sigma^f)^{1/2}
\right)<\infty.
\]
Here $H(x)$ denotes the local observation linearization used in
Section~3.2, and $\lambda_{\max}(\cdot)$ denotes the
largest eigenvalue. For every $0<\delta\le1$ satisfying $\delta\kappa<1$,
\begin{equation}
\label{eq:total-error-bound}
W_1\!\left(\widehat p^a_{N_{\mathrm{IS}},T},p^a\right)
\le
C_\delta N_{\mathrm{IS}}^{-\delta/(1+\delta)}
+C_2T^{-1}
+\varepsilon_{\mathrm{sur}}(y).
\end{equation}
If instead $h(x)=Ax+b$ is affine, its local linearization is exact and the
likelihood-ratio correction satisfies $g\equiv1$. In this case,
\begin{equation}
\label{eq:affine-total-error-bound}
W_1\!\left(\widehat p^a_{N_{\mathrm{IS}},T},p^a\right)
\le
C_1N_{\mathrm{IS}}^{-1/2}
+C_2T^{-1}
+\varepsilon_{\mathrm{sur}}(y),
\end{equation}
without requiring bounded observation residuals or an importance-weight
overlap condition. The constants are independent of $N_{\mathrm{IS}}$ and $T$.
\end{theorem}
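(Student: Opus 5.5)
The plan starts from the split in Eq.~\ref{eq:error-split}, which isolates $\varepsilon_{\mathrm{sur}}(y)$. It then remains to bound $W_1(\widehat p^a_{N_{\mathrm{IS}},T},\widetilde p^a)$, for which we couple the finite update with the exact posterior flow. Take a source particle $Z_0\sim\mathcal N(0,\Sigma^f)$ and run two recursions from it on the grid $t_n=n\Delta t$, $\Delta t=1/T$. The first is the exact flow $z(t)$ of $v_t^a$; by Theorem~\ref{thm:posterior-flow} and the straight path Eq.~\ref{eq:surrogate-posterior-path}, it pushes $\mathcal L(Z_0)$ to $\widetilde p^a$. The second is the Euler scheme $\hat z_{n+1}=\hat z_n+\Delta t\,\hat v^a_{t_n,N_{\mathrm{IS}}}(\hat z_n;y)$. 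Since $W_1$ is bounded by $\mathbb E\|\hat z_T-z(1)\|$ under this coupling, we write the error $e_n=\hat z_n-z(t_n)$ and obtain
\[
e_{n+1}=e_n+\Delta t\bigl[v^a_{t_n}(\hat z_n)-v^a_{t_n}(z(t_n))\bigr]+\Delta t\,\xi_n+\tau_n ,
\]
where $\xi_n=\hat v^a_{t_n,N_{\mathrm{IS}}}(\hat z_n)-v^a_{t_n}(\hat z_n)$ is the sampling error and $\tau_n=O(\Delta t^2)$ is the Euler truncation assumed in the statement.

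Taking expectations and using the $L_v$-Lipschitz hypothesis gives $\mathbb E\|e_{n+1}\|\le(1+L_v\Delta t)\mathbb E\|e_n\|+\Delta t\,\mathbb E\|\xi_n\|+C\Delta t^2$. A discrete Gr\"onwall argument then yields
\[
\mathbb E\|e_T\|\le e^{L_v}\Bigl(\max_n\mathbb E\|\xi_n\|+C\Delta t\Bigr),
\]
so the $T$-dependent part is $C_2T^{-1}$. What remains is a bound of the form $\mathbb E\|\xi_n\|\le C N_{\mathrm{IS}}^{-\alpha}$ that is uniform in $n$, $T$ and the state.

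The factor $1/(1-t)$ in Eq.~\ref{eq:finite-endpoint-regression} needs care. We will not bound $\hat\mu-\mu$ and then divide. Both the estimator and the target are written relative to the component means $\mu_{1\mid t,j}$, using Eq.~\ref{eq:conditional-gmm-decomposition}. Then $Z_1^{(s)}-\mu^q_{1\mid t,j}$ has covariance $\preceq\Sigma_{1\mid t}=O((1-t)^2)\Sigma^f$. Also $(\mu_{1\mid t,j}(z_t)-z_t)/(1-t)$ and $(\mu^q_{1\mid t,j}-\mu_{1\mid t,j})/(1-t)$ stay bounded, because the gain $G_{t,j}$ carries a factor $\Sigma_{1\mid t}$. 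Every velocity error therefore inherits a factor $(1-t)$ that cancels the singularity. This is the sense in which the conditional Gaussian bridge removes the endpoint singularity.

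\textbf{Affine case.} Here $g\equiv1$ and the proposal is exact. The estimator is then an unweighted Monte Carlo mean of i.i.d.\ draws from $p_{Z_1\mid Z_t,Y}$, a Gaussian mixture whose covariance after the above rescaling is bounded. Its $L^2$ error is $O(N_{\mathrm{IS}}^{-1/2})$ by Cauchy--Schwarz, which gives $C_1N_{\mathrm{IS}}^{-1/2}$. No overlap or residual condition is needed.

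\textbf{Bounded nonlinear case (main obstacle).} Here we use the standard SNIS bound. With $\bar g=\mathbb E_q g$, one has
\[
\mathbb E\|\hat\mu-\mu\|\lesssim \bar g^{-1}\,\mathbb E_q\bigl[g^{1+\delta}\|Z_1-\mu\|^{1+\delta}\bigr]^{1/(1+\delta)}N^{-\delta/(1+\delta)},
\]
which is a von Bahr--Esseen/Marcinkiewicz--Zygmund bound for $(1+\delta)$-moments of i.i.d.\ sums, applied to the numerator and to the denominator $\sum_s\widetilde w^{(s)}$.

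We therefore need two estimates. The first is a uniform lower bound on $\bar g$ in terms of $B_y$: the ratio of the target normalizer to the proposal normalizer is bounded below because $p(y\mid z_1)\ge(2\pi)^{-m/2}|R|^{-1/2}e^{-B_y^2/2}$. The second is an upper bound on $\mathbb E_q[g^{1+\delta}]$. We have $g\le p(y\mid z_1)/p_{\mathrm{lin}}\le C\exp\bigl(\tfrac12\|y-h_{\mathrm{lin}}(z_1)\|^2_{R^{-1}}\bigr)$. Under $q$, the linearized residual is Gaussian in $z_1$ with precision direction bounded through $\kappa$, since $\Sigma^q\preceq\Sigma_{1\mid t}\preceq(1-\gamma^2)\Sigma^f$. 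The Gaussian integral of $\exp\bigl(\tfrac{\delta}{2}\|\cdot\|^2_{R^{-1}}\bigr)$ is then finite exactly when $\delta\kappa<1$, with a bound like $(1-\delta\kappa)^{-m/2}e^{cB_y^2}$.

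I expect this moment computation to be the hardest step. In particular, the residual must be handled relative to the proposal mean and not the prior mean, so that the bound holds uniformly in $t$, $j$ and $z_t$. Combining the two estimates with the Gr\"onwall bound gives $C_\delta N_{\mathrm{IS}}^{-\delta/(1+\delta)}$ and hence Eq.~\ref{eq:total-error-bound}.
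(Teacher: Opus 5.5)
Your skeleton is the paper's. It uses the split in Eq.~\ref{eq:error-split}, synchronous coupling through $Z_0$, and the recursion $e_{n+1}\le(1+L_v\Delta t)e_n+\Delta t\,a_n+C_E(\Delta t)^2$. It adds a von Bahr--Esseen $(1+\delta)$-moment bound under $\delta\kappa<1$ with $B_y$-based lower bounds, and plain Monte Carlo in the affine case.

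The genuine difference is how you handle the state dependence of the velocity error. The paper only proves the pointwise bound $C_\delta N_{\mathrm{IS}}^{-\delta/(1+\delta)}(1+\|z_t\|)$, because it bounds $\|u_s-v\|\le\|u_s\|+\|v\|$ and both terms grow linearly. It therefore needs a separate discrete-Gr\"onwall moment bound on the Euler iterates, Eq.~\ref{eq:numerical-flow-moment}, before evaluating at $z_t=\widehat Z_n$. Your state-uniform bound would remove that lemma and give state-free constants.

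The state-uniform bound is attainable, but not for the reasons you give. By Eq.~\ref{eq:rescaled-bridge-mean-growth}, $(\mu_{1\mid t,j}(z_t)-z_t)/(1-t)$ grows linearly in $\|z_t\|$. In the affine case, where there is no $B_y$, so does $G_{t,j}[y-A\mu_{1\mid t,j}-b]/(1-t)$. Uniformity comes only from centering: $u_s-v=(Z_1^{(s)}-\mu^a_{1\mid t})/(1-t)$, so only three kinds of terms enter.
\begin{itemize}
\item Within-component fluctuations, with covariance $\preceq\Sigma_{1\mid t}$.
\item The gain shift, which is $O((1-t)^2)$ uniformly in $z_t$ when $B_y,\kappa<\infty$.
\item Differences of component means, $\mu_{1\mid t,j}-\mu_{1\mid t,\ell}=(1-t)^2(\mu_j-\mu_\ell)/D_t$ with $D_t=(1-t)^2+(1-\gamma^2)t^2$. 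In the affine case the common gain gives $\mu^q_{1\mid t,j}-\mu^q_{1\mid t,\ell}=(I-G_tA)(\mu_{1\mid t,j}-\mu_{1\mid t,\ell})$.
\end{itemize}
All of these are state-free after division by $1-t$. You need to make this cancellation explicit; otherwise you are back to needing the iterate-moment bound.

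There are two smaller points.

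First, a lower bound on $\bar g$ alone still leaves you to control the event that $N^{-1}\sum_s g_s$ is small, with an unbounded test function in the numerator. The inequality you already use, together with $p_{\mathrm{lin}}\le C_R$, gives the pointwise bound $g\ge e^{-B_y^2/2}$ (Eq.~\ref{eq:importance-weight-lower-bound}). That makes the denominator deterministic, as in the paper.

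Second, the exponent $\delta/2$ is right only after one power of $g$ has been absorbed into the measure. The paper does this by integrating $p^{1+\delta}p_{\mathrm{lin}}^{-\delta}$ against the bridge $\phi_j$ and dividing by $\mathcal Z_q\ge Z_\star$ (Eq.~\ref{eq:proposal-normalizer-uniform-lower-bound}). Directly under $q(\cdot\mid j)$, the integrand is $g^{1+\delta}\le\exp\bigl(\tfrac{1+\delta}{2}\|y-h_{\mathrm{lin}}(z_1)\|_{R^{-1}}^2\bigr)$. Pairing that with the crude bound $\Sigma^q\preceq\Sigma_{1\mid t}$ would demand $(1+\delta)\kappa<1$. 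Using the exact whitened residual covariance $I-(I+K)^{-1}$ under $q$, with $K=R^{-1/2}H\Sigma_{1\mid t}H^\top R^{-1/2}\preceq\kappa I$, recovers $\delta\kappa<1$ with factor $\det(I+K)^{1/2}\det(I-\delta K)^{-1/2}$.
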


\noindent\textbf{Corollary.}
In the bounded nonlinear regime, if $\kappa<1$, taking $\delta=1$ in
Theorem~\ref{thm:total-error} yields the standard $N_{\mathrm{IS}}^{-1/2}$
finite-sampling rate.

The first term is finite endpoint-sampling error, the second is first-order
flow-discretization error, and $\varepsilon_{\mathrm{sur}}(y)$ is the
forecast-surrogate discrepancy defined in Eq.~\ref{eq:surrogate-error}.
Thus the full-evidence transport converges to $\widetilde p^a$ as
$N_{\mathrm{IS}},T\to\infty$ under either observation regime above. For more
general unbounded nonlinear observations, Appendix~\ref{app:is-control} gives
the accumulated-SNIS alternative. Convergence to the underlying posterior
additionally requires $\varepsilon_{\mathrm{sur}}(y)\to0$. The result remains
one-step and law-level. The diagonal component-evidence approximation used in
the high-dimensional experiments is analyzed separately in
Appendix~\ref{app:diagonal-evidence-error}.

\begingroup
\setlength{\textfloatsep}{3pt plus 1pt minus 1pt}
\setlength{\floatsep}{3pt plus 1pt minus 1pt}
\setlength{\intextsep}{3pt plus 1pt minus 1pt}
\setlength{\abovecaptionskip}{2pt}
\setlength{\belowcaptionskip}{1pt}
\setlength{\parskip}{0pt}
\setlength{\abovedisplayskip}{4pt plus 1pt minus 1pt}
\setlength{\belowdisplayskip}{4pt plus 1pt minus 1pt}

\section{Experiments}

We evaluate FREESIA along three increasingly difficult dimensions: posterior
multimodality, sparse-observation information transfer, and high-dimensional
nonlinear assimilation. Double-Well provides a grid-resolved posterior,
Lorenz--96 separates many-to-one conditioning from sparse transfer at 1000
dimensions, and Kolmogorov flow combines both challenges at 32768 dimensions.
Metrics average independent trajectories and analysis times.
Appendix~\ref{app:highdim-scaling} summarizes computational complexity and
empirical analysis-time scaling for training-free generative filters on fully
observed Lorenz--96 up to $d=10^6$, including a diagonal specialization of
FREESIA for extremely high-dimensional settings.
Probabilistic accuracy is evaluated using the continuous ranked probability
score (CRPS) \citep{brocker2012crps}; the Double-Well posterior comparison
additionally reports the Jensen--Shannon divergence \citep{lin1991js}.
IEnSF is not included in the numerical comparison because, although we
independently reproduced the method, we could not verify that the resulting
performance was sufficiently representative of the published method for a fair
comparison.

\subsection{Double-Well: Resolving a Multimodal Filtering Posterior}

This experiment isolates posterior multimodality in a setting where the full
filtering posterior is numerically available, allowing direct evaluation of
posterior shape rather than only pointwise state error.

\paragraph{Setup.}
Following \citet{ding2026ssls}, we use the double-well dynamics and many-to-one
observation \(Y_k=|X_k-0.4|+\epsilon_k\). A 10,000-point grid Bayes filter
provides the posterior reference, and metrics average \(N_{\mathrm{run}}=10\)
runs; complete dynamical and numerical settings are given in Appendix~B.1.

\paragraph{Results.}
FREESIA is closest to the grid Bayes posterior across all reported metrics and
tracks both posterior modes and their relative masses
(Table~\ref{tab:double-well}; Appendix Fig.~\ref{fig:double_well_posterior}). EnKF
and EnKF-MDA Gaussianize the analysis, while EnFF-F2P can concentrate on only
part of the posterior support, so the gain is distributional rather than only a
reduction in posterior-mean error.

\begin{table}[!htbp]
\centering
\small
\renewcommand{\arraystretch}{0.82}
\setlength{\aboverulesep}{0.20ex}
\setlength{\belowrulesep}{0.20ex}
\setlength{\tabcolsep}{4.5pt}
\caption{Double-well posterior approximation errors averaged over experiments and analysis times. Lower is better; best and second-best results are bold and underlined.}
\label{tab:double-well}
\begin{tabular}{lccccc}
\toprule
Method & JS $\downarrow$ & $W_1$ $\downarrow$ & Mode err. $\downarrow$ & RMSE $\downarrow$ & CRPS $\downarrow$ \\
\midrule
\textbf{\method{}} (ours) & \textbf{0.031} & \textbf{0.059} & \textbf{0.035} & \textbf{0.144} & \textbf{0.099} \\
EnKF-MDA & 0.185 & 0.320 & 0.259 & 0.433 & 0.374 \\
EnKF & \underline{0.184} & 0.233 & 0.158 & 0.325 & 0.257 \\
EnSF & 0.315 & 0.666 & 0.359 & 0.699 & 0.379 \\
EnFF-OT & 0.550 & 0.202 & 0.130 & 0.263 & 0.254 \\
EnFF-F2P & 0.358 & \underline{0.152} & \underline{0.085} & \underline{0.232} & \underline{0.170} \\
\bottomrule
\end{tabular}
\end{table}

\paragraph{Sensitivity.}
Appendix~B.1 reports sensitivity to $\gamma$, $T$, and $N_{\mathrm{IS}}$:
component separation is important in this multimodal regime, while roughly 20
flow steps capture most of the improvement.

\FloatBarrier
\subsection{Lorenz--96: Multimodality and Sparse Information Propagation}

Lorenz--96 separates the two main mechanisms in a 1000-dimensional chaotic
system: dense many-to-one observations test nonlinear posterior ambiguity,
while sparse monotone observations isolate covariance-mediated information
transfer.

\paragraph{Setup.}
We use \(d=1000\), 20 particles, and 101 analyses under three regimes: dense
\(|\arctan(x_i)|\), stride-2 \(\arctan(x_{2i})\), and stride-4
\(\arctan(x_{4i})\). Results average \(N_{\mathrm{run}}=10\) runs; time
stepping, noise levels, baselines, localization, and method-specific settings
are given in Appendix~B.2.
The stride-4 case uses $\sigma=0.01$ as a high-SNR sparse-observation stress
test; Appendix~B.2 also reports the $\sigma=0.05$ setting.

\paragraph{Dense many-to-one observations.}
With every coordinate observed through $|\arctan|$, FREESIA achieves RMSE
$0.490$ and CRPS $0.174$, substantially below all competitors in
Table~\ref{tab:l96-results},
extending the multimodal-conditioning advantage to a 1000-dimensional chaotic
system.

\begin{table}[!htbp]
\centering
\small
\renewcommand{\arraystretch}{1.00}
\setlength{\aboverulesep}{0.30ex}
\setlength{\belowrulesep}{0.30ex}
\setlength{\tabcolsep}{1.7pt}
\caption{Lorenz--96 filtering under three nonlinear observation regimes. RMSE and CRPS are averaged across repeated experiments and all 101 assimilation cycles. Lower is better; best and second-best values are bold and underlined.}
\label{tab:l96-results}
\begin{tabular}{lcccccccccc}
\toprule
& \multicolumn{2}{c}{Dense $|\arctan|$} & \multicolumn{4}{c}{Stride-2 $\arctan$} & \multicolumn{4}{c}{Stride-4 $\arctan$} \\
\cmidrule(lr){2-3}\cmidrule(lr){4-7}\cmidrule(lr){8-11}
Method & RMSE $\downarrow$ & CRPS $\downarrow$ & RMSE $\downarrow$ & Obs. $\downarrow$ & Unobs. $\downarrow$ & CRPS $\downarrow$ & RMSE $\downarrow$ & Obs. $\downarrow$ & Unobs. $\downarrow$ & CRPS $\downarrow$ \\
\midrule
\textbf{\method{}} (ours) & \textbf{0.490} & \textbf{0.174} & \textbf{0.615} & \textbf{0.417} & \textbf{0.757} & \underline{0.241} & \textbf{1.575} & \underline{0.373} & \textbf{1.805} & \textbf{0.584} \\
EnKF-MDA & \underline{1.384} & \underline{0.586} & \underline{0.987} & \underline{0.673} & \underline{1.211} & \textbf{0.232} & \underline{2.056} & \textbf{0.366} & \underline{2.358} & \underline{0.955} \\
EnKF & 3.053 & 1.712 & 1.380 & 1.120 & 1.573 & 0.422 & 2.684 & 1.689 & 2.928 & 1.356 \\
LETKF & 6.161 & 3.921 & 1.533 & 1.243 & 1.753 & 0.494 & 3.185 & 1.853 & 3.513 & 1.648 \\
EnSF & 3.901 & 2.340 & 2.027 & 1.531 & 2.422 & 0.976 & 2.472 & 1.143 & 2.777 & 1.242 \\
EnFF-OT & 4.341 & 2.959 & 2.817 & 2.303 & 3.247 & 1.825 & 3.749 & 2.637 & 4.051 & 2.636 \\
EnFF-F2P & 1.933 & 0.721 & 1.486 & 1.045 & 1.821 & 0.609 & 3.960 & 4.412 & 3.796 & 2.243 \\
\bottomrule
\end{tabular}
\end{table}

\paragraph{Sparse monotone observations.}
FREESIA has the lowest total RMSE in both sparse regimes. At stride 4, EnKF-MDA
is marginally better on observed coordinates ($0.366$ versus $0.373$), but
FREESIA is substantially better on the much larger unobserved subset ($1.805$
versus $2.358$). Because both sparse experiments use $\gamma=0$, this gain
isolates covariance-aware information propagation rather than mixture
separation. Stride-4 ablations raise the time-averaged RMSE from $1.575$ to
$2.352$ with diagonal covariance and $2.969$ without likelihood-ratio correction
(Figure~\ref{fig:l96-stride4-ablation}); Appendix~B.2 gives details and the
$\sigma=0.05$ robustness result.

\FloatBarrier
\subsection{Kolmogorov Flow: Sparse and Non-Injective PDE Assimilation}

This benchmark combines the two challenges studied separately in Lorenz--96 at
PDE scale: extreme observation sparsity and non-injective nonlinear
measurements.

\paragraph{Setup.}
We use a periodically forced incompressible Navier--Stokes benchmark on a
periodic \(128\times128\) grid, giving \(d=32768\) from the two velocity
components, with 20 particles. We observe either 1\% of velocity entries through
\(\arctan(10x)\) or 2\% through \(|\arctan(10x)|\), averaging
\(N_{\mathrm{run}}=10\) trajectories; complete settings are given in
Appendix~B.3.
The block-structured forecast-surrogate covariance is detailed in
Appendix~\ref{app:covariance-implementation}, and the reported comparison uses
\(N_{\mathrm{IS}}=5\).

\begin{table}[!htbp]
\centering
\small
\renewcommand{\arraystretch}{1.00}
\setlength{\aboverulesep}{0.30ex}
\setlength{\belowrulesep}{0.30ex}
\setlength{\tabcolsep}{4pt}
\caption{Kolmogorov-flow filtering under sparse nonlinear observations. Metrics average ten trajectories and all 51 analyses; lower is better, with best and second-best values bold and underlined.}
\label{tab:2dkf-results}
\begin{tabular}{lcccccccc}
\toprule
& \multicolumn{4}{c}{1\% random $\arctan$} & \multicolumn{4}{c}{2\% random $|\arctan|$} \\
\cmidrule(lr){2-5}\cmidrule(lr){6-9}
Method & RMSE $\downarrow$ & Obs. $\downarrow$ & Unobs. $\downarrow$ & CRPS $\downarrow$ & RMSE $\downarrow$ & Obs. $\downarrow$ & Unobs. $\downarrow$ & CRPS $\downarrow$ \\
\midrule
\textbf{\method{}} (ours) & \textbf{0.095} & \textbf{0.076} & \textbf{0.095} & \textbf{0.052} & \textbf{0.422} & \textbf{0.410} & \textbf{0.422} & \textbf{0.209} \\
EnKF-MDA & \underline{0.148} & \underline{0.115} & \underline{0.148} & \underline{0.081} & 1.121 & 1.115 & 1.121 & 0.797 \\
EnKF & 0.377 & 0.372 & 0.377 & 0.200 & 0.987 & 0.984 & 0.987 & 0.681 \\
LETKF & 0.218 & 0.177 & 0.218 & 0.117 & \underline{0.970} & 1.088 & \underline{0.967} & 0.572 \\
EnSF & 0.942 & 0.568 & 0.945 & 0.547 & 0.974 & \underline{0.967} & 0.975 & \underline{0.569} \\
EnFF-OT & 1.288 & 1.148 & 1.290 & 1.024 & 1.342 & 1.342 & 1.342 & 1.069 \\
EnFF-F2P & 0.974 & 0.776 & 0.976 & 0.567 & 0.980 & 0.977 & 0.980 & 0.570 \\
\bottomrule
\end{tabular}
\end{table}

\paragraph{Sparse monotone observations.}
With only $1\%$ of velocity entries observed through the monotone
$\arctan(10x)$ operator, FREESIA is the only training-free generative method in
the comparison that reaches a strongly reduced-error regime. Its time-averaged
RMSE is $0.095$ with an unobserved-state RMSE of $0.095$, whereas EnSF, EnFF-OT,
and EnFF-F2P remain near order-one error with RMSEs $0.942$, $1.288$, and
$0.974$, respectively. Classical covariance-based filters can also reduce the
error, but FREESIA remains best overall, improving over the strongest baseline
EnKF-MDA from RMSE $0.148$ to $0.095$ and achieving the lowest CRPS ($0.052$).
This result extends covariance-mediated sparse information transfer to a
$32768$-dimensional nonlinear PDE while retaining non-Gaussian posterior
transport.

\paragraph{Sparse non-injective observations.}
In the $2\%$ $|\arctan|$ regime, FREESIA achieves RMSE $0.422$ and CRPS
$0.209$, compared with the lowest evaluated baseline RMSE of $0.970$ and the
lowest evaluated baseline CRPS of $0.569$. Figure~\ref{fig:kolmogorov_reconstruction} further shows recovery of
the dominant spatial structures and physically relevant velocity directions
under sparse sign-ambiguous observations. The time-resolved comparison is
reported in Appendix~B.3.

\begin{figure}[H]
\centering
\includegraphics[width=0.425\linewidth]{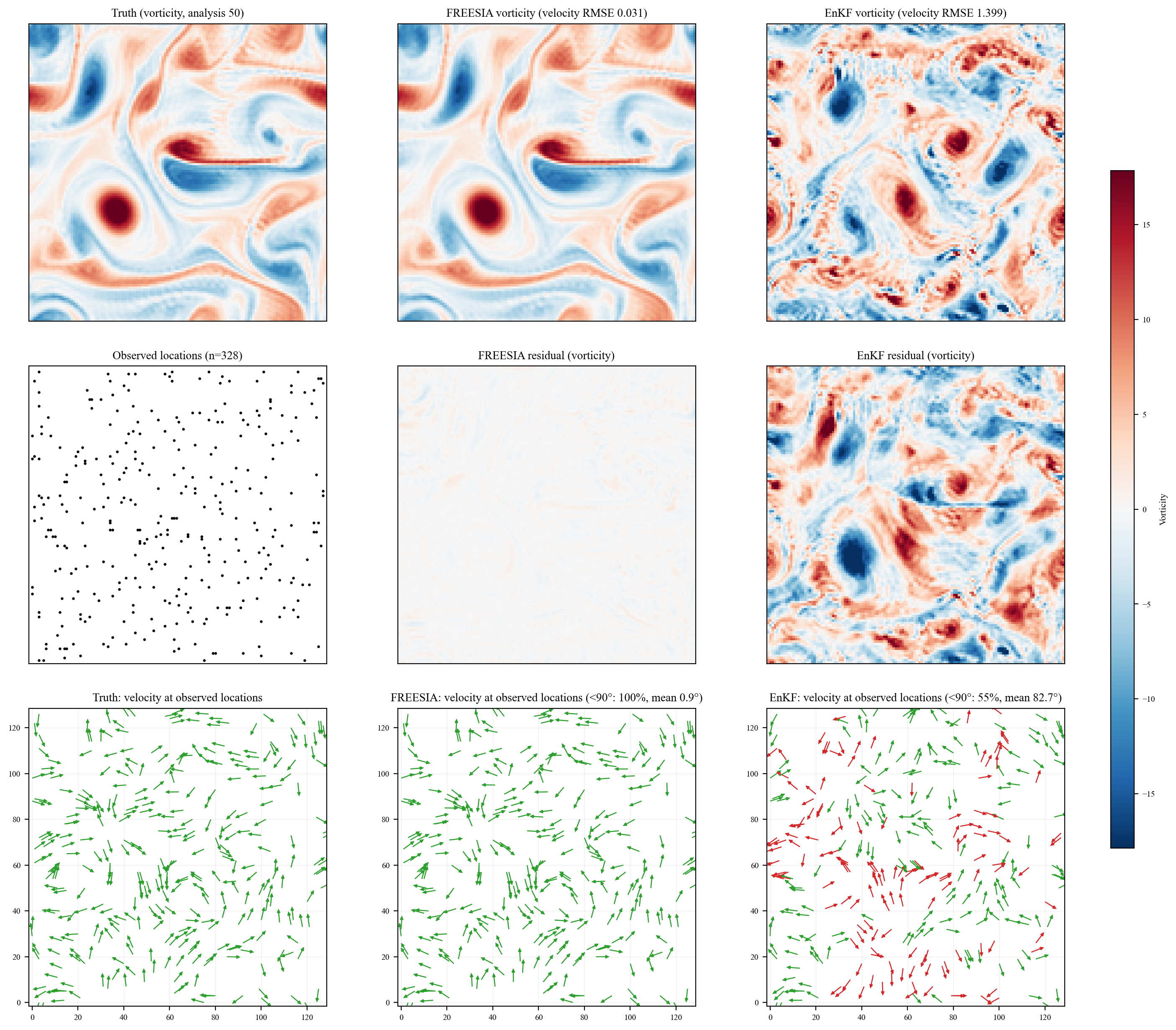}
\caption{Final-time reconstruction for the 2\% absolute-arctangent Kolmogorov-flow experiment. Top: truth, FREESIA, and EnKF vorticity, with analysis-mean velocity RMSE $0.031$ vs.\ $1.399$. Middle: 328 observed spatial locations, corresponding to 656 observed velocity components (2\% of the state), and vorticity residuals. Bottom: observed-point velocity directions; FREESIA/EnKF achieve $100\%/55\%$ within $90^\circ$, with mean angular errors $0.9^\circ/82.7^\circ$.}
\label{fig:kolmogorov_reconstruction}
\end{figure}

\endgroup

\section{Conclusion}
\begingroup
\setlength{\parskip}{0pt}
We propose FREESIA, a training-free, covariance-aware posterior transport
framework for high-dimensional nonlinear data assimilation. By embedding
forecast cross-covariances into flow-based transport, FREESIA propagates sparse
observational information to unobserved states while retaining non-Gaussian
posterior structure. Observation-adaptive proposals and posterior correction
preserve the nonlinear surrogate target. We establish surrogate-relative
asymptotic exactness and derive a one-step Wasserstein error bound under the
stated regularity conditions, separating finite-sampling,
flow-discretization, and forecast-surrogate errors.

Experiments on Double-Well, Lorenz--96, and Kolmogorov flow show that these
mechanisms translate into more accurate posterior approximation, ensemble
sampling, and state estimation under multimodal, sparse, and non-injective
observations. FREESIA more accurately recovers posterior modes and their
probability mass, improves inference of unobserved states through
covariance-mediated information transfer, and preserves spatial and directional
structure in challenging high-dimensional regimes. The ablations further
isolate the contributions of cross-covariance transfer and nonlinear likelihood
correction. At PDE scale, FREESIA reduces RMSE by 56\% relative to the best
evaluated baseline in the sparse non-injective case. Future work targets
efficient sampling, scalable surrogates, and repeated-cycle theory under
stability conditions.
\endgroup

\clearpage
\subsection*{AI use statement}

Generative AI tools were used to assist with writing and language polishing,
drafting and revising parts of the manuscript, literature retrieval and
discovery, research ideation and methodological discussion, implementation and
debugging of research code, experimental design and analysis, and the
development and checking of mathematical claims and proof arguments. AI tools
were also used to assist with reference formatting and manuscript consistency
checks. All AI-assisted mathematical derivations, proof steps, implementation
code, experimental procedures, retrieved references, numerical results, and
manuscript text were independently reviewed, tested, or verified by the
authors. AI-generated or AI-edited code was inspected and validated against
the intended algorithms and experimental configurations before use, and
reported experimental results were generated from the authors' computational
experiments rather than accepted directly from AI-generated output. The
authors take full responsibility for the final methodology, implementation,
experiments, mathematical claims, and content of the paper.

\bibliography{iclr2026_conference}
\bibliographystyle{plainnat}

\appendix
\section{THEORETICAL DERIVATIONS AND PROOFS}
\label{app:proofs}

Throughout this appendix, we fix one assimilation cycle and suppress its index.

\subsection{Proof of Theorem 1}
\label{app:proof-posterior-flow}

For fixed \(y\), define the posterior-reweighted endpoint coupling
\begin{equation}
\mathbb P_y^a
:=
\mathcal L(Z_0,Z_1\mid Y=y).
\end{equation}
Since \(Z_t=(1-t)Z_0+tZ_1\), the surrogate posterior path satisfies
\begin{equation}
\widetilde p_t^a
=
(Z_t)_\#\mathbb P_y^a.
\end{equation}
For any \(\varphi\in C_c^1(\mathbb R^d)\), under the required first-moment condition,
\begin{equation}
\frac{d}{dt}\mathbb E_{\mathbb P_y^a}[\varphi(Z_t)]
=
\mathbb E_{\mathbb P_y^a}
\left[\nabla\varphi(Z_t)^\top(Z_1-Z_0)\right].
\end{equation}
Conditioning on \(Z_t\) shows that its marginal law satisfies the continuity equation in the weak sense with
\begin{equation}
v_t^a(z_t;y)
=
\mathbb E[Z_1-Z_0\mid Z_t=z_t,Y=y].
\end{equation}
Since \(Z_t=(1-t)Z_0+tZ_1\), for \(t<1\),
\begin{equation}
Z_1-Z_0=\frac{Z_1-Z_t}{1-t},
\end{equation}
and therefore
\begin{equation}
v_t^a(z_t;y)
=
\frac{\mathbb E[Z_1\mid Z_t=z_t,Y=y]-z_t}{1-t},
\end{equation}
which proves Theorem~\ref{thm:posterior-flow}.

\subsection{Analytic GMM bridge derivation}
\label{app:general-bridge}

For FREESIA, \(Z_0\sim\mathcal N(0,\Sigma^f)\), \(Z_1\mid J=j\sim\mathcal N(\mu_j,(1-\gamma^2)\Sigma^f)\), and \(Z_0\perp(Z_1,J)\). Therefore
\begin{equation}
\begin{aligned}
\operatorname{Cov}(Z_t\mid J=j)
&=[(1-t)^2+(1-\gamma^2)t^2]\Sigma^f,\\
\operatorname{Cov}(Z_1,Z_t\mid J=j)
&=t(1-\gamma^2)\Sigma^f.
\end{aligned}
\end{equation}
Applying the Gaussian conditioning formula gives
\begin{equation}
\begin{aligned}
\mu_{1\mid t,j}(z_t)
&=
\mu_j+
\frac{t(1-\gamma^2)}{(1-t)^2+(1-\gamma^2)t^2}
(z_t-t\mu_j),\\
\Sigma_{1\mid t}
&=
\frac{(1-\gamma^2)(1-t)^2}{(1-t)^2+(1-\gamma^2)t^2}
\Sigma^f.
\end{aligned}
\end{equation}
Bayes' rule for \(J\) yields the normalized responsibility
\begin{equation}
r_{J\mid Z_t}(j\mid z_t)
=
\frac{
\pi_j\mathcal N\!\left(
z_t;t\mu_j,[(1-t)^2+(1-\gamma^2)t^2]\Sigma^f
\right)
}{
\sum_\ell\pi_\ell\mathcal N\!\left(
z_t;t\mu_\ell,[(1-t)^2+(1-\gamma^2)t^2]\Sigma^f
\right)
}.
\end{equation}
These are the bridge quantities used in Section~\ref{sec:gmm-bridge}.

\subsection{Proof of Theorem 2 and affine closure}
\label{app:proof-nonlinear-correction}

For fixed \(z_t,y\), write \(r_j=r_{J\mid Z_t}(j\mid z_t)\) and
\(\phi_j(z_1)=\phi_{1\mid t,j}(z_1\mid z_t)\). The nonlinear augmented target satisfies
\begin{equation}
p(j,z_1\mid z_t,y)
\propto
r_j\phi_j(z_1)p(y\mid z_1),
\end{equation}
whereas the proposal satisfies
\begin{equation}
q(j,z_1\mid z_t,y)
\propto
r_j\phi_j(z_1)p_{\mathrm{lin}}(y\mid z_1,j,z_t).
\end{equation}
Consequently,
\begin{equation}
\frac{p(j,z_1\mid z_t,y)}{q(j,z_1\mid z_t,y)}
\propto
\frac{p(y\mid z_1)}{p_{\mathrm{lin}}(y\mid z_1,j,z_t)}.
\end{equation}
Taking the \(z_1\)-expectation under the normalized target gives the corrected endpoint-regression identity in Theorem~\ref{thm:nonlinear-regression}.

If \(h\) is affine, then \(p_{\mathrm{lin}}=p\), so \(q=p\) and the correction weights in Corollary~\ref{cor:self-normalized-regression} are constant.

\subsection{Stochastic Kalman realization}
\label{app:stochastic-kalman}

For the component-wise sampling rule in Eq.~\ref{eq:stochastic-proposal} of
Appendix~\ref{app:covariance-implementation},
the mean and covariance are
\begin{equation}
\begin{aligned}
\mathbb E[Z_1^{(s)}]
&=\mu_{1\mid t,j}+G_{t,j}[y-h(\mu_{1\mid t,j})]
=\mu^q_{1\mid t,j},\\
\operatorname{Cov}(Z_1^{(s)})
&=(I-G_{t,j}H_{t,j})\Sigma_{1\mid t}(I-G_{t,j}H_{t,j})^\top
+G_{t,j}RG_{t,j}^\top\\
&=\Sigma_{1\mid t}-G_{t,j}H_{t,j}\Sigma_{1\mid t}
=\Sigma^q_{1\mid t,j}.
\end{aligned}
\end{equation}
Thus \(Z_1^{(s)}\sim q_{Z_1\mid J,Z_t,Y}(\cdot\mid j,z_t,y)\).

\subsection{SNIS consistency and finite-sample velocity error}
\label{app:is-control}

Fix $t<1$, a flow state $z_t$, and the observation $y$. Let $P$ and $Q$
denote the exact augmented conditional target and the locally linearized
proposal from Eq.~\ref{eq:augmented-target-proposal}. As in Section~3, write
\[
g(j,z_1)
=
\frac{p(y\mid z_1)}
{p_{\mathrm{lin}}(y\mid z_1,j,z_t)}.
\]
Then
\[
\frac{dP}{dQ}(j,z_1)
=
\frac{g(j,z_1)}{\mathbb E_Q[g(J,Z_1)]}.
\]

For the finite-sample analysis, we estimate the velocity directly. Define the
local function
\begin{equation}
u_t(z_1;z_t)
:=
\frac{z_1-z_t}{1-t},
\qquad t<1.
\label{eq:rescaled-endpoint-increment}
\end{equation}
Along the straight path $Z_t=(1-t)Z_0+tZ_1$,
\[
u_t(Z_1;Z_t)=Z_1-Z_0.
\]
Consequently, Theorem~\ref{thm:posterior-flow} gives
\begin{equation}
v_t^a(z_t;y)
=
\frac{
\mathbb E_Q[g(J,Z_1)u_t(Z_1;z_t)]
}{
\mathbb E_Q[g(J,Z_1)]
}.
\label{eq:velocity-snis-identity}
\end{equation}
Moreover, the estimator in Eq.~\ref{eq:finite-endpoint-regression} is exactly
\begin{equation}
\widehat v^a_{t,N_{\mathrm{IS}}}(z_t;y)
=
\frac{
\sum_{s=1}^{N_{\mathrm{IS}}}
g(J^{(s)},Z_1^{(s)})u_t(Z_1^{(s)};z_t)
}{
\sum_{s=1}^{N_{\mathrm{IS}}}
g(J^{(s)},Z_1^{(s)})
}.
\label{eq:velocity-snis-estimator}
\end{equation}
Thus the practical \method{} velocity is a self-normalized
importance-sampling estimator of the conditional velocity.

\begin{proposition}[Automatic pointwise consistency of the corrected velocity estimator]
\label{prop:pointwise-snis}
For fixed $t<1$, $z_t$, and $y$, assume $R\succ0$, $0\le\gamma<1$,
$\Sigma^f\succ0$, and finite local linearization matrices $H_{t,j}$. Then
\[
0<\mathbb E_Q[g(J,Z_1)]<\infty,
\qquad
\mathbb E_Q\!\left[g(J,Z_1)\|u_t(Z_1;z_t)\|\right]<\infty.
\]
Consequently,
\[
\widehat v^a_{t,N_{\mathrm{IS}}}(z_t;y)
\longrightarrow v_t^a(z_t;y)
\quad\text{almost surely as }N_{\mathrm{IS}}\to\infty,
\]
or, equivalently,
$\widehat\mu^a_{1\mid t,N_{\mathrm{IS}}}(z_t;y)
\to\mu^a_{1\mid t}(z_t;y)$ almost surely.
\end{proposition}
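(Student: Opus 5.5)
The plan is to avoid bounding the weight $g$ directly, since $g=p/p_{\mathrm{lin}}$ can be unbounded. Instead, I would exploit the fact that in the product $q\,g$ the linearized likelihood cancels exactly. After that, both integrability claims reduce to moments of Gaussians weighted by a bounded likelihood, and the almost-sure convergence follows from the strong law of large numbers applied to a ratio.

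\textbf{Step 1 (normalizing constant of the proposal).} By Eq.~\ref{eq:augmented-target-proposal}, $q(j,z_1)=r_j\phi_j(z_1)p_{\mathrm{lin}}(y\mid z_1,j,z_t)/Z_q$, where
\[
Z_q=\sum_j r_j\int\phi_j(z_1)p_{\mathrm{lin}}(y\mid z_1,j,z_t)\,dz_1=\sum_j r_j\,\mathcal N\!\left(y;h(\mu_{1\mid t,j}),S_{t,j}\right).
\]
This Gaussian marginalization is the same one that produced Eq.~\ref{eq:proposal-components}. Since $t<1$ and $\gamma<1$, we have $\Sigma_{1\mid t}\succ0$. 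Since $R\succ0$ and $H_{t,j}$ is finite, $S_{t,j}\succ0$ is a finite matrix, so each Gaussian density above is strictly positive and finite. Each $r_j$ is strictly positive because it is a ratio of nondegenerate Gaussian densities. Hence $0<Z_q<\infty$. The argument uses only finiteness of $H_{t,j}$, so it covers the nondifferentiable-selection case as well.

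\textbf{Step 2 (cancellation and moment bounds).} Multiplying by $g$ gives the key identity $q(j,z_1)g(j,z_1)=r_j\phi_j(z_1)p(y\mid z_1)/Z_q$. Therefore
\[
\mathbb E_Q[g]=Z_q^{-1}\sum_j r_j\int\phi_j(z_1)\,p(y\mid z_1)\,dz_1 .
\]
The likelihood satisfies $0<p(y\mid z_1)\le C_R:=(2\pi)^{-m/2}\det(R)^{-1/2}$. Each $\phi_j$ is a proper density, so each integral lies in $(0,C_R]$, which yields $0<\mathbb E_Q[g]<\infty$. Similarly,
\[
\mathbb E_Q\!\left[g\|u_t\|\right]\le \frac{C_R}{Z_q(1-t)}\sum_j r_j\int\phi_j(z_1)\|z_1-z_t\|\,dz_1<\infty,
\]
because a nondegenerate Gaussian has a finite first absolute moment.

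\textbf{Step 3 (strong law and identification of the limit).} The pairs $(J^{(s)},Z_1^{(s)})$ are i.i.d.\ draws from $Q$. Applying the strong law of large numbers separately, the numerator average $N_{\mathrm{IS}}^{-1}\sum_s g\,u_t$ converges almost surely to $\mathbb E_Q[g\,u_t]$, and the denominator average converges almost surely to $\mathbb E_Q[g]>0$. The ratio map is continuous wherever the denominator is nonzero, so the estimator in Eq.~\ref{eq:velocity-snis-estimator} converges almost surely to the right-hand side of Eq.~\ref{eq:velocity-snis-identity}. By Theorem~\ref{thm:nonlinear-regression} together with Theorem~\ref{thm:posterior-flow}, that right-hand side equals $v_t^a(z_t;y)$.

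\textbf{Step 4 (endpoint-mean form).} The equivalent statement for $\widehat\mu^a_{1\mid t,N_{\mathrm{IS}}}$ follows from the relation $\widehat\mu=z_t+(1-t)\widehat v$, which holds for both the estimator and its limit. This is a fixed affine map, so almost-sure convergence transfers directly.

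The only delicate point is Step~2: one must resist bounding $g$ itself and instead use the cancellation of $p_{\mathrm{lin}}$ in $q\,g$. Once that is done, everything else is routine.
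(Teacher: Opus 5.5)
Your proposal is correct and follows essentially the same route as the paper: the cancellation of $p_{\mathrm{lin}}$ in $q\,g$, the bound $0<p(y\mid z_1)\le C_R$, finite Gaussian first moments for fixed $t<1$, and the strong law applied separately to numerator and denominator, followed by ratio convergence and the affine transfer to $\widehat\mu^a_{1\mid t,N_{\mathrm{IS}}}$. Your explicit check that $0<\mathcal Z_q<\infty$, using $S_{t,j}\succeq R\succ0$, $\Sigma_{1\mid t}\succ0$, and $r_j>0$, fills in a detail that the paper only asserts.
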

\begin{proof}
Writing $\mathcal Z_q(t,z_t)>0$ for the proposal normalizer, the cancellation
between the proposal and likelihood ratio gives
\[
\mathbb E_Q[g]
=
\frac{1}{\mathcal Z_q}
\sum_j r_j\int\phi_j(z_1)p(y\mid z_1)\,dz_1.
\]
The Gaussian observation likelihood satisfies
\[
0<p(y\mid z_1)\le C_R,
\qquad
C_R:=(2\pi)^{-m/2}|R|^{-1/2},
\]
so the first moment is finite and strictly positive. Similarly,
\[
\mathbb E_Q[g\|u_t\|]
=
\frac{1}{\mathcal Z_q}
\sum_jr_j\int\phi_j(z_1)p(y\mid z_1)
\|u_t(z_1;z_t)\|\,dz_1
\]
is bounded above by
\[
\frac{C_R}{\mathcal Z_q}
\sum_jr_j\int\phi_j(z_1)\|u_t(z_1;z_t)\|\,dz_1,
\]
which is finite for every fixed $t<1$ because $\phi_j$ is Gaussian. The strong
law applies separately to the numerator and denominator of
Eq.~\ref{eq:velocity-snis-estimator}; ratio convergence proves the velocity
claim, and Eq.~\ref{eq:finite-endpoint-regression} gives the endpoint-mean claim.
\end{proof}

\paragraph{Pointwise bridge regularity.}
Set, only in this appendix,
\[
c_\gamma:=1-\gamma^2>0,
\qquad
D_t:=(1-t)^2+c_\gamma t^2.
\]
From Eq.~\ref{eq:conditional-gmm-decomposition},
\[
\Sigma_{1\mid t}
=
\frac{c_\gamma(1-t)^2}{D_t}\Sigma^f.
\]
Since
\[
\min_{t\in[0,1]}D_t
=
\frac{c_\gamma}{1+c_\gamma},
\]
we have
\begin{equation}
\frac{\Sigma_{1\mid t}}{(1-t)^2}
=
\frac{c_\gamma}{D_t}\Sigma^f
\preceq
(1+c_\gamma)\Sigma^f
\preceq
2\Sigma^f.
\label{eq:rescaled-bridge-cov-bound}
\end{equation}
The bridge mean satisfies
\[
\mu_{1\mid t,j}(z_t)
=
\mu_j+
\frac{tc_\gamma}{D_t}(z_t-t\mu_j),
\]
and direct algebra gives
\begin{equation}
\frac{\mu_{1\mid t,j}(z_t)-z_t}{1-t}
=
\frac{
(1-t)\mu_j+[t(1+c_\gamma)-1]z_t
}{
D_t
}.
\label{eq:rescaled-bridge-mean}
\end{equation}
For every fixed $0\le\gamma<1$, there is a constant
$C_\gamma<\infty$ such that
\begin{equation}
\left\|
\frac{\mu_{1\mid t,j}(z_t)-z_t}{1-t}
\right\|
\le
C_\gamma(1+\|\mu_j\|+\|z_t\|),
\qquad
0\le t<1.
\label{eq:rescaled-bridge-mean-growth}
\end{equation}
Thus, for every fixed $z_t\in\mathbb R^d$ and fixed
$0\le\gamma<1$, the rescaled bridge mean remains finite for $t<1$ and
converges to $z_t$ as $t\uparrow1$, while the rescaled bridge covariance
remains uniformly bounded in flow time. This regularity is pointwise in
$z_t$, with at most linear state dependence. No uniformity is claimed as
$\gamma\uparrow1$.

\paragraph{Endpoint behavior of the proposal.}
Assume $R\succeq r_0I$ for some $r_0>0$. We consider two sufficient regularity
regimes for the local observation model. In the smooth regime, $h$ is $C^2$ on
the relevant component neighborhoods, with finite bounds on $\|Dh\|$ and
$\|D^2h\|$. In the Lipschitz regime, $h$ is Lipschitz on the proposal states
under consideration,
\[
\|h(x)-h(x')\|\le L_h\|x-x'\|,
\]
and the local linearization matrix is a measurable selection $H(x)$ satisfying
\[
\|H(x)\|\le L_H.
\]
At differentiable points, $H(x)=Dh(x)$; at nondifferentiable points,
the implementation may use any bounded local linearization selection.
State-dependent versions of $L_h$ and $L_H$, as well as polynomial-growth
derivative bounds in the smooth regime, are admissible when the resulting
constants have the flow-law moments required below. The smooth regime gives a
quadratic local-linearization remainder, whereas the Lipschitz regime gives a
linear remainder. Both are sufficient for the endpoint likelihood-ratio
mismatch to vanish along the contracting bridge. For fixed $z_t$, let
$L_H(z_t)$ denote the corresponding finite bound on $\|H\|$ over the relevant
component neighborhoods. In the smooth regime, let $L_{h,2}(z_t)$ denote the
corresponding finite bound on $\|D^2h\|$.

The component-wise proposal in Eq.~\ref{eq:proposal-components} has mean and
covariance
\[
\mu^q_{1\mid t,j}
=
\mu_{1\mid t,j}
+
G_{t,j}[y-h(\mu_{1\mid t,j})],
\qquad
\Sigma^q_{1\mid t,j}
=
\Sigma_{1\mid t}
-
G_{t,j}H_{t,j}\Sigma_{1\mid t}.
\]
Because
\[
S_{t,j}
=
H_{t,j}\Sigma_{1\mid t}H_{t,j}^\top+R
\succeq R,
\]
we have $\|S_{t,j}^{-1}\|\le r_0^{-1}$ and hence
\begin{align}
\|G_{t,j}\|
&=
\|\Sigma_{1\mid t}H_{t,j}^\top S_{t,j}^{-1}\| \\
&\le
\frac{c_\gamma(1-t)^2}{D_t}
\|\Sigma^f\|L_H(z_t)r_0^{-1}
\le
C_G(z_t)(1-t)^2,
\label{eq:kalman-gain-endpoint-rate}
\end{align}
where $C_G(z_t)$ is finite for each fixed current state and independent of
$t$. Moreover,
\[
0\preceq\Sigma^q_{1\mid t,j}\preceq\Sigma_{1\mid t},
\]
so Eq.~\ref{eq:rescaled-bridge-cov-bound} implies
\begin{equation}
0
\preceq
\frac{\Sigma^q_{1\mid t,j}}{(1-t)^2}
\preceq
(1+c_\gamma)\Sigma^f
\preceq
2\Sigma^f.
\label{eq:proposal-rescaled-cov-bound}
\end{equation}
Finally,
\begin{equation}
\frac{\mu^q_{1\mid t,j}-z_t}{1-t}
=
\frac{\mu_{1\mid t,j}-z_t}{1-t}
+
\frac{G_{t,j}}{1-t}[y-h(\mu_{1\mid t,j})].
\label{eq:proposal-rescaled-mean}
\end{equation}
By Eq.~\ref{eq:kalman-gain-endpoint-rate}, the second term is $O(1-t)$
for each fixed $z_t$ whenever the observation residual is finite. Hence, for
every fixed finite $p$ for which the conditional residual moments are finite,
\[
\sup_{t_0(z_t)\le t<1}
\mathbb E_Q\!\left[
\|u_t(Z_1;z_t)\|^p
\right]
<
\infty
\]
for some $t_0(z_t)<1$.

For
\[
m:=\mu_{1\mid t,j}(z_t),
\qquad
H:=H_{t,j},
\qquad
\Delta z:=Z_1-m,
\]
the proposal satisfies
\[
\mathbb E_Q[\Delta z\mid J=j]
=
G_{t,j}[y-h(m)]
=
O((1-t)^2),
\]
and
\[
\operatorname{Cov}_Q(\Delta z\mid J=j)
=
\Sigma^q_{1\mid t,j}
=
O((1-t)^2).
\]
Gaussian moment bounds therefore give
$\Delta z=O_{L^p}(1-t)$ for every fixed finite $p$ and fixed state $z_t$ for
which the corresponding moments are finite.

In the smooth regime, Taylor's theorem gives
\[
h(m+\Delta z)
=
h(m)+H\Delta z+\rho_h(\Delta z),
\qquad
\|\rho_h(\Delta z)\|
\le
\frac{L_{h,2}(z_t)}2\|\Delta z\|^2,
\]
and hence $\rho_h(\Delta z)=O_{L^p}((1-t)^2)$. In the Lipschitz regime, define
the same local remainder by
\[
\rho_h(\Delta z)
:=
h(m+\Delta z)-h(m)-H\Delta z.
\]
The Lipschitz and bounded-linearization assumptions give
\[
\|\rho_h(\Delta z)\|
\le
\|h(m+\Delta z)-h(m)\|+\|H\Delta z\|
\le
(L_h+L_H)\|\Delta z\|.
\]
Thus, $\rho_h(\Delta z)=O_{L^p}(1-t)$. The remainder vanishes in both
regimes, quadratically in the smooth case and linearly in the Lipschitz case.

Writing
\[
a:=y-h(m)-H\Delta z,
\]
the Gaussian likelihoods give
\begin{equation}
\log g(J,Z_1)
=
a^\top R^{-1}\rho_h(\Delta z)
-
\frac12\rho_h(\Delta z)^\top R^{-1}\rho_h(\Delta z).
\label{eq:log-weight-endpoint}
\end{equation}
Under either regularity regime, $a=O_{L^p}(1)$ for every required finite $p$.
The smooth regime therefore gives
$\log g(J,Z_1)=O_{L^p}((1-t)^2)$, whereas the Lipschitz regime gives
$\log g(J,Z_1)=O_{L^p}(1-t)$. Hence, for each such fixed $z_t$,
\begin{equation}
g(J,Z_1)\to1
\quad\text{in $Q$-probability as }t\uparrow1.
\label{eq:weight-endpoint-consistency}
\end{equation}
The local-linearization mismatch therefore vanishes along the contracting
bridge. The nonsmooth observation maps used in the experiments fall within
the Lipschitz regime: $x\mapsto|x-c|$ and $x\mapsto|\arctan x|$ are globally
$1$-Lipschitz, while $x\mapsto|\arctan(10x)|$ is globally $10$-Lipschitz.
The branch derivative away from each kink and the bounded selection used at
the kink satisfy the required bound on $H$.

\paragraph{Endpoint importance moments at a fixed state.}
For fixed $t$ and $z_t$, define the normalizing constant of the locally
linearized proposal by
\[
\mathcal Z_q(t,z_t)
:=
\sum_{j=1}^M
r_j
\int
\phi_j(z_1)
p_{\mathrm{lin}}(y\mid z_1,j,z_t)
\,dz_1.
\]
Gaussian marginalization gives
\[
\mathcal Z_q(t,z_t)
=
\sum_{j=1}^M
r_j
\mathcal N\!\left(
y;
h(\mu_{1\mid t,j}(z_t)),
S_{t,j}(z_t)
\right).
\]
For each fixed $z_t$ and fixed $0\le\gamma<1$, continuity and positivity of
the Gaussian evidence, together with $S_{t,j}\to R$, imply that there are
$t_0(z_t)<1$ and $c_q(z_t)>0$ such that
\begin{equation}
\inf_{t_0(z_t)\le t<1}
\mathcal Z_q(t,z_t)
\ge
c_q(z_t)
>
0.
\label{eq:proposal-normalizer-lower-bound}
\end{equation}

Fix finite $r\ge1$ and $s\ge0$. The likelihood-ratio definition yields
\begin{align}
&\mathbb E_Q\!\left[
g(J,Z_1)^r
\left(1+\|u_t(Z_1;z_t)\|^s\right)
\right]
\nonumber\\
&=
\frac{1}{\mathcal Z_q(t,z_t)}
\sum_{j=1}^M
r_j
\int
\phi_j(z_1)
p(y\mid z_1)^r
p_{\mathrm{lin}}(y\mid z_1,j,z_t)^{1-r}
\nonumber\\
&\hspace{42mm}\times
\left(
1+
\left\|
\frac{z_1-z_t}{1-t}
\right\|^s
\right)
\,dz_1.
\label{eq:endpoint-moment-integral}
\end{align}
For bounded nonlinear likelihoods, a representative sufficient condition for
the Gaussian tail in component $j$ to have a finite $r$th weight moment is
\begin{equation}
(r-1)\lambda_{\max}\!\left(
\Sigma_{1\mid t}^{1/2}H_{t,j}^{\top}R^{-1}H_{t,j}
\Sigma_{1\mid t}^{1/2}
\right)<1.
\label{eq:snis-spectral-condition}
\end{equation}
The inverse linearized likelihood contributes the positive quadratic form
$(r-1)H_{t,j}^{\top}R^{-1}H_{t,j}$, whereas the Gaussian bridge contributes
$-\Sigma_{1\mid t}^{-1}$, both with the same factor $1/2$. This condition is
sufficient rather than necessary. The same condition applies in the Lipschitz
regime because it depends only on the bounded local linearization matrix $H$,
the bridge covariance, and $R$, rather than on second derivatives of $h$.
Since $\Sigma_{1\mid t}=O((1-t)^2)$, it becomes easier to satisfy near the
endpoint for every fixed finite $r$; small $R$, broad forecast uncertainty, or
early flow times can make complete-flow control more restrictive.
The exact Gaussian likelihood is uniformly bounded above. The inverse power
$p_{\mathrm{lin}}^{\,1-r}$ contributes an exponent that grows at most
quadratically in $z_1-\mu_{1\mid t,j}$. With the change of variables
\[
z_1
=
\mu_{1\mid t,j}
+
(1-t)\zeta,
\]
the bridge density becomes a Gaussian density in $\zeta$ with covariance
\[
\frac{\Sigma_{1\mid t}}{(1-t)^2}
=
\frac{c_\gamma}{D_t}\Sigma^f,
\]
which is uniformly bounded for fixed $\gamma<1$. The positive quadratic
coefficient contributed by $p_{\mathrm{lin}}^{\,1-r}$ is multiplied by
$(1-t)^2$. For fixed $z_t$, the relevant derivative and residual constants
are finite, so this perturbation is dominated by the negative Gaussian
quadratic form for all $t$ sufficiently close to $1$. The rescaled increment
becomes
\[
u_t(z_1;z_t)
=
\frac{\mu_{1\mid t,j}-z_t}{1-t}
+
\zeta,
\]
and its first term is uniformly bounded by
Eq.~\ref{eq:rescaled-bridge-mean-growth}.
Equation~\ref{eq:proposal-normalizer-lower-bound} then gives, after increasing
$t_0(z_t)$ if necessary,
\begin{equation}
\boxed{
\sup_{t_0(z_t)\le t<1}
\mathbb E_Q\!\left[
g(J,Z_1)^r
\left(1+\|u_t(Z_1;z_t)\|^s\right)
\right]
<
\infty.
}
\label{eq:endpoint-mixed-moment-bound}
\end{equation}
Thus, for each fixed state, the contracting bridge controls every fixed finite
importance-weight and rescaled-endpoint moment required below. Together with
Eq.~\ref{eq:weight-endpoint-consistency}, the same bound at a slightly higher
moment order gives uniform integrability in flow time and hence
$g(J,Z_1)\to1$ in $L^r(Q)$ for each fixed finite admissible $r$. In
particular, after increasing $t_0(z_t)$ if necessary,
\[
\inf_{t_0(z_t)\le t<1}
\mathbb E_Q[g(J,Z_1)]
>
0.
\]
Under either regularity regime, the bridge introduces no additional endpoint
singularity in the importance weights. Complete-flow behavior away from the endpoint is
handled below either directly from the observation model or through the general
accumulated-SNIS alternative.

\paragraph{Observation-model control in the bounded-residual regime.}
Assume
\begin{equation}
B_y:=\sup_{x\in\mathbb R^d}\|y-h(x)\|_{R^{-1}}<\infty,
\qquad
\Lambda_H:=\sup_x\lambda_{\max}\!\left(
(\Sigma^f)^{1/2}H(x)^\top R^{-1}H(x)(\Sigma^f)^{1/2}
\right)<\infty,
\label{eq:bounded-observation-constants}
\end{equation}
and define $\kappa:=(1-\gamma^2)\Lambda_H$. If
$\|H(x)\|\le L_H$, a simple sufficient bound is
\[
\Lambda_H\le\frac{L_H^2\|\Sigma^f\|}{\lambda_{\min}(R)},
\qquad
\kappa\le
\frac{(1-\gamma^2)L_H^2\|\Sigma^f\|}{\lambda_{\min}(R)}.
\]
These are coarse upper bounds, not identities.

The exact likelihood and its linearized counterpart satisfy
\[
p(y\mid z_1)
=C_R\exp\!\left[-\frac12\|y-h(z_1)\|_{R^{-1}}^2\right]
\ge C_Re^{-B_y^2/2},
\qquad
p_{\mathrm{lin}}(y\mid z_1,j,z_t)\le C_R.
\]
Consequently,
\begin{equation}
g(j,z_1)\ge g_\star:=e^{-B_y^2/2}>0.
\label{eq:importance-weight-lower-bound}
\end{equation}
Moreover, writing
\[
\Sigma_{1\mid t}=a_t\Sigma^f,
\qquad
a_t:=\frac{(1-\gamma^2)(1-t)^2}
{(1-t)^2+(1-\gamma^2)t^2}\le1-\gamma^2,
\]
the definition of $\kappa$ and $H_{t,j}=H(\mu_{1\mid t,j})$ give
\[
R^{-1/2}H_{t,j}\Sigma_{1\mid t}H_{t,j}^\top R^{-1/2}
\preceq\kappa I.
\]
Thus
\[
R\preceq S_{t,j}\preceq R^{1/2}(I+\kappa I)R^{1/2},
\qquad
|S_{t,j}|\le |R|(1+\kappa)^m,
\]
while $S_{t,j}^{-1}\preceq R^{-1}$ implies
\[
(y-h(\mu_{1\mid t,j}))^\top S_{t,j}^{-1}
(y-h(\mu_{1\mid t,j}))\le B_y^2.
\]
Every component evidence is therefore bounded below by
$C_R(1+\kappa)^{-m/2}e^{-B_y^2/2}$, and hence
\begin{equation}
\mathcal Z_q(t,z_t)\ge
Z_\star:=C_R(1+\kappa)^{-m/2}e^{-B_y^2/2}>0.
\label{eq:proposal-normalizer-uniform-lower-bound}
\end{equation}
The constant $Z_\star$ may depend strongly on the observation dimension $m$;
no dimension-free claim is made.

Let $p:=1+\delta$ with $0<\delta\le1$ and $\delta\kappa<1$. In
Eq.~\ref{eq:endpoint-moment-integral}, set
\[
m_j:=\mu_{1\mid t,j}(z_t),
\qquad
\Delta:=z_1-m_j,
\qquad
b_j:=y-h(m_j).
\]
The likelihood bounds give
\[
p(y\mid z_1)^p\le C_R^p,
\qquad
p_{\mathrm{lin}}^{\,1-p}
=C_R^{1-p}
\exp\!\left[
\frac{\delta}{2}(b_j-H_{t,j}\Delta)^\top
R^{-1}(b_j-H_{t,j}\Delta)
\right].
\]
The resulting Gaussian integral is finite whenever
\[
I-\delta\Sigma_{1\mid t}^{1/2}H_{t,j}^\top R^{-1}H_{t,j}
\Sigma_{1\mid t}^{1/2}\succ0.
\]
Its largest quadratic eigenvalue is at most $\delta\kappa<1$, uniformly in
$t$ and $j$. Completing the square, using $\|b_j\|_{R^{-1}}\le B_y$, and
applying the rescaling $\Delta=(1-t)\xi$ together with
Eqs.~\ref{eq:rescaled-bridge-cov-bound} and
\ref{eq:rescaled-bridge-mean-growth} shows that
\begin{equation}
\mathbb E_Q\!\left[
g^{1+\delta}\bigl(1+\|u_t\|^{1+\delta}\bigr)
\right]
\le C_\delta\bigl(1+\|z_t\|^{1+\delta}\bigr),
\qquad 0\le t<1.
\label{eq:bounded-observation-moment}
\end{equation}
Here the finite forecast ensemble ensures $\max_j\|\mu_j\|<\infty$ within
the assimilation cycle. The constant may depend on the fixed surrogate,
$y,R,\gamma,\Sigma^f,H,d,m$, and $\delta$, but not on $N_{\mathrm{IS}}$, $T$,
or $t$.

Set $N:=N_{\mathrm{IS}}$ and, for i.i.d. proposal samples, write
\[
g_s:=g(J^{(s)},Z_1^{(s)}),
\qquad
u_s:=u_t(Z_1^{(s)};z_t),
\qquad
v:=v_t^a(z_t;y),
\qquad
X_s:=g_s(u_s-v).
\]
Then $\mathbb E_Q[X_s]=0$ and
\[
\widehat v^a_{t,N}-v
=\frac{N^{-1}\sum_{s=1}^NX_s}{N^{-1}\sum_{s=1}^Ng_s}.
\]
Equation~\ref{eq:importance-weight-lower-bound} makes the denominator at least
$g_\star$. For $p=1+\delta\in(1,2]$, a componentwise von Bahr--Esseen or
Marcinkiewicz--Zygmund inequality gives
\[
\mathbb E\left\|N^{-1}\sum_{s=1}^NX_s\right\|^p
\le C_{p,d}N^{1-p}\mathbb E\|X_1\|^p.
\]
Equations~\ref{eq:importance-weight-lower-bound} and
\ref{eq:bounded-observation-moment} first imply
$\|v_t^a(z_t;y)\|\le C(1+\|z_t\|)$ and then
$\mathbb E_Q\|X_1\|^p\le C_p(1+\|z_t\|^p)$. Hence
\begin{equation}
\mathbb E\!\left[
\|\widehat v^a_{t,N}(z_t;y)-v_t^a(z_t;y)\|
\,\middle|\,z_t
\right]
\le C_\delta N^{-\beta_\delta}(1+\|z_t\|),
\qquad
\beta_\delta:=\frac{\delta}{1+\delta}.
\label{eq:bounded-snis-velocity}
\end{equation}
For $\delta=1$, this is the usual root-$N$ rate.

It remains to control evaluation at the random numerical states. Since the
normalized weights are nonnegative and sum to one,
\[
\|\widehat v\|^p\le\sum_{s=1}^Nw_s\|u_s\|^p.
\]
Using $\sum_sg_s\ge Ng_\star$ and
Eq.~\ref{eq:bounded-observation-moment} yields
\begin{equation}
\mathbb E[\|\widehat v_{t,N}(z_t)\|^p\mid z_t]
\le C_p(1+\|z_t\|^p).
\label{eq:numerical-velocity-moment}
\end{equation}
For the Euler recursion
$\widehat Z_{n+1}=\widehat Z_n+\Delta t\,
\widehat v_{t_n,N}(\widehat Z_n)$, convexity and
$\|x+y\|^p\le2^{p-1}(\|x\|^p+\|y\|^p)$ give
\[
\mathbb E\|\widehat Z_{n+1}\|^p
\le(1+C\Delta t)\mathbb E\|\widehat Z_n\|^p+C\Delta t.
\]
Because $Z_0$ is Gaussian, discrete Gr\"onwall therefore gives
\begin{equation}
\sup_{N\ge1,\,T\ge1}\max_{0\le n\le T}
\mathbb E\|\widehat Z_n\|^p\le C_Z<\infty.
\label{eq:numerical-flow-moment}
\end{equation}
Evaluating Eq.~\ref{eq:bounded-snis-velocity} at $z_t=\widehat Z_n$ now gives
\begin{equation}
a_n:=\mathbb E\|\widehat v_{t_n,N}(\widehat Z_n)
-v_{t_n}(\widehat Z_n)\|
\le C_\delta N^{-\beta_\delta},
\label{eq:bounded-numerical-velocity}
\end{equation}
and consequently
\begin{equation}
\Delta t\sum_{n=0}^{T-1}a_n
\le C_\delta N^{-\beta_\delta}.
\label{eq:bounded-accumulated-velocity}
\end{equation}

\paragraph{Affine observation operators.}
Suppose $h(x)=Ax+b$, with fixed $A$ and $b$. Then $H_{t,j}=A$ for every
component and flow state. Writing $m=\mu_{1\mid t,j}(z_t)$, we have
\[
h(m)+A(z_1-m)=Am+b+A(z_1-m)=Az_1+b=h(z_1).
\]
Thus the locally linearized likelihood is exact, even when $h$ is unbounded:
\begin{equation}
p_{\mathrm{lin}}(y\mid z_1,j,z_t)=p(y\mid z_1),
\qquad g(j,z_1)=1.
\label{eq:affine-unit-weight}
\end{equation}
The proposal $Q$ therefore equals the surrogate conditional target $P$, and
its independent endpoint samples give ordinary Monte Carlo estimators,
\begin{equation}
\widehat\mu^a_{1\mid t,N}(z_t;y)=\frac1N\sum_{s=1}^N Z_1^{(s)},
\qquad
\widehat v^a_{t,N}(z_t;y)
=\frac1N\sum_{s=1}^N u_t(Z_1^{(s)};z_t).
\label{eq:affine-monte-carlo-estimator}
\end{equation}

The bridge bounds in Eqs.~\ref{eq:rescaled-bridge-cov-bound} and
\ref{eq:rescaled-bridge-mean-growth} give
$\Sigma_{1\mid t}/(1-t)^2\preceq2\Sigma^f$ and
$\| (\mu_{1\mid t,j}(z_t)-z_t)/(1-t)\|
\le C_\gamma(1+\|\mu_j\|+\|z_t\|)$.
For the affine Gaussian proposal,
$0\preceq\Sigma^q_{1\mid t,j}\preceq\Sigma_{1\mid t}$ and
$G_{t,j}=\Sigma_{1\mid t}A^\top S_{t,j}^{-1}$, with
$S_{t,j}\succeq R$. Hence $\|G_{t,j}/(1-t)\|\le C(1-t)$.
Since the finite component means are fixed and
$\|y-Am-b\|\le C(1+\|z_t\|)$, the proposal mean shift divided by
$1-t$ is also at most linear in $\|z_t\|$. Averaging over the component
probabilities therefore yields, uniformly for $t<1$,
\begin{equation}
\mathbb E_Q\!\left[\|u_t(Z_1;z_t)\|^2\mid z_t\right]
\le C(1+\|z_t\|^2).
\label{eq:affine-velocity-second-moment}
\end{equation}
Because $v_t^a(z_t;y)=\mathbb E_Q[u_t(Z_1;z_t)\mid z_t]$, standard
Monte Carlo variance and Jensen's inequality give
\begin{equation}
\mathbb E\!\left[
\|\widehat v^a_{t,N}(z_t;y)-v_t^a(z_t;y)\|
\,\middle|\,z_t\right]
\le C N^{-1/2}(1+\|z_t\|).
\label{eq:affine-pointwise-velocity}
\end{equation}
Equation~\ref{eq:affine-velocity-second-moment} also bounds the second
moment of the sample-mean velocity. The random-state moment recursion used
in Eq.~\ref{eq:numerical-flow-moment}, now with $p=2$, gives
$\sup_{N,T}\max_n\mathbb E\|\widehat Z_n\|^2<\infty$.
Evaluating Eq.~\ref{eq:affine-pointwise-velocity} at $\widehat Z_n$ and
summing the Euler steps yields
\begin{equation}
\Delta t\sum_{n=0}^{T-1}
\mathbb E\!\left[
\|\widehat v^a_{t_n,N}(\widehat Z_n)
-v^a_{t_n}(\widehat Z_n)\|\right]
\le C N^{-1/2}.
\label{eq:affine-accumulated-velocity}
\end{equation}
This estimate does not require a bounded residual or an importance-weight
overlap condition.

\paragraph{General accumulated-SNIS alternative.}
The bounded nonlinear result uses the residual assumption $B_y<\infty$.
For more general unbounded nonlinear observation maps, we retain the following
accumulated condition on the standard pointwise SNIS constants. For each fixed
$(t,z_t)$ satisfying the standard SNIS moment
conditions, apply Theorem~2.3 of \citet{agapiou2017importance} with the
unbounded test function
\[
\varphi_\ell(j,z_1)
:=
u_t(z_1;z_t)_\ell.
\]
Let $C_{\mathrm{MSE},\ell}(t,z_t)$ denote the resulting finite
non-asymptotic MSE constant. Applying the theorem coordinatewise to
Eq.~\ref{eq:velocity-snis-estimator} gives the pointwise conditional estimate
\begin{equation}
\mathbb E\!\left[
\left|
\widehat v^a_{t,N_{\mathrm{IS}},\ell}(z_t;y)
-
v^a_{t,\ell}(z_t;y)
\right|^2
\,\middle|\, z_t
\right]
\le
\frac{C_{\mathrm{MSE},\ell}(t,z_t)}{N_{\mathrm{IS}}}.
\label{eq:pointwise-velocity-mse-bound}
\end{equation}

The pointwise estimate above does not require a uniform-in-time bound on the
SNIS constants. For the global finite-computation analysis, it is sufficient
to control their accumulated contribution along the numerical posterior flow.
Define
\[
\kappa_{\mathrm{IS}}(t,z)
:=
\left(
\sum_{\ell=1}^d C_{\mathrm{MSE},\ell}(t,z)
\right)^{1/2}.
\]
We assume that there exists $C_{\mathrm{IS}}<\infty$, independent of
$N_{\mathrm{IS}}$ and $T$, such that
\begin{equation}
\sup_{N_{\mathrm{IS}}\ge1,\;T\ge1}
\Delta t
\sum_{n=0}^{T-1}
\mathbb E\!\left[
\kappa_{\mathrm{IS}}(t_n,\widehat Z_n)
\right]
\le C_{\mathrm{IS}},
\qquad
\Delta t=\frac1T.
\label{eq:accumulated-is-constant}
\end{equation}
This condition controls the accumulated first-moment contribution of the
standard pointwise SNIS MSE constants along the numerical flow. It is weaker
than a uniform bound at every flow time: the latter implies
Eq.~\ref{eq:accumulated-is-constant} by Jensen's inequality, whereas the
accumulated condition allows the importance-sampling difficulty to vary with
$t$ provided that its total contribution remains finite.

Taking conditional expectations in
Eq.~\ref{eq:pointwise-velocity-mse-bound}, summing over coordinates, and
applying Jensen's inequality gives, at each numerical state,
\begin{equation}
\mathbb E\!\left[
\left\|
\widehat v^a_{t_n,N_{\mathrm{IS}}}(\widehat Z_n;y)
-
v^a_{t_n}(\widehat Z_n;y)
\right\|
\right]
\le
N_{\mathrm{IS}}^{-1/2}
\mathbb E\!\left[
\kappa_{\mathrm{IS}}(t_n,\widehat Z_n)
\right].
\label{eq:numerical-velocity-mean-bound}
\end{equation}
Therefore Eq.~\ref{eq:accumulated-is-constant} implies
\begin{equation}
\Delta t
\sum_{n=0}^{T-1}
\mathbb E\!\left[
\left\|
\widehat v^a_{t_n,N_{\mathrm{IS}}}(\widehat Z_n;y)
-
v^a_{t_n}(\widehat Z_n;y)
\right\|
\right]
\le
C_{\mathrm{IS}}N_{\mathrm{IS}}^{-1/2}.
\label{eq:accumulated-velocity-mean-bound}
\end{equation}

For any fixed $(t,z_t)$ satisfying the pointwise SNIS conditions,
Eqs.~\ref{eq:posterior_velocity} and
\ref{eq:finite-endpoint-regression} give
\[
\widehat\mu^a_{1\mid t,N_{\mathrm{IS}}}(z_t;y)
-
\mu^a_{1\mid t}(z_t;y)
=
(1-t)
\left(
\widehat v^a_{t,N_{\mathrm{IS}}}(z_t;y)
-
v_t^a(z_t;y)
\right).
\]
Hence Eq.~\ref{eq:pointwise-velocity-mse-bound} implies
\begin{equation}
\mathbb E\!\left[
\left\|
\widehat\mu^a_{1\mid t,N_{\mathrm{IS}}}(z_t;y)
-
\mu^a_{1\mid t}(z_t;y)
\right\|
\,\middle|\,z_t
\right]
\le
(1-t)N_{\mathrm{IS}}^{-1/2}
\kappa_{\mathrm{IS}}(t,z_t).
\label{eq:endpoint-regression-derived}
\end{equation}
Thus the endpoint-regression error retains the explicit $(1-t)$ contraction
factor without requiring a uniform-in-time SNIS constant. A continuous-time
analogue,
\[
\int_0^1
\mathbb E[\kappa_{\mathrm{IS}}(t,Z_t)]\,dt<\infty,
\]
similarly controls the accumulated SNIS error along the exact flow, but it is
not required for the finite Euler error bound below.

The conditional-velocity identity also gives the terminal value:
\[
v_t^a(z_t;y)
=
\mathbb E[Z_1-Z_0\mid Z_t=z_t,Y=y].
\]
At $t=1$, $Z_t=Z_1$. Since $Z_0$ is independent of $(Z_1,Y)$ and
$\mathbb E[Z_0]=0$,
\begin{equation}
v_1^a(z;y)
:=
\mathbb E[Z_1-Z_0\mid Z_1=z,Y=y]
=
z.
\label{eq:terminal-velocity-extension}
\end{equation}
Equations~\ref{eq:rescaled-bridge-mean}--%
\ref{eq:endpoint-mixed-moment-bound} give $v_t^a(z;y)\to z$ as
$t\uparrow1$ for every fixed $z$ covered by the pointwise assumptions. The
apparent endpoint singularity in Eq.~\ref{eq:posterior_velocity} is therefore
removable. The numerical method evaluates at $t_n=n/T<1$ and advances its
final Euler step to $t=1$.

Appendix~\ref{app:l96} reports finite-sample normalized-ESS diagnostics for the
sparse, small-noise Lorenz--96 regime. They show substantial weight
concentration over the early and intermediate flow and rapid recovery near the
endpoint, consistent with the endpoint contraction analysis. For general
unbounded nonlinear observations, complete-flow moment control remains
nontrivial and is supplied by the accumulated-SNIS alternative above.

\subsection{Finite-computation error}
\label{app:finite-computation}

Let
\[
\Delta t=\frac1T,
\qquad
t_n=n\Delta t=\frac nT,
\qquad
n=0,\ldots,T.
\]
Assume that the continuously extended ideal velocity admits a deterministic
Lipschitz constant $L_v<\infty$ on the supports relevant to the coupled exact
and numerical flow laws:
\[
\|v_t^a(x;y)-v_t^a(x';y)\|
\le
L_v\|x-x'\|
\]
for $t\in[0,1]$. Assume also that one Euler step for the exact ODE has mean
local truncation error bounded by $C_E(\Delta t)^2$, uniformly in $n$.

\begin{theorem}
\label{thm:finite-computation}
Under these flow regularity conditions, let $Z_t$ solve
\[
\dot Z_t=v_t^a(Z_t),
\qquad
Z_0\sim\mathcal L(Z_0),
\]
and let
\begin{equation}
\widehat Z_{n+1}
=
\widehat Z_n
+
\Delta t\,
\widehat v^a_{t_n,N_{\mathrm{IS}}}(\widehat Z_n),
\qquad
\widehat Z_0=Z_0.
\label{eq:numerical-flow-update}
\end{equation}
In the bounded nonlinear regime of Appendix~\ref{app:is-control}, for every
$0<\delta\le1$ with $\delta\kappa<1$,
\begin{equation}
W_1\!\left(
\widehat p^a_{N_{\mathrm{IS}},T},
\widetilde p^a
\right)
\le
e^{L_v}C_\delta N_{\mathrm{IS}}^{-\delta/(1+\delta)}
+
B(L_v)C_ET^{-1}.
\label{eq:finite-computation-bound}
\end{equation}
In the affine regime,
\begin{equation}
W_1\!\left(
\widehat p^a_{N_{\mathrm{IS}},T},
\widetilde p^a
\right)
\le
e^{L_v}C_1N_{\mathrm{IS}}^{-1/2}
+B(L_v)C_ET^{-1}.
\label{eq:affine-finite-computation-bound}
\end{equation}
Here
\[
B(L_v)
:=
\begin{cases}
(e^{L_v}-1)/L_v, & L_v>0,\\[1ex]
1, & L_v=0.
\end{cases}
\]
\end{theorem}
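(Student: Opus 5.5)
We will prove Theorem~\ref{thm:finite-computation} by a synchronous coupling of the exact flow and the Euler scheme, followed by a discrete Gr\"onwall argument. Both processes start from the same source particle, $\widehat Z_0=Z_0$. Let $Z_{t_n}$ denote the exact flow evaluated at the grid times, and set $e_n:=\mathbb E\|\widehat Z_n-Z_{t_n}\|$.

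\emph{Step 1: identify the target law.} By Theorem~\ref{thm:posterior-flow}, the continuity-equation argument of Appendix~\ref{app:proof-posterior-flow}, and the Lipschitz assumption (which gives well-posedness), the exact flow pushes $\mathcal L(Z_0)$ forward to $\widetilde p^a_1=\widetilde p^a$. The terminal extension in Eq.~\ref{eq:terminal-velocity-extension} makes the velocity defined on all of $[0,1]$. The coupling $(\widehat Z_T,Z_1)$ then has marginals $\widehat p^a_{N_{\mathrm{IS}},T}$ and $\widetilde p^a$, so
\[
W_1\!\left(\widehat p^a_{N_{\mathrm{IS}},T},\widetilde p^a\right)\le e_T .
\]

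\emph{Step 2: one-step recursion.} We decompose
\[
\widehat Z_{n+1}-Z_{t_{n+1}}
=(\widehat Z_n-Z_{t_n})
+\Delta t\,[v_{t_n}(\widehat Z_n)-v_{t_n}(Z_{t_n})]
+\Delta t\,[\widehat v_{t_n,N}(\widehat Z_n)-v_{t_n}(\widehat Z_n)]
-\tau_n ,
\]
where $\tau_n:=Z_{t_{n+1}}-Z_{t_n}-\Delta t\,v_{t_n}(Z_{t_n})$ is the local truncation error. We take norms and expectations and bound the three correction terms as follows. The second term is at most $L_v\Delta t\,e_n$ by the Lipschitz property. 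The third term is $\Delta t\,a_n$ in the notation of Eq.~\ref{eq:bounded-numerical-velocity}. The truncation term is at most $C_E(\Delta t)^2$ by assumption. This yields
\[
e_{n+1}\le(1+L_v\Delta t)e_n+\Delta t\,a_n+C_E(\Delta t)^2 .
\]
Unrolling from $e_0=0$ and using $(1+L_v\Delta t)^{T-1-k}\le e^{L_v}$ gives
\[
e_T\le e^{L_v}\,\Delta t\sum_{n=0}^{T-1} a_n + C_E\,\Delta t\sum_{k=0}^{T-1}(1+L_v\Delta t)^{k}\,\Delta t .
\]
The last sum equals $\Delta t\,[(1+L_v\Delta t)^T-1]/L_v\le B(L_v)\Delta t$, which produces the factor $B(L_v)$ (equal to $1$ when $L_v=0$). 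Since $\Delta t=1/T$, this is the $B(L_v)C_ET^{-1}$ term.

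\emph{Step 3: insert the sampling bounds.} In the bounded nonlinear regime, Eq.~\ref{eq:bounded-accumulated-velocity} gives $\Delta t\sum_n a_n\le C_\delta N_{\mathrm{IS}}^{-\delta/(1+\delta)}$ for every admissible $\delta$ with $\delta\kappa<1$. In the affine regime, Eq.~\ref{eq:affine-accumulated-velocity} gives $C_1N_{\mathrm{IS}}^{-1/2}$. Substituting these into Step~2 yields Eqs.~\ref{eq:finite-computation-bound} and~\ref{eq:affine-finite-computation-bound}.

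The main obstacle is to justify evaluating the pointwise SNIS bound at the random numerical state $\widehat Z_n$. The sampling noise at step $n$ must be conditionally independent given $\widehat Z_n$, so the importance samples must be drawn fresh at each step. One also needs the uniform moment bound of Eq.~\ref{eq:numerical-flow-moment}, so that $\mathbb E[(1+\|\widehat Z_n\|)]$ is bounded independently of $N_{\mathrm{IS}}$ and $T$. We take this from the Gr\"onwall moment recursion already established in Appendix~\ref{app:is-control}. A further subtlety is that the Lipschitz bound must hold on the supports reached by both the exact and numerical flows, which is exactly how the hypothesis is stated.
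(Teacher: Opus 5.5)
Your proposal is correct and follows the paper's proof essentially step for step. Both use a synchronous coupling through $Z_0$, the recursion $e_{n+1}\le(1+L_v\Delta t)e_n+\Delta t\,a_n+C_E(\Delta t)^2$, unrolling with $(1+L_v\Delta t)^T\le e^{L_v}$ to get the $e^{L_v}$ and $B(L_v)C_ET^{-1}$ factors, and finally $W_1\le e_T$. The differences are cosmetic: you plug in the accumulated bounds on $\Delta t\sum_n a_n$ instead of the paper's per-step bound $a_n\le r_N$, which is equivalent because each weight $(1+L_v\Delta t)^{T-1-n}$ is at most $e^{L_v}$, and you spell out why the exact flow ends at $\widetilde p^a$, which the paper takes for granted.
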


\begin{proof}
Couple the exact and numerical flows through the same initial variable $Z_0$,
and let
\[
e_n:=\mathbb E\|\widehat Z_n-Z_{t_n}\|.
\]
One exact ODE step satisfies
\[
Z_{t_{n+1}}
=
Z_{t_n}
+
\Delta t\,v^a_{t_n}(Z_{t_n})
+
\rho_n,
\qquad
\mathbb E\|\rho_n\|
\le
C_E(\Delta t)^2.
\]
Subtracting the numerical update, adding and subtracting
$v^a_{t_n}(\widehat Z_n)$, and using the $L_v$-Lipschitz property gives
\begin{equation}
e_{n+1}
\le
(1+L_v\Delta t)e_n
+
\Delta t\,a_n
+
C_E(\Delta t)^2,
\label{eq:error-recursion-full-interval}
\end{equation}
where
\[
a_n
:=
\mathbb E\left\|
\widehat v^a_{t_n,N_{\mathrm{IS}}}(\widehat Z_n)
-
v^a_{t_n}(\widehat Z_n)
\right\|.
\]
Equation~\ref{eq:bounded-numerical-velocity} bounds $a_n$ in the bounded
nonlinear regime. In the affine regime, Eq.~\ref{eq:affine-pointwise-velocity}
and the random-state second-moment bound give the corresponding root-$N$
estimate. Thus $a_n\le r_N$, where
\[
r_N:=
\begin{cases}
C_\delta N_{\mathrm{IS}}^{-\delta/(1+\delta)},
&\text{bounded nonlinear},\\
C_1N_{\mathrm{IS}}^{-1/2},
&\text{affine}.
\end{cases}
\]
Writing $q=1+L_v\Delta t$ and iterating the recursion from $e_0=0$,
\[
e_T
\le
\Delta t\sum_{n=0}^{T-1}q^{T-1-n}a_n
+
C_E(\Delta t)^2\sum_{n=0}^{T-1}q^{T-1-n}.
\]
Since $q^T\le e^{L_v}$,
\[
\Delta t\sum_{n=0}^{T-1}q^{T-1-n}a_n
\le
e^{L_v}r_N.
\]
Moreover,
\[
C_E(\Delta t)^2\sum_{k=0}^{T-1}q^k
=
C_E\Delta t\,\frac{q^T-1}{L_v}
\le
B(L_v)C_ET^{-1},
\]
with the continuous interpretation $B(0)=1$. Therefore
\[
e_T
\le
e^{L_v}r_N+B(L_v)C_ET^{-1}.
\]
The coupling has terminal marginals
$\widehat p^a_{N_{\mathrm{IS}},T}$ and $\widetilde p^a$, hence
\[
W_1\!\left(
\widehat p^a_{N_{\mathrm{IS}},T},
\widetilde p^a
\right)
\le
\mathbb E\|\widehat Z_T-Z_1\|
=
e_T,
\]
which proves Eqs.~\ref{eq:finite-computation-bound} and
\ref{eq:affine-finite-computation-bound}.
\end{proof}
For more general unbounded nonlinear observation maps, the same recursion
combined with the accumulated-SNIS condition of Appendix~\ref{app:is-control}
recovers the corresponding general finite-computation bound.

\subsection{Stability of Bayesian reweighting}
\label{app:bayes-stability}

For this proof, write \(\ell_y(x)=p(y\mid x)\) and define the Bayesian
reweighting operator
\begin{equation}
\mathcal B_y(\mu)(dx)
:=
\frac{\ell_y(x)\mu(dx)}{\mu(\ell_y)}.
\end{equation}

\begin{theorem}
\label{thm:bayes-stability}
Let \(\mu,\nu\in\mathcal P_2(\mathbb R^d)\). Assume \(0\le\ell_y\le M_L\),
\(\ell_y\) is \(G_L\)-Lipschitz, and define
\begin{equation}
\begin{aligned}
\mathcal Z_\mu&:=\mu(\ell_y)>0,
\qquad
\mathcal Z_\nu:=\nu(\ell_y)>0,\\
\mathcal Z_\star&:=\min\{\mathcal Z_\mu,\mathcal Z_\nu\},
\qquad
B_2:=
\max_{\rho\in\{\mu,\nu\}}
\left(\int\|x\|^2\rho(dx)\right)^{1/2}.
\end{aligned}
\end{equation}
Then
\begin{equation}
W_1(\mathcal B_y(\mu),\mathcal B_y(\nu))
\le
C_{\mathrm{Bayes}}(y)W_2(\mu,\nu),
\end{equation}
where
\begin{equation}
C_{\mathrm{Bayes}}(y)
=
\frac{M_L+G_LB_2}{\mathcal Z_\star}
+
\frac{M_LG_LB_2}{\mathcal Z_\star^2}.
\end{equation}
\end{theorem}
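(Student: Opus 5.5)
We will bound $W_1$ through Kantorovich--Rubinstein duality: for $f$ 1-Lipschitz, and since $W_1$ is invariant to adding constants to $f$, we may normalize $f(x_0)=0$ at a convenient point (we take $x_0=0$, so $|f(x)|\le\|x\|$). We write
\[
\mathcal B_y(\mu)(f)-\mathcal B_y(\nu)(f)
=\frac{\mu(\ell_yf)}{\mathcal Z_\mu}-\frac{\nu(\ell_yf)}{\mathcal Z_\nu}
=\frac{\mu(\ell_yf)-\nu(\ell_yf)}{\mathcal Z_\mu}
+\nu(\ell_yf)\Bigl(\frac{1}{\mathcal Z_\mu}-\frac{1}{\mathcal Z_\nu}\Bigr).
\]
This splits the error into a numerator term and a normalizer term, which produce the two summands of $C_{\mathrm{Bayes}}(y)$.

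For the numerator term we take an optimal $W_2$ coupling $(X,X')$ of $(\mu,\nu)$. We bound $|\ell_y(X)f(X)-\ell_y(X')f(X')|\le \ell_y(X)|f(X)-f(X')|+|f(X')|\,|\ell_y(X)-\ell_y(X')|\le (M_L+G_L\|X'\|)\|X-X'\|$. Taking expectations and applying Cauchy--Schwarz gives $|\mu(\ell_yf)-\nu(\ell_yf)|\le M_L W_1+G_L B_2 W_2\le (M_L+G_LB_2)W_2(\mu,\nu)$, using $W_1\le W_2$. Dividing by $\mathcal Z_\mu\ge\mathcal Z_\star$ yields the first summand.

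For the normalizer term we use the same coupling: $|\mathcal Z_\mu-\mathcal Z_\nu|\le G_L\,\mathbb E\|X-X'\|\le G_LW_2$. Then $|1/\mathcal Z_\mu-1/\mathcal Z_\nu|\le G_LW_2/\mathcal Z_\star^2$. We also bound $|\nu(\ell_yf)|\le M_L\nu(\|x\|)\le M_LB_2$ by Jensen. The product gives $M_LG_LB_2W_2/\mathcal Z_\star^2$. Taking the supremum over $f$ concludes the proof.

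The main obstacle is bookkeeping with the unbounded test function $f$. The normalization $f(0)=0$ is what permits the moment bounds $|f(x)|\le\|x\|$. The one spot needing care is the cross term $|f(X')|\,|\ell_y(X)-\ell_y(X')|$, which forces Cauchy--Schwarz and hence $W_2$ (rather than $W_1$) together with the second moment $B_2$. We must also check that assigning the moment to $X'\sim\nu$ in this step, and to $\nu$ in the second step, stays consistent with the maximum defining $B_2$; it does, since $B_2$ dominates the second moment of either measure.
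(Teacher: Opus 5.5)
Your proposal is correct and matches the paper's proof: Kantorovich--Rubinstein duality with the normalization $f(0)=0$, the same numerator/normalizer split, Cauchy--Schwarz on the cross term $|f(X')|\,|\ell_y(X)-\ell_y(X')|$ to get $(M_L+G_LB_2)W_2$, and the bounds $|\mathcal Z_\mu-\mathcal Z_\nu|\le G_LW_2$ and $|\nu(\ell_yf)|\le M_LB_2$. One small slip that does not affect the conclusion: under a $W_2$-optimal coupling, $\mathbb E\|X-X'\|$ is at least $W_1$, not at most, so the intermediate bound ``$M_LW_1$'' should read $M_L\,\mathbb E\|X-X'\|\le M_LW_2$, which follows directly by Jensen.
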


\begin{proof}
By Kantorovich--Rubinstein duality, consider a 1-Lipschitz \(f\) with \(f(0)=0\), so \(|f(x)|\le\|x\|\). Decompose
\begin{equation}
\mathcal B_y(\mu)(f)-\mathcal B_y(\nu)(f)
=
\frac{\mu(f\ell_y)-\nu(f\ell_y)}{\mathcal Z_\mu}
+
\nu(f\ell_y)\left(\frac1{\mathcal Z_\mu}-\frac1{\mathcal Z_\nu}\right).
\end{equation}
For any \(\Gamma\in\Pi(\mu,\nu)\) and \((U,V)\sim\Gamma\), Cauchy--Schwarz gives
\begin{equation}
|\mu(f\ell_y)-\nu(f\ell_y)|
\le
(M_L+G_LB_2)
\left(\mathbb E_\Gamma\|U-V\|^2\right)^{1/2}.
\end{equation}
Moreover, \(|\mathcal Z_\mu-\mathcal Z_\nu|\le G_LW_2(\mu,\nu)\) and
\(|\nu(f\ell_y)|\le M_LB_2\). Taking the infimum over
\(\Gamma\in\Pi(\mu,\nu)\), then the supremum over \(f\), yields the stated
constant.
\end{proof}

\subsection{Forecast and posterior surrogate error}
\label{app:gmm-smoothing}
\label{app:posterior-surrogate}

Let
\[
\nu_M^f
=
\frac1M\sum_{j=1}^M\delta_{x_j^f}
\]
denote the equal-weight empirical forecast measure. Its mean is
\[
\bar x^f
=
\frac1M\sum_{j=1}^M x_j^f,
\]
and its covariance is
\begin{equation}
C_{\nu_M}^f
:=
\operatorname{Cov}(\nu_M^f)
=
\frac1M\sum_{j=1}^M
(x_j^f-\bar x^f)(x_j^f-\bar x^f)^\top.
\label{eq:empirical-covariance}
\end{equation}

\begin{lemma}
\label{lem:gmm-forecast}
Let \(U\sim\nu_M^f\) and \(\xi\sim\mathcal N(0,I)\) independently, with \(\Sigma^f\succeq0\). The FREESIA forecast-GMM surrogate is the law of
\begin{equation}
\widetilde U
=
\bar x^f+\gamma(U-\bar x^f)
+\sqrt{1-\gamma^2}(\Sigma^f)^{1/2}\xi.
\end{equation}
Then
\begin{equation}
\begin{aligned}
W_2(\widetilde p^f,p^f)
\le\;&
W_2(\nu_M^f,p^f)
+\sqrt{2(1-\gamma)\operatorname{tr}C_{\nu_M}^f}\\
&+
\sqrt{1-\gamma^2}
\left\|
(\Sigma^f)^{1/2}-(C_{\nu_M}^f)^{1/2}
\right\|_F.
\end{aligned}
\end{equation}
\end{lemma}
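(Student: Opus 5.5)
The plan is to go through two intermediate laws and use the triangle inequality for $W_2$ twice. The first step is to confirm that $\widetilde U$ has the FREESIA forecast-GMM law. Conditional on $U=x_j^f$, the variable $\widetilde U$ is Gaussian with mean $(1-\gamma)\bar x^f+\gamma x_j^f=\mu_j$ and covariance $(1-\gamma^2)\Sigma^f$. Since $U$ is uniform over the ensemble, this matches Eq.~\ref{eq:forecast-gmm-decomposition}.

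Next, define $V:=\bar x^f+\gamma(U-\bar x^f)+\sqrt{1-\gamma^2}\,(C_{\nu_M}^f)^{1/2}\xi$. This is the same construction with $\Sigma^f$ replaced by the empirical covariance, using the same $U$ and $\xi$. The decomposition is
\[
W_2(\widetilde p^f,p^f)\le W_2(\mathcal L(\widetilde U),\mathcal L(V))+W_2(\mathcal L(V),\nu_M^f)+W_2(\nu_M^f,p^f).
\]

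The first term is handled by the synchronous coupling. It gives $\mathbb E\|\widetilde U-V\|^2=(1-\gamma^2)\,\mathbb E\|((\Sigma^f)^{1/2}-(C_{\nu_M}^f)^{1/2})\xi\|^2$. Because $\mathbb E[\xi\xi^\top]=I$, this equals $(1-\gamma^2)\|(\Sigma^f)^{1/2}-(C_{\nu_M}^f)^{1/2}\|_F^2$, which is exactly the third term of the bound.

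The middle term compares $V$ with $U$ through the coupling $(V,U)$, using the same $U$. Write $D:=U-\bar x^f$, so that $V-U=-(1-\gamma)D+\sqrt{1-\gamma^2}(C_{\nu_M}^f)^{1/2}\xi$. The cross term vanishes by independence of $\xi$, and $\mathbb E\|D\|^2=\operatorname{tr}C_{\nu_M}^f$. This gives
\[
\mathbb E\|V-U\|^2=\big[(1-\gamma)^2+(1-\gamma^2)\big]\operatorname{tr}C_{\nu_M}^f=2(1-\gamma)\operatorname{tr}C_{\nu_M}^f,
\]
which is the second term. The third piece, $W_2(\nu_M^f,p^f)$, is kept as stated.

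The main obstacle is the middle term, since the coupling has to be chosen so that the Gaussian smoothing and the shrinkage toward $\bar x^f$ combine into the single factor $2(1-\gamma)$. Coupling $V$ with its own $U$ does this because the shrinkage and noise terms are orthogonal in $L^2$. Two smaller points also need care: $(C_{\nu_M}^f)^{1/2}$ must be the PSD square root, which exists because $C_{\nu_M}^f\succeq0$ even when it is rank-deficient, and $W_2(\nu_M^f,p^f)$ is assumed finite, i.e.\ $p^f\in\mathcal P_2$; otherwise the inequality is trivial.
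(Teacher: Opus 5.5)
Your proposal is correct and follows essentially the same route as the paper. The paper's proof uses the same intermediate variable (called $U_C$ there), built from the same $(U,\xi)$, and the same two synchronous couplings, which give $2(1-\gamma)\operatorname{tr}C_{\nu_M}^f$ and $(1-\gamma^2)\|(\Sigma^f)^{1/2}-(C_{\nu_M}^f)^{1/2}\|_F^2$, followed by the triangle inequality with $W_2(\nu_M^f,p^f)$. Your explicit check that $\widetilde U$ has the GMM law, and your remarks on the PSD square root and finiteness, are harmless additions that the paper leaves implicit.
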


\begin{proof}
Using the same \((U,\xi)\), compare \(\widetilde U\) with
\begin{equation}
U_C
=
\bar x^f+\gamma(U-\bar x^f)
+\sqrt{1-\gamma^2}(C_{\nu_M}^f)^{1/2}\xi.
\end{equation}
The two couplings satisfy
\begin{equation}
\mathbb E\|U_C-U\|^2
=2(1-\gamma)\operatorname{tr}C_{\nu_M}^f,
\qquad
\mathbb E\|\widetilde U-U_C\|^2
=(1-\gamma^2)
\left\|(\Sigma^f)^{1/2}-(C_{\nu_M}^f)^{1/2}\right\|_F^2.
\end{equation}
The triangle inequality with \(W_2(\nu_M^f,p^f)\) proves the bound. Independence also gives
\begin{equation}
\mathbb E\|\widetilde U\|^2
=
\|\bar x^f\|^2
+\gamma^2\operatorname{tr}C_{\nu_M}^f
+(1-\gamma^2)\operatorname{tr}\Sigma^f,
\end{equation}
so \(\widetilde p^f\in\mathcal P_2(\mathbb R^d)\).
\end{proof}

We leave \(W_2(\nu_M^f,p^f)\) explicit rather than impose a dimension-free empirical-measure rate, since generic Wasserstein empirical convergence depends on dimension and moment assumptions \citep{fournier2023empirical}.

Recall that $\varepsilon_{\mathrm{sur}}(y)=W_1(\widetilde p^a,p^a)$ from
Eq.~\ref{eq:surrogate-error}.

\begin{corollary}
\label{cor:posterior-surrogate}
Under the conditions of Theorem~\ref{thm:bayes-stability} for \((p^f,\widetilde p^f)\),
\begin{equation}
\begin{aligned}
\varepsilon_{\mathrm{sur}}(y)
\le\;&
C_{\mathrm{Bayes}}(y)
\Bigg[
W_2(\nu_M^f,p^f)
+\sqrt{2(1-\gamma)\operatorname{tr}C_{\nu_M}^f}\\
&\hspace{25mm}
+\sqrt{1-\gamma^2}
\left\|
(\Sigma^f)^{1/2}-(C_{\nu_M}^f)^{1/2}
\right\|_F
\Bigg].
\end{aligned}
\end{equation}
\end{corollary}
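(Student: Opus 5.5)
The plan is to combine Theorem~\ref{thm:bayes-stability} with Lemma~\ref{lem:gmm-forecast}. By definition, $\varepsilon_{\mathrm{sur}}(y)=W_1(\widetilde p^a,p^a)$. Here $\widetilde p^a=\mathcal B_y(\widetilde p^f)$ and $p^a=\mathcal B_y(p^f)$, where $\mathcal B_y$ is the Bayesian reweighting operator built from the same likelihood $\ell_y(x)=p(y\mid x)=\mathcal N(y;h(x),R)$. So the first step is to identify both posteriors as images of their respective forecasts under the single map $\mathcal B_y$. This is immediate from Eq.~\ref{eq:surrogate_analysis} and Eq.~\ref{eq:bayes_analysis}.

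Next, I check that the hypotheses of Theorem~\ref{thm:bayes-stability} hold with $\mu=\widetilde p^f$ and $\nu=p^f$. The corollary assumes these conditions, so this is mostly bookkeeping.
\begin{itemize}
\item \emph{Boundedness of the likelihood.} We have $0\le\ell_y\le C_R=M_L$.
\item \emph{Lipschitz continuity of the likelihood.} This is part of the assumed conditions. It holds, for instance, when $h$ is Lipschitz, because the Gaussian kernel has a bounded gradient in its mean argument.
\item \emph{Finite second moments.} Lemma~\ref{lem:gmm-forecast} gives $\widetilde p^f\in\mathcal P_2$ explicitly. For $p^f\in\mathcal P_2$ we rely on the assumption.
\item \emph{Positive evidences.} Both $\mathcal Z_\mu$ and $\mathcal Z_\nu$ are strictly positive because $\ell_y>0$ everywhere.
\end{itemize}
Theorem~\ref{thm:bayes-stability} then yields
\[
\varepsilon_{\mathrm{sur}}(y)\le C_{\mathrm{Bayes}}(y)\,W_2(\widetilde p^f,p^f).
\]
Here $B_2$ is taken as the maximum of the two second moments. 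Because $W_1$ is symmetric, the order of the arguments does not matter.

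Finally, I insert the bound of Lemma~\ref{lem:gmm-forecast} for $W_2(\widetilde p^f,p^f)$. That bound consists of three terms: the empirical error $W_2(\nu_M^f,p^f)$, the shrinkage term $\sqrt{2(1-\gamma)\operatorname{tr}C^f_{\nu_M}}$, and the covariance-mismatch term $\sqrt{1-\gamma^2}\,\|(\Sigma^f)^{1/2}-(C^f_{\nu_M})^{1/2}\|_F$. Multiplying through by $C_{\mathrm{Bayes}}(y)\ge0$ gives the stated inequality.

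I do not expect any real obstacle. The only delicate point is making sure that the constant $C_{\mathrm{Bayes}}(y)$ is the one evaluated at the pair $(p^f,\widetilde p^f)$. That is, $\mathcal Z_\star$ and $B_2$ must refer to the true and surrogate forecasts, not to the empirical measure. The Lemma supplies the needed moment control for the surrogate through its explicit second-moment identity.
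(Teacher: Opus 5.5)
Your proposal is correct and matches the paper's proof. The paper also writes $\widetilde p^a=\mathcal B_y(\widetilde p^f)$ and $p^a=\mathcal B_y(p^f)$, applies Theorem~\ref{thm:bayes-stability} to the pair $(p^f,\widetilde p^f)$ to obtain $\varepsilon_{\mathrm{sur}}(y)\le C_{\mathrm{Bayes}}(y)\,W_2(\widetilde p^f,p^f)$, and then substitutes the three-term bound of Lemma~\ref{lem:gmm-forecast}; your hypothesis checks (bounded, strictly positive Gaussian likelihood and the surrogate's finite second moment) fill in details the paper leaves implicit.
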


\begin{proof}
Apply Theorem~\ref{thm:bayes-stability} to \((p^f,\widetilde p^f)\) and substitute Lemma~\ref{lem:gmm-forecast}.
\end{proof}

The three contributions in this bound are finite-ensemble discrepancy, GMM
smoothing, and covariance modification, respectively.

\subsection{Proof of Theorem 3}
\label{app:proof-total-error}

By the triangle inequality and Eq.~\ref{eq:error-split},
\[
W_1(\widehat p^a_{N_{\mathrm{IS}},T},p^a)
\le
W_1(\widehat p^a_{N_{\mathrm{IS}},T},\widetilde p^a)
+\varepsilon_{\mathrm{sur}}(y).
\]
Applying the corresponding case of Theorem~\ref{thm:finite-computation} gives
\[
W_1(\widehat p^a_{N_{\mathrm{IS}},T},p^a)
\le
\begin{cases}
C_\delta N_{\mathrm{IS}}^{-\delta/(1+\delta)}
+C_2T^{-1}+\varepsilon_{\mathrm{sur}}(y),
&\text{bounded nonlinear},\\
C_1N_{\mathrm{IS}}^{-1/2}
+C_2T^{-1}+\varepsilon_{\mathrm{sur}}(y),
&\text{affine},
\end{cases}
\]
with constants relabeled as in Theorem~\ref{thm:total-error}.
When $\kappa<1$, $\delta=1$ is admissible in the bounded nonlinear case and
yields the stated corollary.

\subsection{Covariance and high-dimensional implementation}
\label{app:covariance-implementation}
\label{app:high-dimensional}
\label{app:optional-approximations}

\paragraph{Complete finite-ensemble update.}
Algorithm~\ref{alg:freesia} gives the full-evidence \method{} update; the
high-dimensional experiments use the diagonal categorical-evidence
approximation described below. It implements the three stages derived in
Section~\ref{sec:method}:
observation-adaptive component sampling, covariance-aware endpoint sampling,
and nonlinear correction with a flow update.
In Algorithm~\ref{alg:freesia}, $z_t^{(i)}$ denotes a deterministic numerical
flow particle, whereas $\zeta_1^{(i,s)}$ denotes its $s$th stochastic endpoint
proposal.

\begin{algorithm}[H]
\caption{\method{}: Training-Free Ensemble Posterior Flow}
\label{alg:freesia}
\scriptsize
\begin{algorithmic}[1]
\Require Forecast ensemble \(\mathcal E^f=\{x_j^f\}_{j=1}^M\), observation \(y\),
observation operator \(h\), observation-noise covariance \(R\), GMM separation
\(\gamma\), flow steps \(T\), and endpoint importance samples
\(N_{\mathrm{IS}}\)
\State Construct the GMM component means \(\{\mu_j\}_{j=1}^M\) and the
localized/inflated covariance \(\Sigma^f=L_fL_f^\top\)
\State Set \(\Delta t=1/T\)
\State Draw \(z_0^{(i)}=L_f\xi_0^{(i)}\),
\(\xi_0^{(i)}\sim\mathcal N(0,I)\), for \(i=1,\ldots,M\)
\For{\(n=0,\ldots,T-1\)}
  \State \(t\gets n\Delta t\)
  \For{\(i=1,\ldots,M\)}
    \State Compute
    \(\{r_{J\mid Z_t}(j\mid z_t^{(i)}),\mu_{1\mid t,j}(z_t^{(i)})\}_{j=1}^M\)
    and \(\Sigma_{1\mid t}\) using
    Eq.~\ref{eq:conditional-gmm-decomposition}
    \State Compute
    \(\{H_{t,j}^{(i)},S_{t,j}^{(i)},q_{J\mid Z_t,Y}(j\mid z_t^{(i)},y),G_{t,j}^{(i)}\}_{j=1}^M\)
    using Eqs.~\ref{eq:proposal-parameters}--\ref{eq:proposal-components}
    \For{\(s=1,\ldots,N_{\mathrm{IS}}\)}
      \State \textbf{Step 1: Observation-adaptive component sampling.}
      Draw \(j^{(i,s)}\sim q_{J\mid Z_t,Y}(\cdot\mid z_t^{(i)},y)\)
      \State \textbf{Step 2: Covariance-aware endpoint sampling.}
      Draw \(\zeta_1^{(i,s)}\sim q_{Z_1\mid J,Z_t,Y}
      (\cdot\mid j^{(i,s)},z_t^{(i)},y)\)
      using Eq.~\ref{eq:stochastic-proposal} in the sampling paragraph below
    \EndFor
    \State \textbf{Step 3: Nonlinear correction and flow update.} Compute
    \(\widetilde w^{(i,s)}=p(y\mid \zeta_1^{(i,s)})/
    p_{\rm lin}(y\mid \zeta_1^{(i,s)},j^{(i,s)},z_t^{(i)})\)
    for \(s=1,\ldots,N_{\mathrm{IS}}\), and normalize
    \(w^{(i,s)}=\widetilde w^{(i,s)}/
    \sum_{r=1}^{N_{\mathrm{IS}}}\widetilde w^{(i,r)}\)
    \State Set \(\widehat\mu_{1\mid t}^a=
    \sum_{s=1}^{N_{\mathrm{IS}}}w^{(i,s)}\zeta_1^{(i,s)}\) and
    \(\widehat v_t^a=(\widehat\mu_{1\mid t}^a-z_t^{(i)})/(1-t)\)
    \State Update
    \(z_{t+\Delta t}^{(i)}=z_t^{(i)}+\Delta t\,\widehat v_t^a\)
  \EndFor
\EndFor
\State Set $x^{a,(i)}\gets z_1^{(i)}$ for $i=1,\ldots,M$
\State \Return $\mathcal E^a=\{x^{a,(i)}\}_{i=1}^{M}$
\end{algorithmic}
\end{algorithm}

\paragraph{Sampling and numerical integration.}
Factor \(\Sigma^f=L_fL_f^\top\) once per assimilation cycle. For a sampled
component \(J^{(s)}=j\), draw \(\xi^{(s)}\sim\mathcal N(0,I)\) and
\(\epsilon^{(s)}\sim\mathcal N(0,R)\) independently and use
\begin{equation}
\label{eq:stochastic-proposal}
\begin{aligned}
\widetilde Z_1^{(s)}
&=
\mu_{1\mid t,j}
+
\sqrt{\frac{(1-\gamma^2)(1-t)^2}{(1-t)^2+(1-\gamma^2)t^2}}\,
L_f\xi^{(s)},\\
Z_1^{(s)}
&=
\widetilde Z_1^{(s)}+
G_{t,j}\!\left[
y+\epsilon^{(s)}-h(\mu_{1\mid t,j})
-H_{t,j}(\widetilde Z_1^{(s)}-\mu_{1\mid t,j})
\right].
\end{aligned}
\end{equation}
Conditional on \(J^{(s)}=j\), this samples exactly from
\(q_{Z_1\mid J,Z_t,Y}(\cdot\mid j,z_t,y)\); see
Appendix~\ref{app:stochastic-kalman}.

All likelihood-ratio weights are evaluated in the log domain,
\begin{equation}
\log\widetilde w^{(s)}
=\log p(y\mid Z_1^{(s)})
-\log p_{\mathrm{lin}}(y\mid Z_1^{(s)},J^{(s)},z_t),
\label{eq:log-importance-weight}
\end{equation}
and normalized by subtracting a log-sum-exp term (implemented as a stable
softmax) to avoid numerical underflow or overflow in high-dimensional
observation spaces.

With $t_n=n/T$ and $\Delta t=1/T$, each flow particle is advanced by
$z^{(i)}_{t_{n+1}}=z^{(i)}_{t_n}+\Delta t\,
\hat v^a_{t_n,N_{\mathrm{IS}}}(z^{(i)}_{t_n};y)$.
Although the endpoint-regression form in
Eq.~\ref{eq:finite-endpoint-regression} is written for $t<1$,
Appendix~\ref{app:is-control} shows that its apparent endpoint singularity is
removable under the stated regularity conditions. Algorithm~\ref{alg:freesia}
evaluates the velocity only at $t_n=n/T<1$ and advances the final Euler step
directly to $t=1$, exactly as analyzed in
Appendix~\ref{app:finite-computation}.

After $T$ flow steps, set $x^{a,(i)}:=z_1^{(i)}$ and
$\mathcal E^a:=\{x^{a,(i)}\}_{i=1}^M=\{z_1^{(i)}\}_{i=1}^M$.

Two covariance matrices play distinct roles. Recall from
Appendix~\ref{app:posterior-surrogate} that $C_{\nu_M}^f$ denotes the
covariance of the equal-weight empirical forecast measure. The standard
uniform-ensemble sample covariance is
\begin{equation}
\widehat C_M^f
:=
\frac1{M-1}
\sum_{j=1}^M
(x_j^f-\bar x^f)(x_j^f-\bar x^f)^\top.
\end{equation}
The factor \(1/M\) in \(C_{\nu_M}^f\) arises because it is the covariance of
the equal-weight empirical probability measure, whereas \(\widehat C_M^f\)
uses the standard unbiased \(1/(M-1)\) ensemble covariance employed in the
practical localized/inflated update.
The practical covariance used by FREESIA is
\begin{equation}
\Sigma^f
=
\lambda_{\mathrm{infl}}
\bigl(\Lambda\odot\widehat C_M^f\bigr).
\end{equation}
The covariance-modification term in Corollary~\ref{cor:posterior-surrogate}
measures the discrepancy between these choices. The density and responsibility
formulas assume \(\Sigma^f\succ0\), whereas the surrogate Wasserstein bound in
Lemma~\ref{lem:gmm-forecast} only requires \(\Sigma^f\succeq0\). In
implementation, a small numerical jitter is added when necessary to obtain a
positive-definite factorization.

The same factor $L_f$ generates source and endpoint noise and evaluates bridge
responsibilities. Because the determinant is shared across components,
\begin{equation}
\log r_j
=
c(z_t,t)
+\log\pi_j
-
\frac{
\|L_f^{-1}(z_t-t\mu_j)\|^2
}{
2[(1-t)^2+(1-\gamma^2)t^2]
},
\end{equation}
where \(c(z_t,t)\) is independent of the component label \(j\).

\paragraph{Diagonal evidence approximation.}
\label{app:diagonal-evidence}
For large observation dimension, component-label adaptation may use \(S_{t,j}\mapsto\operatorname{Diag}(\operatorname{diag}S_{t,j})\). With
\begin{equation}
e_j=\mathcal N(y;h(\mu_{1\mid t,j}),S_{t,j}),
\qquad
\widetilde e_j=
\mathcal N\!\left(
y;h(\mu_{1\mid t,j}),
\operatorname{Diag}(\operatorname{diag}S_{t,j})
\right),
\end{equation}
let \(q_j=r_je_j/A\), \(\widetilde q_j=r_j\widetilde e_j/\widetilde A\), \(A=\sum_jr_je_j\), and \(\widetilde A=\sum_jr_j\widetilde e_j\). Then
\begin{equation}
\label{eq:diagonal-categorical-local-bound}
\|q-\widetilde q\|_1
\le
\frac{2}{A}
\sum_{j=1}^M r_j|e_j-\widetilde e_j|.
\end{equation}
This diagonal component-evidence approximation is used in all reported
Lorenz--96 and Kolmogorov-flow \method{} experiments. It changes only the
categorical component-adaptation probabilities; the conditional endpoint
proposal in Eqs.~\ref{eq:proposal-parameters}--\ref{eq:proposal-components} is
otherwise unchanged. The nonlinear likelihood-ratio correction alone does not
remove the categorical approximation introduced by replacing \(e_j\) with
\(\widetilde e_j\). Its induced population-flow error is analyzed in
Appendix~\ref{app:diagonal-evidence-error}.

\paragraph{Block-structured forecast surrogate for Kolmogorov flow.}
For the \(128\times128\) Kolmogorov-flow experiments, the practical
forecast-surrogate covariance is represented by a block-structured
approximation over the full \(d=32768\) state. The spatial domain is partitioned
into 16 non-overlapping blocks indexed by \(b=1,\ldots,16\), and the localized
covariance is factorized independently within each block:
\begin{equation}
\label{eq:kolmogorov-block-covariance}
\begin{aligned}
\Sigma^f_{\mathrm{blk}}
&=
\operatorname{blkdiag}
\!\left(
\Sigma^f_1,\ldots,\Sigma^f_{16}
\right),
&
L_{\mathrm{blk}}
&=
\operatorname{blkdiag}
\!\left(
L_1,\ldots,L_{16}
\right),
&
\Sigma^f_b&=L_bL_b^\top .
\end{aligned}
\end{equation}
For this benchmark, \(\Sigma^f_{\mathrm{blk}}\) is the covariance defining the
practical forecast surrogate, rather than an auxiliary covariance used only for
noise generation. The source law, bridge covariance, component
responsibilities, covariance--observation products, \(S_{t,j}\), \(G_{t,j}\),
and conditional endpoint sampling all use this same covariance. Accordingly,
the equations in Section~\ref{sec:method} and Algorithm~\ref{alg:freesia} are
interpreted with \(\Sigma^f=\Sigma^f_{\mathrm{blk}}\) and
\(L_f=L_{\mathrm{blk}}\) for this experiment. Matrix-free ensemble products
evaluate actions of this covariance without introducing a second covariance
model. The discrepancy between \(\Sigma^f_{\mathrm{blk}}\) and the empirical
forecast covariance is included in the covariance-modification contribution in
Corollary~\ref{cor:posterior-surrogate}. The nonlinear likelihood-ratio
correction is evaluated globally over the complete observed vector for each
endpoint proposal, rather than normalized independently by block.

\paragraph{Computational complexity.}
Let $M$, $T$, $d$, and $m$ denote the ensemble size, number of generative
steps, state dimension, and observation dimension, respectively. We separate the cost
of one component-conditioned proposal into $C_{\mathrm{cond}}$ for constructing
the observation-space conditional quantities, including
$S_{t,j}=H_{t,j}\Sigma_{1\mid t}H_{t,j}^{\top}+R$ and its linear-solve
factorization; $C_{\mathrm{end}}$ for drawing one conditional endpoint with
Eq.~\ref{eq:stochastic-proposal} after these quantities are available; and
$C_{\mathrm{like}}$ for evaluating the nonlinear likelihood-ratio correction.
The leading analysis work is therefore
\begin{equation}
O\!\left(TM^2C_{\mathrm{cond}}+
TMN_{\mathrm{IS}}(C_{\mathrm{end}}+C_{\mathrm{like}})\right).
\label{eq:general-analysis-complexity}
\end{equation}
For structured covariance and sampling-factor actions that cost $O(d)$,
component-wise or row-sparse observation Jacobians, and a direct dense solve
in observation space, $m$ covariance--observation products and factorization
of the $m\times m$ matrix $S_{t,j}$ give
$C_{\mathrm{cond}}=O(dm+m^3)$. Once this factorization is available, a
structured noise draw, observation-space triangular solve, and covariance
back-projection give $C_{\mathrm{end}}=O(d+m^2)$. For component-wise nonlinear
observations and diagonal $R$, as in the reported high-dimensional
experiments, $C_{\mathrm{like}}=O(m)$. With $m\le d$, a representative
structured realization thus has
\begin{equation}
O\!\left[TM^2(dm+m^3)+TMN_{\mathrm{IS}}(d+m^2)\right]
\label{eq:structured-analysis-complexity}
\end{equation}
analysis work, up to lower-order terms and method-dependent Jacobian costs.
A matrix-free realization avoids storing the full $d\times m$
state--observation covariance and uses
$O(Md+S_{\Sigma}+m^2)$ memory, where $S_{\Sigma}$ is structured-covariance
storage; explicitly storing $\Sigma H^{\top}$ adds $O(dm)$ memory. Dense
observation Jacobians or alternative linear solvers change these
$m$-dependent costs. Appendix~\ref{app:highdim-scaling} studies the diagonal
specialization that removes the dense observation-space algebra.

\subsection{Error analysis for the diagonal component-evidence approximation}
\label{app:diagonal-evidence-error}

Fix \(t<1\), a flow state \(z_t\), and an observation \(y\). Write
\(r_j:=r_{J\mid Z_t}(j\mid z_t)\) and
\(\phi_j(z_1):=\phi_{1\mid t,j}(z_1\mid z_t)\), and abbreviate
\(S_j:=S_{t,j}\). Define the diagonal evidence covariance and the corresponding
component probabilities by
\begin{equation}
\label{eq:diagonal-evidence-definitions}
\begin{aligned}
D_j&:=\operatorname{Diag}(\operatorname{diag}S_j),\\
e_j
&:=\int \phi_j(z_1)
p_{\mathrm{lin}}(y\mid z_1,j,z_t)\,dz_1
=\mathcal N\!\left(y;h(\mu_{1\mid t,j}),S_j\right),\\
\widetilde e_j
&:=\mathcal N\!\left(y;h(\mu_{1\mid t,j}),D_j\right),\\
q_j&:=\frac{r_je_j}{\sum_k r_ke_k},
\qquad
\widetilde q_j:=\frac{r_j\widetilde e_j}{\sum_k r_k\widetilde e_k}.
\end{aligned}
\end{equation}
The conditional endpoint density \(q(z_1\mid j)\) remains the Gaussian
conditional in Eq.~\ref{eq:proposal-components}; only the categorical component
probabilities are approximated.

\paragraph{Diagonal evidence and corrected component weights.}
Let
\begin{equation}
\label{eq:nonlinear-component-laws}
a_j:=\int \phi_j(z_1)p(y\mid z_1)\,dz_1,
\qquad
\pi_j^a(dz_1):=
\frac{\phi_j(z_1)p(y\mid z_1)}{a_j}\,dz_1.
\end{equation}
The full-evidence corrected endpoint law can be written as
\begin{equation}
\label{eq:full-corrected-mixture}
P^a(dz_1)=\sum_{j=1}^M\alpha_j\pi_j^a(dz_1),
\qquad
\alpha_j:=\frac{r_ja_j}{\sum_k r_ka_k}.
\end{equation}
For the diagonal categorical proposal, the conditional density and nonlinear
correction are
\[
q(z_1\mid j)
=\frac{\phi_j(z_1)p_{\mathrm{lin}}(y\mid z_1,j,z_t)}{e_j},
\qquad
g(j,z_1)=\frac{p(y\mid z_1)}
{p_{\mathrm{lin}}(y\mid z_1,j,z_t)}.
\]
Consequently,
\begin{equation}
\label{eq:diagonal-corrected-factor}
\begin{aligned}
\widetilde q_jq(z_1\mid j)g(j,z_1)
&\propto
r_j\widetilde e_j
\frac{\phi_j(z_1)p_{\mathrm{lin}}(y\mid z_1,j,z_t)}{e_j}
\frac{p(y\mid z_1)}{p_{\mathrm{lin}}(y\mid z_1,j,z_t)}\\
&=r_j\frac{\widetilde e_j}{e_j}\phi_j(z_1)p(y\mid z_1).
\end{aligned}
\end{equation}
With \(c_j:=\widetilde e_j/e_j\), the resulting population law is
\begin{equation}
\label{eq:diagonal-corrected-mixture}
P_{\mathrm{diag}}^a(dz_1)
=\sum_{j=1}^M\widetilde\alpha_j\pi_j^a(dz_1),
\qquad
\widetilde\alpha_j
:=\frac{r_ja_jc_j}{\sum_k r_ka_kc_k}
=\frac{\alpha_jc_j}{\sum_k\alpha_kc_k}.
\end{equation}
Thus diagonal evidence leaves the corrected within-component posterior laws
\(\pi_j^a\) unchanged and perturbs only their mixture weights.

\paragraph{Evidence and component-weight error.}
Let \(d_j:=y-h(\mu_{1\mid t,j})\) and
\(\eta_j:=\log(\widetilde e_j/e_j)\). Directly from the two Gaussian
evidences,
\begin{equation}
\label{eq:diagonal-log-evidence-error}
\eta_j
=\frac12\left[
\log\frac{|S_j|}{|D_j|}
+d_j^\top(S_j^{-1}-D_j^{-1})d_j
\right].
\end{equation}
Define the whitened off-diagonal perturbation and residual by
\begin{equation}
\label{eq:diagonal-whitened-perturbation}
F_j:=D_j^{-1/2}(S_j-D_j)D_j^{-1/2},
\qquad
\xi_j:=D_j^{-1/2}d_j.
\end{equation}
Since \(S_j=D_j^{1/2}(I+F_j)D_j^{1/2}\),
Eq.~\ref{eq:diagonal-log-evidence-error} is equivalently
\begin{equation}
\label{eq:diagonal-whitened-evidence-error}
\eta_j=\frac12\left[
\log\det(I+F_j)
+\xi_j^\top\bigl((I+F_j)^{-1}-I\bigr)\xi_j
\right].
\end{equation}
Thus the error is determined by omitted observation-space correlations through
a log-determinant term and a correlated Mahalanobis-residual term. If
\(\rho_j:=\|F_j\|_2<1\), then \(F_j\) has zero diagonal and
\(\operatorname{tr}(F_j)=0\). The eigenvalue expansion of
\(\log\det(I+F_j)\) and the resolvent identity give
\begin{equation}
\label{eq:diagonal-evidence-operator-bounds}
\left|\log\det(I+F_j)\right|
\le\frac{\|F_j\|_F^2}{2(1-\rho_j)},
\qquad
\left\|(I+F_j)^{-1}-I\right\|_2
\le\frac{\rho_j}{1-\rho_j}.
\end{equation}
Therefore
\begin{equation}
\label{eq:diagonal-log-evidence-bound}
|\eta_j|
\le
\frac{\|F_j\|_F^2}{4(1-\rho_j)}
+\frac{\rho_j}{2(1-\rho_j)}\|\xi_j\|^2.
\end{equation}

The relevant perturbation measure is the oscillation
\begin{equation}
\label{eq:evidence-error-oscillation}
\omega_e(t,z_t):=\max_j\eta_j-\min_j\eta_j,
\end{equation}
because a common additive shift in the log evidences cancels under categorical
normalization. Centering the \(\eta_j\) values within their range and applying
the bounded density-ratio inequality to the normalized exponential tilt in
Eq.~\ref{eq:diagonal-corrected-mixture} yields
\begin{equation}
\label{eq:diagonal-component-weight-bound}
\|\widetilde\alpha-\alpha\|_1
\le 2\tanh\!\left(\frac{\omega_e}{2}\right).
\end{equation}

\paragraph{Endpoint, velocity, and terminal-law error.}
Define
\begin{equation}
\label{eq:diagonal-component-means}
m_j^a(t,z_t;y):=\int z_1\pi_j^a(dz_1),
\qquad
M_a(t,z_t):=\max_j\|m_j^a-\mu_{1\mid t}^a\|.
\end{equation}
Then
\(\mu_{1\mid t}^a=\sum_j\alpha_jm_j^a\) and
\(\mu_{1\mid t}^{a,\mathrm{diag}}=\sum_j\widetilde\alpha_jm_j^a\).
Using \(\sum_j(\widetilde\alpha_j-\alpha_j)=0\) together with
Eq.~\ref{eq:diagonal-component-weight-bound} gives
\begin{equation}
\label{eq:diagonal-endpoint-mean-bound}
\|\mu_{1\mid t}^{a,\mathrm{diag}}-\mu_{1\mid t}^a\|
\le
2M_a(t,z_t)\tanh\!\left(\frac{\omega_e(t,z_t)}{2}\right).
\end{equation}
Thus the corresponding population velocities satisfy
\begin{equation}
\label{eq:diagonal-velocity-bound}
\|v_t^{a,\mathrm{diag}}-v_t^a\|
\le
\Delta_{\mathrm{diag}}(t,z_t)
:=
\frac{2M_a(t,z_t)}{1-t}
\tanh\!\left(\frac{\omega_e(t,z_t)}{2}\right).
\end{equation}

For the high-dimensional experiments, \(R\) is diagonal. The off-diagonal part
of \(S_{t,j}=H_{t,j}\Sigma_{1\mid t}H_{t,j}^\top+R\) is therefore generated
entirely by the bridge covariance term. Here the Kolmogorov-flow bridge uses the
same block-structured forecast-surrogate covariance defined in
Eq.~\ref{eq:kolmogorov-block-covariance}. Since
\(\Sigma_{1\mid t}=O((1-t)^2)\), bounded \(H_{t,j}\) gives
\(S_j-D_j=O((1-t)^2)\). Moreover,
\(D_j\succeq R\succeq r_0I\), so
\(\rho_j=O((1-t)^2)\). Under bounded normalized-residual moments,
Eqs.~\ref{eq:diagonal-log-evidence-bound}--\ref{eq:evidence-error-oscillation}
give \(\omega_e=O_{L^p}((1-t)^2)\). If \(M_a(t,z_t)\) remains finite near the
endpoint, then
\begin{equation}
\label{eq:diagonal-endpoint-contraction}
\Delta_{\mathrm{diag}}(t,z_t)=O_{L^p}(1-t).
\end{equation}
The diagonal component-evidence approximation therefore introduces no additional
\(1/(1-t)\)-type endpoint singularity in the population velocity.

Let \(Z_t\) and \(Z_t^{\mathrm{diag}}\) solve the full- and diagonal-evidence
population ODEs from the same source variable, and let \(p_{\mathrm{diag}}^a\)
denote the terminal law of the latter. If the full velocity is
\(L_v\)-Lipschitz in the state, Gr\"onwall's inequality under this synchronous
coupling gives
\begin{equation}
\label{eq:diagonal-terminal-error}
\begin{aligned}
W_1(p_{\mathrm{diag}}^a,\widetilde p^a)
&\le \varepsilon_{\mathrm{diag}}(y),\\
\varepsilon_{\mathrm{diag}}(y)
&:=\int_0^1 e^{L_v(1-t)}
\mathbb E\!\left[
\frac{2M_a(t,Z_t^{\mathrm{diag}})}{1-t}
\tanh\!\left(
\frac{\omega_e(t,Z_t^{\mathrm{diag}})}{2}
\right)
\right]dt.
\end{aligned}
\end{equation}
Under the same moment conditions, the endpoint contraction in
Eq.~\ref{eq:diagonal-endpoint-contraction} makes the integrand locally
integrable near \(t=1\).

Let $\widehat p^a_{\mathrm{diag},N_{\mathrm{IS}},T}$ denote the law produced by
the corresponding finite diagonal-evidence update with $N_{\mathrm{IS}}$
endpoint proposals per velocity evaluation and $T$ Euler steps.

Under the bounded-residual nonlinear conditions of
Appendix~\ref{app:is-control} and the corresponding numerical-flow regularity
conditions, the same finite-computation analysis applies to the
diagonal-evidence population flow. For any $0<\delta\le1$ satisfying
$\delta\kappa<1$,
\begin{equation}
\label{eq:diagonal-practical-error}
W_1\!\left(
\widehat p^a_{\mathrm{diag},N_{\mathrm{IS}},T},p^a
\right)
\le
C_\delta^{\mathrm{diag}}N_{\mathrm{IS}}^{-\delta/(1+\delta)}
+C_2^{\mathrm{diag}}T^{-1}
+\varepsilon_{\mathrm{diag}}(y)
+\varepsilon_{\mathrm{sur}}(y).
\end{equation}
If $\kappa<1$, the finite-sampling term is
$O(N_{\mathrm{IS}}^{-1/2})$. For affine observations, the local likelihood
correction has $g\equiv1$ by Eq.~\ref{eq:affine-unit-weight}. The diagonal
categorical evidence can still differ from the full evidence, but the same
ordinary Monte Carlo argument gives
\begin{equation}
\label{eq:affine-diagonal-practical-error}
W_1\!\left(
\widehat p^a_{\mathrm{diag},N_{\mathrm{IS}},T},p^a
\right)
\le
C_1^{\mathrm{diag}}N_{\mathrm{IS}}^{-1/2}
+C_2^{\mathrm{diag}}T^{-1}
+\varepsilon_{\mathrm{diag}}(y)
+\varepsilon_{\mathrm{sur}}(y).
\end{equation}
For more general unbounded nonlinear maps, the accumulated-SNIS alternative
applies under its stated condition.
The four terms are, respectively, finite endpoint sampling, flow
discretization, diagonal component-evidence approximation, and forecast-surrogate
error. Equation~\ref{eq:diagonal-categorical-local-bound} quantifies the local
source of the diagonal approximation, but by itself does not imply a
dimension-independent terminal bound on \(\varepsilon_{\mathrm{diag}}(y)\).

\paragraph{Collapsed-Gaussian special case.}
When \(\gamma=0\), all component means and bridge distributions coincide. The
full and diagonal evidences are then each independent of the component label,
so \(c_j=\widetilde e_j/e_j\) is common to all components and cancels under
categorical normalization. Consequently,
\begin{equation}
\label{eq:diagonal-collapsed-gaussian}
\gamma=0
\quad\Longrightarrow\quad
\widetilde\alpha_j=\alpha_j,
\quad
\omega_e=0,
\quad
\varepsilon_{\mathrm{diag}}(y)=0.
\end{equation}
This exact special case applies to the stride-2 and stride-4 Lorenz--96 regimes,
which use \(\gamma=0\). The dense Lorenz--96 and Kolmogorov-flow experiments use
\(\gamma=0.3\) and \(\gamma=0.2\), respectively, so no zero-error conclusion is
made for those regimes.

\section{Experimental Details}
\label{app:experiments}

This appendix summarizes the dynamical systems, initialization protocols,
and numerical settings used in the three benchmarks. Within each repetition,
all compared methods use the same truth trajectory and observation locations.
The Double-Well, Lorenz--96, and Kolmogorov-flow comparisons average
\(N_{\rm run}=10\) independent trajectories and all analysis times.

\paragraph{Baseline configuration.}
The numerical comparisons include EnKF-MDA
\citep{emerick2012enkfmda}, EnKF \citep{evensen2003enkf}, LETKF
\citep{hunt2007letkf}, EnSF \citep{bao2024ensf}, and the EnFF variants
\citep{transue2025enff}.
Hyperparameters were selected using reduced-cost pilot runs on trajectories
generated independently from those used in the final reported evaluation.
Within the final evaluation set, all compared methods use matched truth
trajectories and observation locations. Tables~\ref{tab:l96_loc}
and~\ref{tab:kf_loc} report covariance localization and inflation settings only
for \method{} and the Kalman-type baselines. EnSF and EnFF use their
method-specific sampling/guidance hyperparameters, which are reported separately
where relevant. Method-specific sampling budgets are specified in
Sections~B.1--B.3.

EnSF-LR is not included in the numerical comparison. Its
observed-to-unobserved update relies on ensemble-covariance linear regression
\citep{xiong2026ensflr}, while the released implementation does not include
localization for this step. In our high-dimensional small-ensemble settings,
the observed-state sample covariance is necessarily rank deficient, so
extending EnSF-LR would require additional localization or regularization
choices beyond the released formulation. We therefore discuss it as closely
related work without introducing a separately modified baseline.

IEnSF is a closely related covariance-aware extension of EnSF. We implemented
its published formulation but could not independently validate the resulting
performance against a reference implementation; we therefore omit numerical
results and restrict the comparison to methodological differences supported by
the published formulation. The experiments reported here should not be
interpreted as an empirical comparison with IEnSF.

\subsection{Double-Well}
\label{app:doublewell}

Following \citet{ding2026ssls}, the double-well benchmark is
\[
X_{k+1}
=
X_k-\delta t\,U'(X_k)
+\beta\sqrt{\delta t}\,V_k,
\qquad
U(x)=x^4-2x^2,
\]
where \(V_k\sim\mathcal N(0,1)\), \(\delta t=0.1\), and \(\beta=0.8\).
The initial filtering distribution is \(X_0\sim\mathcal N(0,0.5^2)\).
Observations are generated by
\[
Y_k=|X_k-0.4|+\epsilon_k,
\qquad
\epsilon_k\sim\mathcal N(0,0.1^2).
\]

The experiment uses 100 analysis steps and an ensemble size of 100.
A 10,000-point grid on \([-2.5,2.5]\) provides the numerical Bayes reference.
\method{} uses \(\gamma=0.9\), 20 flow steps, and \(N_{\mathrm{IS}}=10\)
endpoint importance samples.

For a density $p$ and observation kink $x_0=0.4$, the implemented left and
right mode masses are
\[
P_L(p)=\int_{x<x_0}p(x)\,dx,
\qquad
P_R(p)=\int_{x\ge x_0}p(x)\,dx.
\]
With $p_{\rm ref}$ denoting the grid Bayes posterior, the reported mode-mass
error is
\begin{equation}
E_{\rm mode}(p,p_{\rm ref})
=\frac12\left(
|P_L(p)-P_L(p_{\rm ref})|+|P_R(p)-P_R(p_{\rm ref})|
\right).
\label{eq:mode-mass-error}
\end{equation}
This posterior-shape metric is distinct from RMSE and CRPS, which evaluate
state or predictive accuracy.

\begin{figure}[!htbp]
\centering
\includegraphics[width=0.66\linewidth]{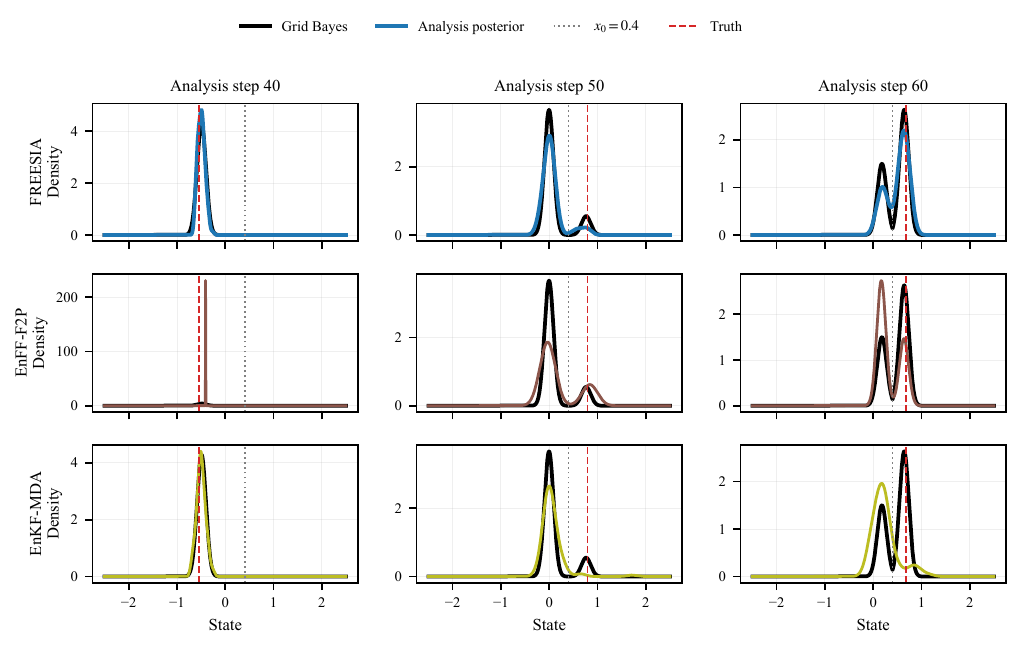}
\caption{Double-well posterior densities at analysis steps 40, 50, and 60. Rows show FREESIA, EnFF-F2P, and EnKF-MDA against the grid Bayes posterior.}
\label{fig:double_well_posterior}
\end{figure}

Figure~\ref{fig:double-well-grid-search} shows that separating the GMM
components is important in this multimodal regime: performance improves
markedly from the collapsed Gaussian limit, while about 20 flow steps already
capture most of the gain.

\begin{figure}[!htbp]
\centering
\includegraphics[width=1.00\linewidth]{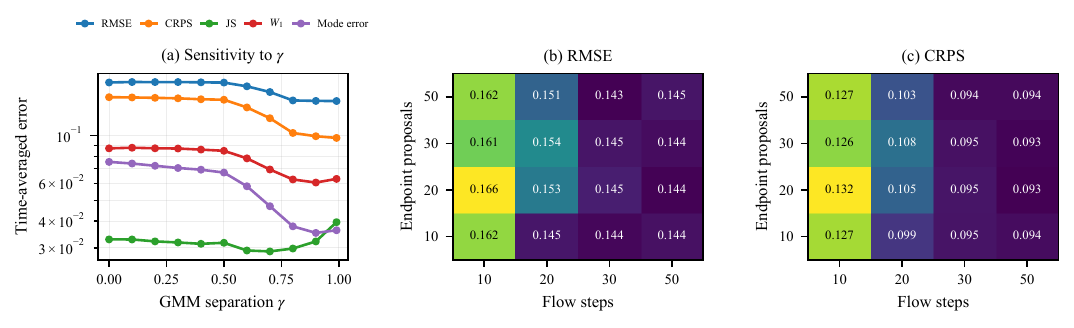}
\caption{Double-well hyperparameter search with $N_{\rm run}=3$ per setting. Left: all five reported errors versus the GMM separation parameter $\gamma$, with a logarithmic vertical axis; the rightmost point is $\gamma=0.99$. Center and right: RMSE and CRPS over flow steps and endpoint importance samples $N_{\mathrm{IS}}$. Lower is better throughout; values average all search runs and analysis times.}
\label{fig:double-well-grid-search}
\end{figure}
\begin{figure}[!htbp]
\centering
\includegraphics[width=0.98\linewidth]{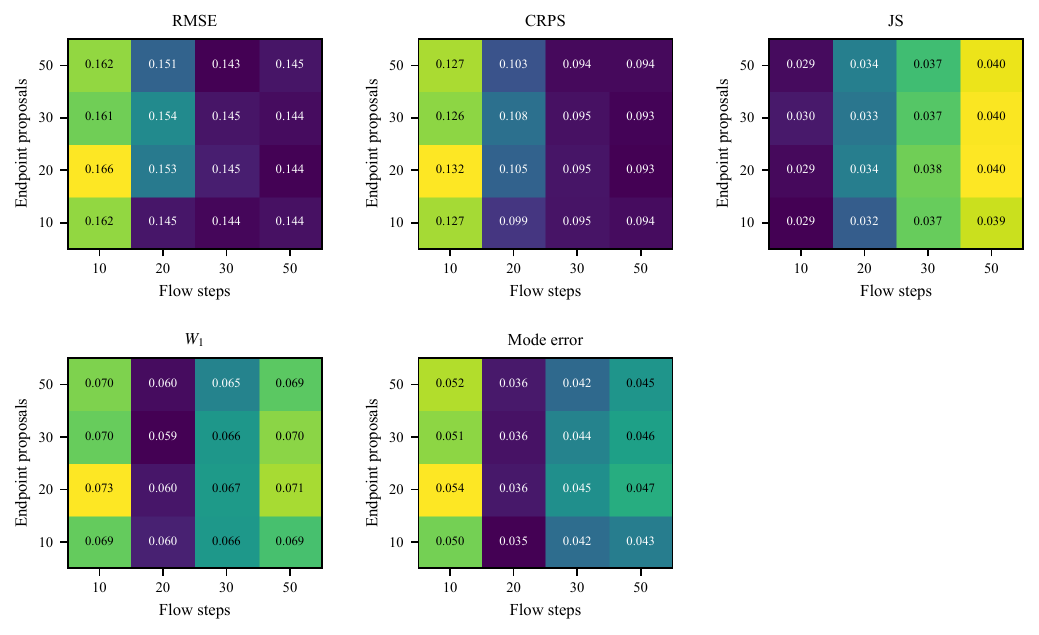}
\caption{Complete double-well hyperparameter search over flow steps and endpoint importance samples \(N_{\mathrm{IS}}\). The panels report RMSE, CRPS, Jensen--Shannon divergence, Wasserstein-1 distance, and mode-mass error; lower is better for every metric. Cell values are averaged over the hyperparameter-search repetitions and analysis times.}
\label{fig:doublewell_hyperparameter_full}
\end{figure}
\FloatBarrier

\subsection{Lorenz--96}
\label{app:l96}

We use the Lorenz--96 system \citep{lorenz2006predictability}, whose state
\(X=(x_1,\ldots,x_d)^\top\) evolves according to
\[
\frac{dx_i}{dt}
=
(x_{i+1}-x_{i-2})x_{i-1}
-x_i
+F,
\qquad
i=1,\ldots,d,
\]
with cyclic indexing, \(F=8\), and \(d=1000\).
Before assimilation, the system is integrated for 10 physical time units so
that initialization is taken from the chaotic attractor. Truth trajectories
and initial forecast ensembles are drawn from this post-spin-up regime rather
than formed by small perturbations of the truth. The spin-up interval is
excluded from all reported filtering statistics.

The truth model uses \(\Delta t_{\rm truth}=0.01\), while all forecast models
use \(\Delta t_{\rm model}=0.02\). Observations arrive every 0.2 time units,
with an ensemble size of 20 and 101 analysis times. We consider three
observation regimes: dense \(|\arctan(x_i)|\) observations with noise standard
deviation 0.05, stride-2 \(\arctan(x_{2i})\) observations with noise standard
deviation 0.05, and stride-4 \(\arctan(x_{4i})\) observations with noise
standard deviation 0.01.
The stride-4 experiment is a high-SNR sparse-observation stress test. Its
directly observed coordinates are strongly constrained, making the experiment
diagnostic of information transfer to the substantially larger unobserved
subset; the smaller observation covariance also produces a sharper likelihood
for guidance and posterior correction.

All \method{} configurations in the main Lorenz--96 comparison use 20 flow steps and
\(N_{\mathrm{IS}}=5\) endpoint importance samples.
EnKF-MDA uses 20 analysis substeps, EnSF uses 500 reverse-SDE steps, and each
EnFF variant uses 20 path grid points. These are the computational budgets
used in the comparison rather than matched-cost settings. The dense \method{}
experiment uses \(\gamma=0.3\), whereas both sparse regimes use the collapsed
single-Gaussian limit \(\gamma=0\).

\paragraph{Localization and inflation.}
Table~\ref{tab:l96_loc} reports the localization radius and inflation used in
the three Lorenz--96 regimes. Radius is measured in cyclic Lorenz--96 grid
points. \method{}, EnKF, EnKF-MDA, and LETKF use cyclic Gaspari--Cohn
localization through their respective covariance or ensemble updates.

\begin{table}[!htbp]
\centering
\scriptsize
\caption{Localization radius \(r_{\mathrm{loc}}\) and inflation parameter \(\lambda_{\mathrm{infl}}\) in the three Lorenz--96 experiments. GC denotes cyclic Gaspari--Cohn localization.}
\label{tab:l96_loc}
\begin{tabular}{lrrrrrr}
\toprule
Method & \multicolumn{2}{c}{Dense abs-atan} & \multicolumn{2}{c}{Stride-2 atan} & \multicolumn{2}{c}{Stride-4 atan} \\
\cmidrule(lr){2-3}\cmidrule(lr){4-5}\cmidrule(lr){6-7}
& \(r_{\mathrm{loc}}\) & \(\lambda_{\mathrm{infl}}\) & \(r_{\mathrm{loc}}\) & \(\lambda_{\mathrm{infl}}\) & \(r_{\mathrm{loc}}\) & \(\lambda_{\mathrm{infl}}\) \\
\midrule
\method{} & 4 & 1.8 & 4 & 1.2 & 2 & 1.0 \\
EnKF & 2 & 1.1 & 4 & 1.2 & 3 & 1.1 \\
EnKF-MDA & 3 & 1.6 & 3 & 1.1 & 2 & 1.2 \\
LETKF & 2 & 1.2 & 2 & 1.5 & 2 & 1.3 \\
\bottomrule
\end{tabular}
\end{table}
\FloatBarrier

\paragraph{Supplementary diagnostics.}
The observed/unobserved RMSE decomposition provides additional diagnostics for
the two sparse-observation regimes. Representative coordinate-wise trajectories
are reported for the dense and sparse regimes.

\begin{figure}[!htbp]
\centering
\includegraphics[width=0.92\linewidth]{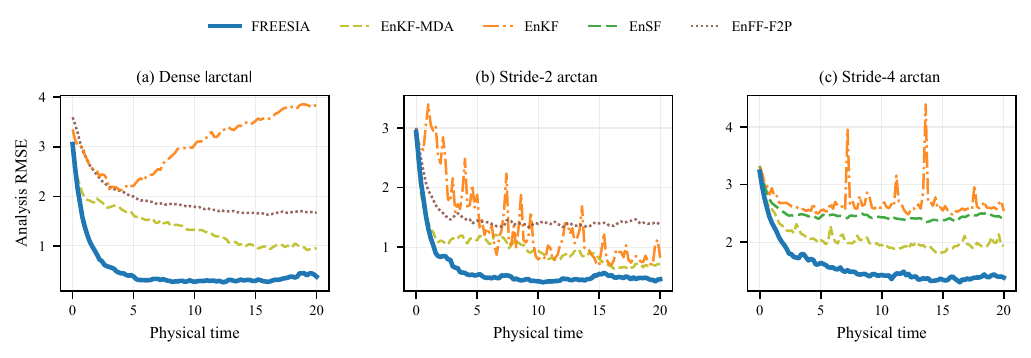}
\caption{Total analysis RMSE for dense absolute-arctangent, stride-2 arctangent, and stride-4 arctangent observations. Each panel shows the four methods with the lowest time-averaged RMSE in that regime; the shared legend lists the union of methods shown across panels.}
\label{fig:l96-rmse}
\end{figure}
\begin{figure}[!htbp]
\centering
\includegraphics[width=0.88\linewidth]{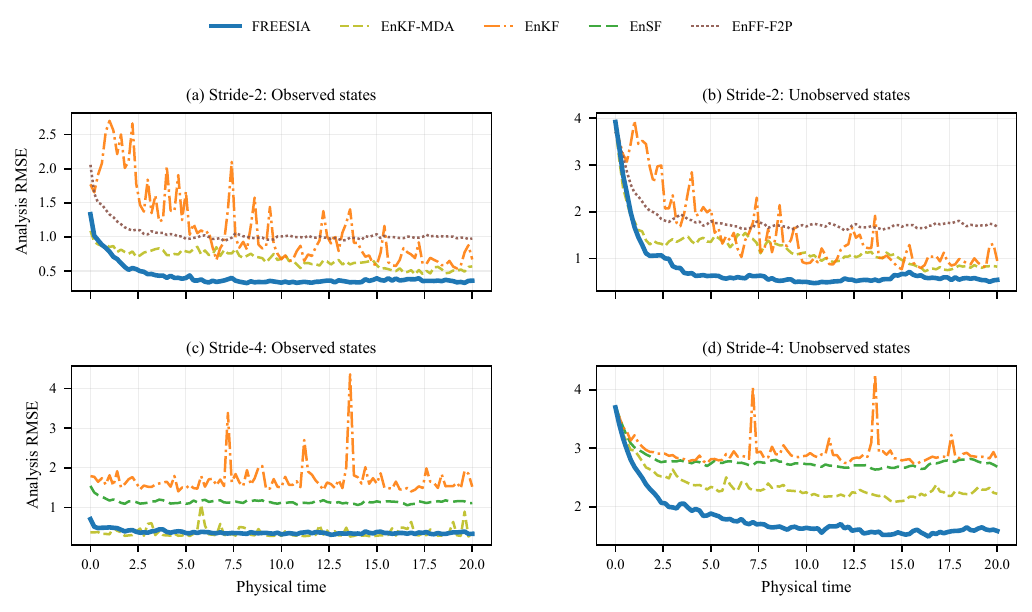}
\caption{Observed- and unobserved-coordinate RMSE for stride-2 (top) and stride-4 (bottom) observations. The right column directly evaluates information transfer to unobserved state coordinates.}
\label{fig:l96_obs_unobs}
\end{figure}

\begin{figure}[tbp]
\centering
\includegraphics[width=\linewidth]{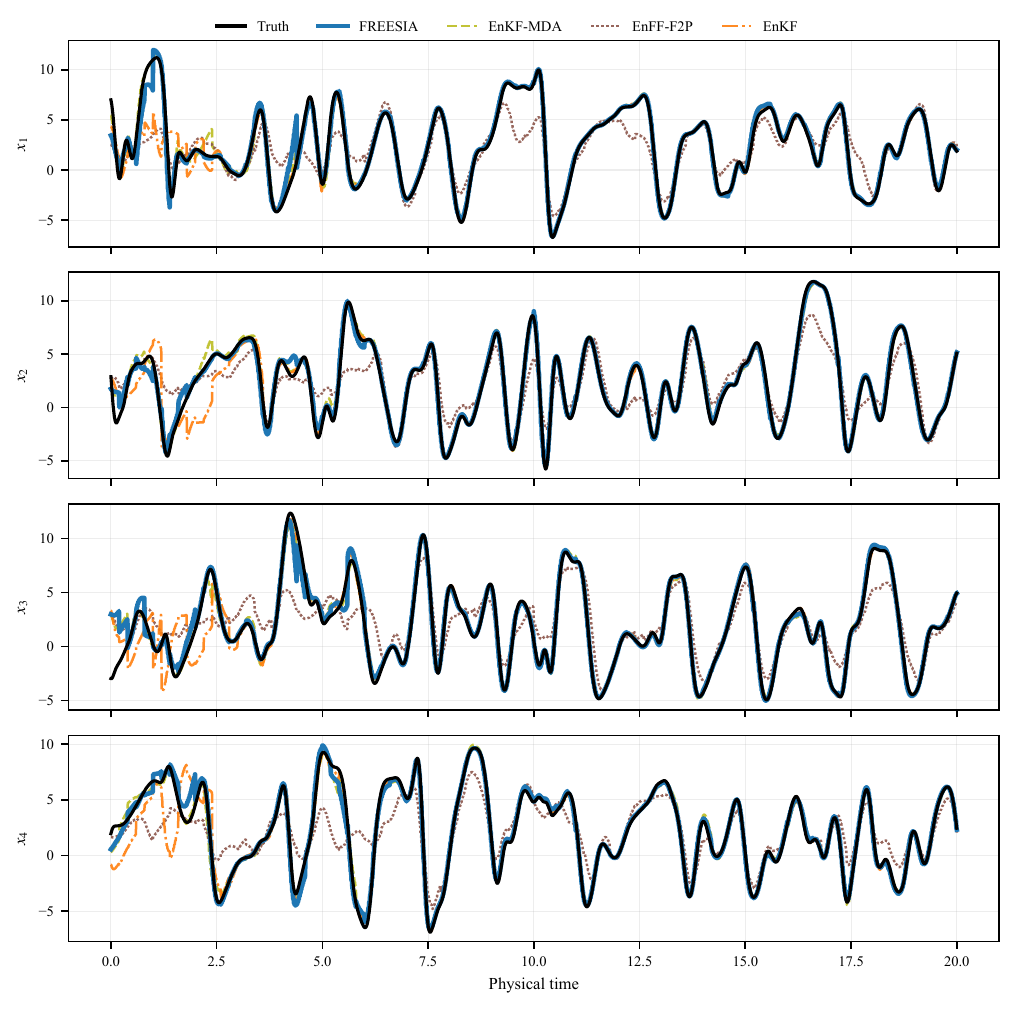}
\caption{Representative trajectories for the first four Lorenz--96 state coordinates under dense absolute-arctangent observations. The panels show the truth and analysis trajectories for the methods selected by the corresponding RMSE comparison.}
\label{fig:l96_dense_absatan_traj}
\end{figure}

\begin{figure}[tbp]
\centering
\includegraphics[width=\linewidth]{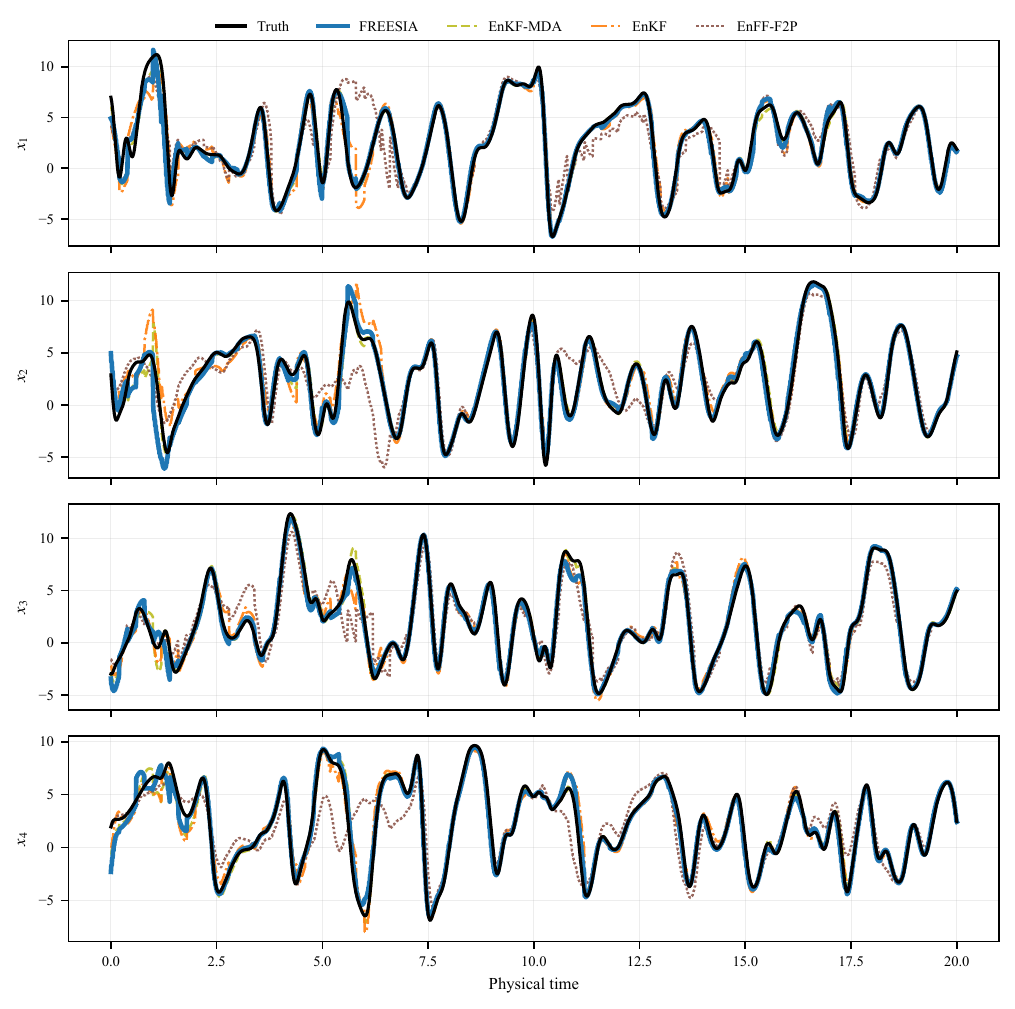}
\caption{Representative trajectories for the first four Lorenz--96 state coordinates under stride-2 arctangent observations. The panels show the truth and analysis trajectories for the methods selected by the corresponding RMSE comparison.}
\label{fig:l96_stride2_traj}
\end{figure}

\begin{figure}[tbp]
\centering
\includegraphics[width=\linewidth]{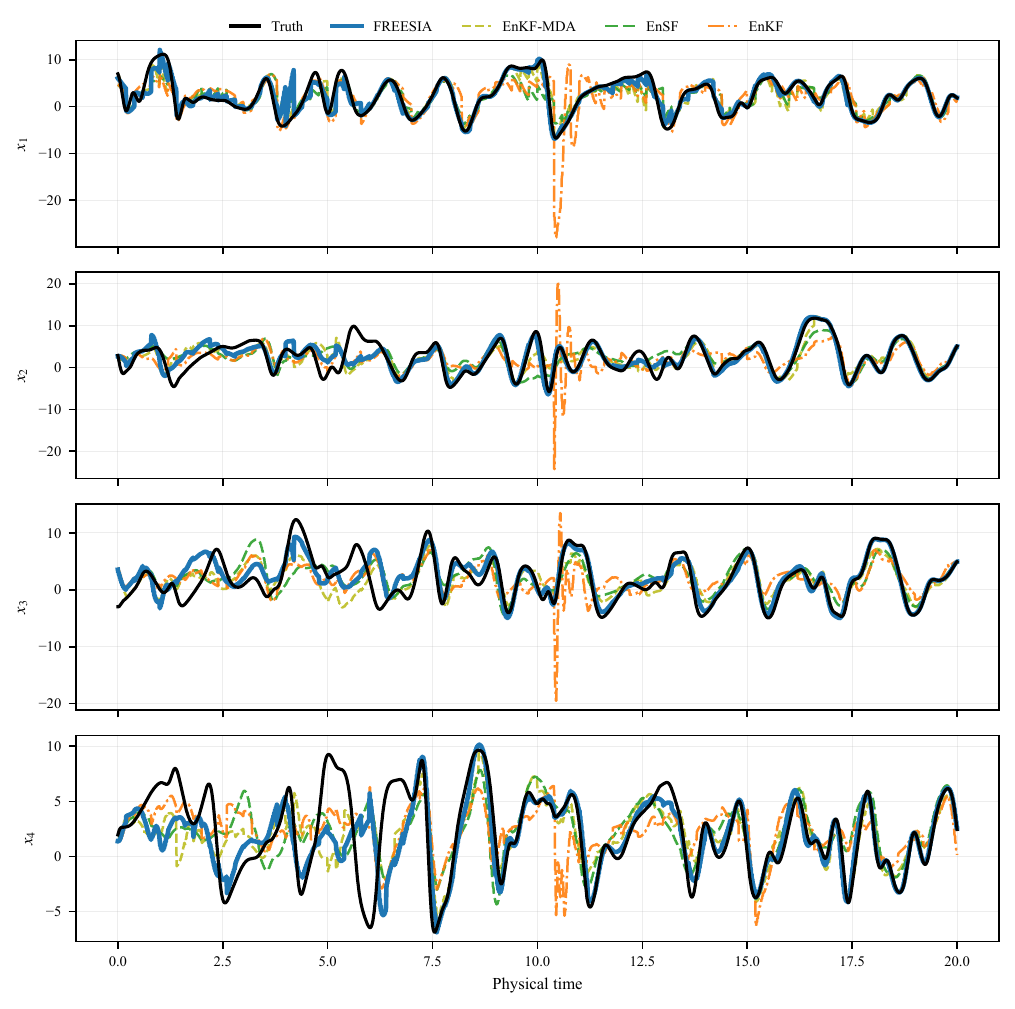}
\caption{Representative trajectories for the first four Lorenz--96 state coordinates under stride-4 arctangent observations. The panels show the truth and analysis trajectories for the methods selected by the corresponding RMSE comparison.}
\label{fig:l96_stride4_traj}
\end{figure}
\FloatBarrier

\paragraph{Stride-4 ablation.}
We repeat the stride-4 experiment with two targeted variants. The first replaces
the localized forecast covariance by its diagonal, removing cross-coordinate
covariances while retaining marginal variances. The second replaces the nonlinear
likelihood-ratio weights by uniform endpoint weights. Across the same 10
initializations and 101 analyses, the time-averaged RMSE increases from $1.575$
for FREESIA to $2.352$ with diagonal covariance and $2.969$ without likelihood
correction. The diagonal variant mainly degrades the unobserved coordinates
($2.702$ versus $1.805$), while the no-correction variant decreases initially
and then accumulates error, reaching final-time RMSE $3.728$ versus $1.382$ for
FREESIA. These changes separately support the roles of cross-covariance transfer
and nonlinear proposal correction in repeated sparse assimilation.

\begin{figure}[!htbp]
\centering
\includegraphics[width=0.88\linewidth]{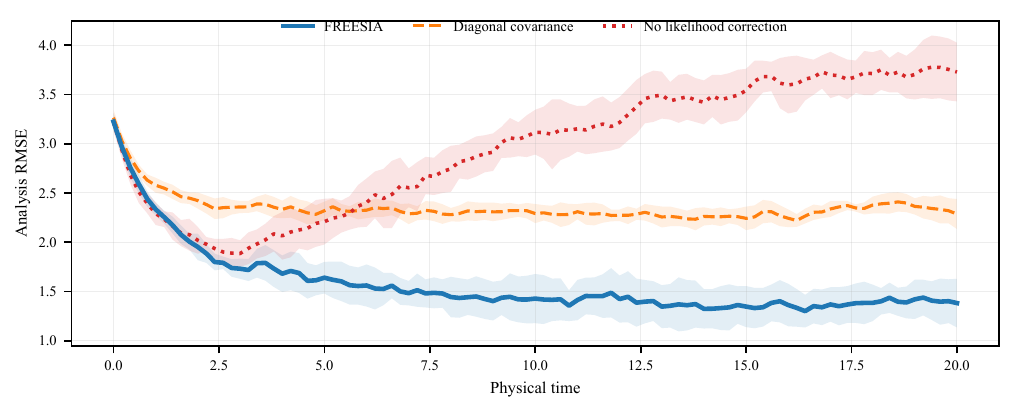}
\caption{Lorenz--96 stride-4 ablation. Curves show mean analysis RMSE over 10 matched runs, and shading denotes one standard deviation across runs. Diagonal covariance removes cross-coordinate covariance terms. The no-likelihood-correction variant replaces the nonlinear likelihood-ratio weights with uniform endpoint weights.}
\label{fig:l96-stride4-ablation}
\end{figure}

\paragraph{Noise robustness.}
As a robustness check, we repeat the stride-4 experiment with observation-noise
standard deviation $\sigma=0.05$. FREESIA remains the lowest-RMSE method,
showing that the sparse-transfer advantage is not specific to the low-noise
setting. The performance gaps among several baselines become smaller, and some
methods improve relative to the $\sigma=0.01$ regime, suggesting that the
sharper likelihood in the high-SNR experiment contributes to the difficulty of
the assimilation problem.
\begin{table}[!htbp]
\centering
\scriptsize
\renewcommand{\arraystretch}{1.08}
\setlength{\tabcolsep}{2.1pt}
\caption{Time-averaged total, observed-coordinate, and unobserved-coordinate RMSE and CRPS for Lorenz--96 stride-4 observations under two noise levels. Lower is better.}
\label{tab:l96-noise-robustness}
\begin{tabular}{lcccccccc}
\toprule
& \multicolumn{4}{c}{$\sigma=0.01$} & \multicolumn{4}{c}{$\sigma=0.05$} \\
\cmidrule(lr){2-5}\cmidrule(lr){6-9}
Method & RMSE & Obs. & Unobs. & CRPS & RMSE & Obs. & Unobs. & CRPS \\
\midrule
\method{} & \textbf{1.575} & 0.373 & \textbf{1.805} & \textbf{0.584} & \textbf{2.076} & 0.938 & \textbf{2.335} & \textbf{1.013} \\
EnKF-MDA & 2.056 & \textbf{0.366} & 2.358 & 0.955 & 2.544 & \textbf{0.894} & 2.891 & 1.342 \\
EnKF & 2.684 & 1.689 & 2.928 & 1.356 & 2.642 & 1.603 & 2.903 & 1.407 \\
EnSF & 2.472 & 1.143 & 2.777 & 1.242 & 2.865 & 1.937 & 3.113 & 1.582 \\
EnFF-F2P & 3.960 & 4.412 & 3.796 & 2.243 & 2.893 & 2.263 & 3.074 & 1.565 \\
LETKF & 3.185 & 1.853 & 3.513 & 1.648 & 3.069 & 1.756 & 3.391 & 1.654 \\
EnFF-OT & 3.749 & 2.637 & 4.051 & 2.636 & 4.751 & 4.573 & 4.807 & 3.447 \\
\bottomrule
\end{tabular}
\end{table}

\paragraph{Accuracy--cost trade-off.}
Figure~\ref{fig:l96-accuracy-cost} fixes $N_{\mathrm{IS}}=5$ and varies only
$T\in\{5,10,20,50\}$, while baseline points use their reported
configurations. Increasing the number of flow steps from 5 to 20 progressively
reduces filtering error, whereas further increasing the flow resolution to 50
yields negligible additional RMSE reduction despite substantially higher
analysis cost. The default $T=20$ setting therefore lies near the empirical
accuracy--cost knee. At a comparable analysis time, FREESIA substantially
improves over EnSF, providing evidence that the observed accuracy gain is not
solely attributable to a larger computational budget.
\begin{figure}[!htbp]
\centering
\includegraphics[width=0.72\linewidth]{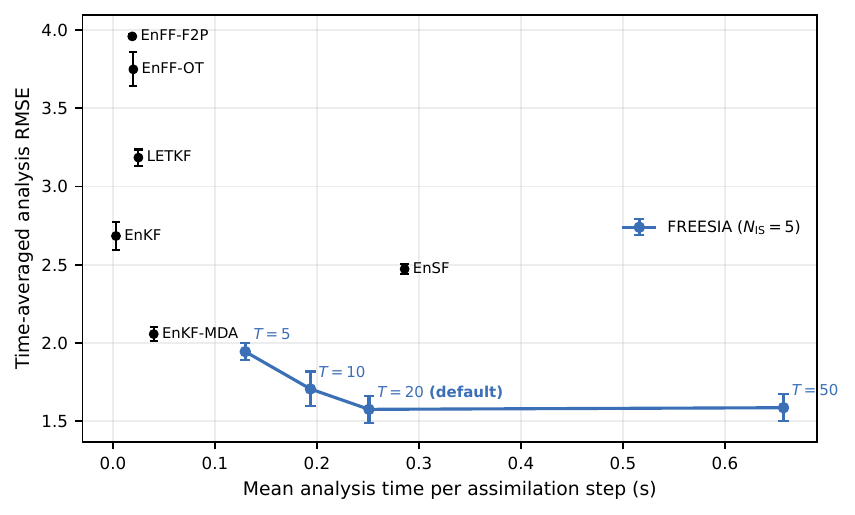}
\caption{Accuracy--cost trade-off for Lorenz--96 stride-4 assimilation. FREESIA fixes $N_{\mathrm{IS}}=5$ and varies only the number of flow steps $T$; error bars denote one standard deviation across repeated runs. Baseline points use the configurations reported in the main comparison.}
\label{fig:l96-accuracy-cost}
\end{figure}

\paragraph{Importance-weight diagnostics.}
For normalized endpoint weights $w^{(s)}$, define
\[
\operatorname{ESS}=\frac{1}{\sum_{s=1}^{N_{\mathrm{IS}}}(w^{(s)})^2},
\qquad
\operatorname{nESS}=\frac{\operatorname{ESS}}{N_{\mathrm{IS}}}
=\frac{1}{N_{\mathrm{IS}}\sum_s(w^{(s)})^2},
\]
so $1/N_{\mathrm{IS}}\le\operatorname{nESS}\le1$.
The higher-resolution diagnostic uses $T=50$ and $N_{\mathrm{IS}}=20$, a finer
flow discretization and more endpoint proposals, only to resolve the
near-endpoint behavior of the importance weights; the default configurations
used for the main accuracy comparisons are unchanged. In the stride-4
small-noise regime, importance weights are strongly concentrated over the early
and intermediate portions of the flow. The concentration decreases rapidly
near $t=1$, consistent with the endpoint contraction analysis in
Appendix~\ref{app:is-control}. Across the filtering sequence, we do not observe
systematic deterioration of the weight behavior at fixed flow times. These
diagnostics are qualitative and do not verify the accumulated-SNIS condition
in Eq.~\ref{eq:accumulated-is-constant}.
\begin{figure}[!htbp]
\centering
\includegraphics[width=0.90\linewidth]{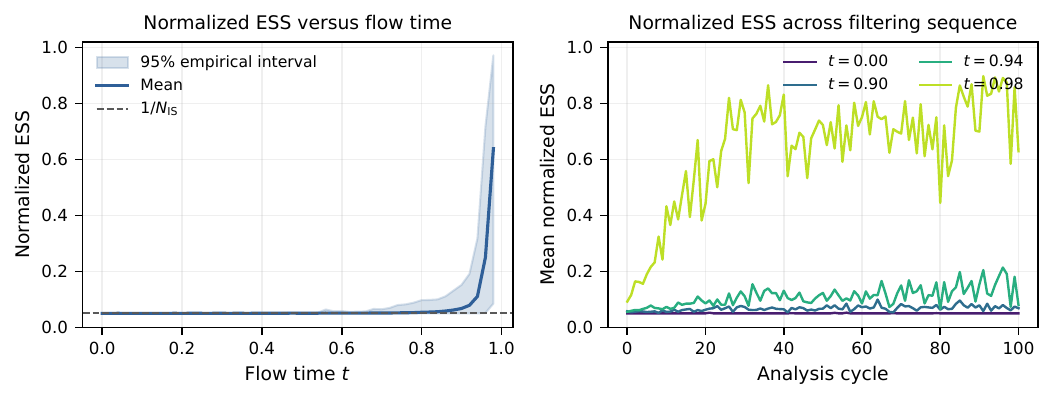}
\caption{Importance-weight diagnostics for the Lorenz--96 stride-4 experiment. Left: normalized effective sample size (nESS) as a function of flow time, aggregated over filtering states and flow particles. Right: nESS across the filtering sequence at selected flow times. Importance weights are strongly concentrated over the early and intermediate portions of the flow, while nESS rises rapidly near the endpoint as the conditional bridge contracts. The concentration is primarily flow-time dependent rather than showing systematic deterioration across the filtering sequence. The higher-resolution configuration is used only to resolve the near-endpoint weight behavior.}
\label{fig:l96-ness}
\end{figure}
\FloatBarrier

\subsection{Kolmogorov Flow}
\label{app:kolmogorov}

The high-dimensional benchmark uses the periodically forced incompressible
Navier--Stokes equations \citep{temam2001navierstokes},
\[
\partial_t u+(u\cdot\nabla)u
=
-\nabla p+\mathrm{Re}^{-1}\Delta u-\alpha u+f,
\qquad
\nabla\cdot u=0,
\]
on a periodic \(128\times128\) grid over \([0,2\pi]^2\). The two velocity
components give \(d=32768\), and the Reynolds number is
\(\mathrm{Re}=1000\). The linear damping coefficient is \(\alpha=0.1\), and
the velocity-form Kolmogorov forcing is
\(f(x,y)=(\sin(4y),0)^\top\), with unit amplitude and forcing wavenumber four.
Equivalently, the forcing contributes \(-4\cos(4y)\) to the vorticity equation.
The numerical time step is determined automatically from a Courant stability
condition.

Before assimilation, the flow is integrated for 10 physical time units so
that both truth trajectories and initial forecast ensembles are initialized
from the developed-flow regime. The spin-up interval is excluded from the
reported filtering statistics.

Both observation regimes use an ensemble size of 20, an observation interval
of approximately 0.2, 50 forecast--analysis intervals, and ten independent
trajectories. The arctangent
experiment observes 328 randomly selected velocity entries
(approximately 1\%), whereas the absolute-arctangent experiment observes 656
entries (approximately 2\%). All compared methods use the same truth
trajectories and observation locations. For the selected
velocity entries, the observation noise is independent Gaussian noise,
\(\epsilon_k\sim\mathcal N(0,0.05^2I)\).

\method{} uses 20 flow steps, \(N_{\mathrm{IS}}=5\) endpoint importance samples,
\(\gamma=0.2\), localization radius \(r_{\mathrm{loc}}=16\), and covariance
inflation 1.1 in both reported observation regimes. The multi-endpoint
construction permits nonuniform likelihood-ratio weights while retaining the
covariance-aware observation-adaptive proposal. The matched Lorenz--96 stride-4
ablation in Appendix~B.2 isolates the contribution of nonlinear likelihood
correction.

Table~\ref{tab:kf_loc} reports the covariance settings used by \method{} and
the Kalman-type baselines.
All seven methods complete the ten repeated assimilation sequences in both
observation regimes.

\paragraph{Time-resolved RMSE.}
Figure~\ref{fig:2dkf-rmse} reports the analysis trajectories for the available
methods. Each panel retains the four methods with the lowest time-averaged RMSE
in that regime.

\begin{figure}[!htbp]
\centering
\includegraphics[width=0.82\linewidth]{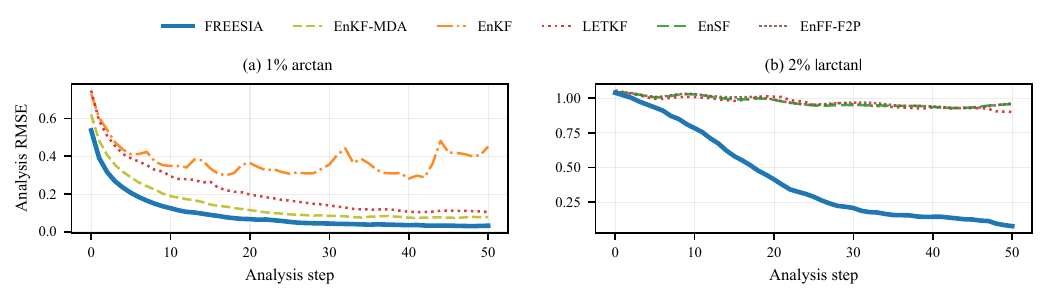}
\caption{Analysis RMSE for 1\% arctangent observations (left) and 2\% absolute-arctangent observations (right). Each panel shows the four methods with the lowest time-averaged RMSE across the ten trajectories.}
\label{fig:2dkf-rmse}
\end{figure}

\paragraph{Accuracy--cost trade-off.}
Figure~\ref{fig:kf-accuracy-cost} reports an accuracy--cost sweep for the 1\%
arctangent Kolmogorov-flow regime. The sweep fixes
$N_{\mathrm{IS}}=1$, $r_{\mathrm{loc}}=16$, and the remaining \method{}
hyperparameters, and varies only the number of flow steps
$T\in\{5,10,20\}$. The default benchmark configuration
$(T=20,N_{\mathrm{IS}}=5)$, which uses the same localization radius, is
included as the rightmost reference point.
Filtering error decreases rapidly over the low-to-moderate budget range and
then saturates; the default configuration incurs additional cost for
comparatively modest further improvement in this monotone-observation regime.
\begin{figure}[!htbp]
\centering
\includegraphics[width=0.72\linewidth]{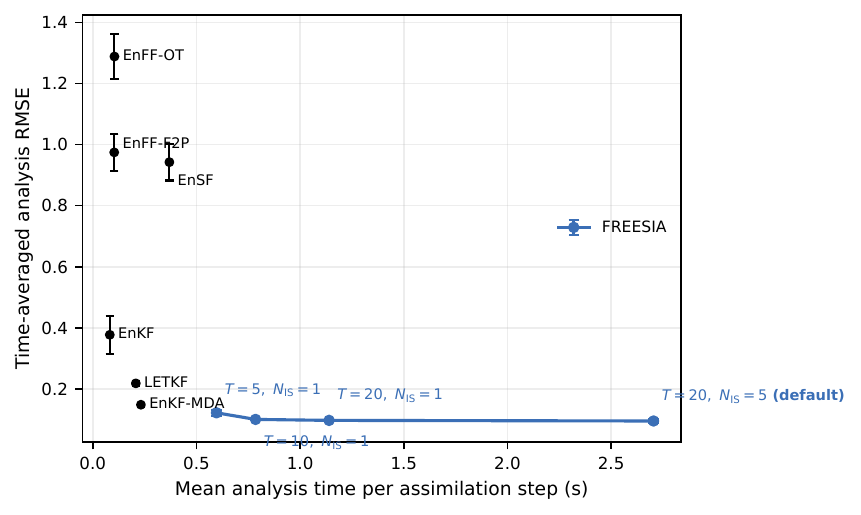}
\caption{Accuracy--cost trade-off on 2-D Kolmogorov flow with 1\% random arctangent observations. The sweep fixes $N_{\mathrm{IS}}=1$ and $r_{\mathrm{loc}}=16$, and varies only the number of flow steps $T$. The rightmost reference point is the default benchmark configuration $(T=20,N_{\mathrm{IS}}=5)$, with the same localization radius. Error bars denote one standard deviation across ten trajectories. Baseline points use their reported configurations.}
\label{fig:kf-accuracy-cost}
\end{figure}

\begin{table}[!htbp]
\centering
\scriptsize
\caption{Localization radius \(r_{\mathrm{loc}}\) and covariance inflation \(\lambda_{\mathrm{infl}}\) for the Kolmogorov-flow comparisons.}
\label{tab:kf_loc}
\begin{tabular}{lrrrr}
\toprule
& \multicolumn{2}{c}{1\% atan} & \multicolumn{2}{c}{2\% abs-atan} \\
\cmidrule(lr){2-3}\cmidrule(lr){4-5}
Method & \(r_{\mathrm{loc}}\) & \(\lambda_{\mathrm{infl}}\) & \(r_{\mathrm{loc}}\) & \(\lambda_{\mathrm{infl}}\) \\
\midrule
\method{} & 16 & 1.1 & 16 & 1.1 \\
EnKF & 8 & 1.1 & 6 & 1.1 \\
EnKF-MDA & 10 & 1.2 & 6 & 1.1 \\
LETKF & 3 & 1.1 & 2 & 1.1 \\
\bottomrule
\end{tabular}
\end{table}
\FloatBarrier

\paragraph{Statistical uncertainty.}
Tables~\ref{tab:dw-uncertainty}--\ref{tab:kf-uncertainty} report means and
standard deviations of per-run time-averaged metrics. Standard deviations
describe variability across independent runs and are not confidence
intervals.

\begin{table}[!htbp]
\centering
\scriptsize
\setlength{\tabcolsep}{3.5pt}
\caption{Double-Well metrics (mean $\pm$ standard deviation over 10 runs).}
\label{tab:dw-uncertainty}
\begin{tabular}{lccccc}
\toprule
Method & JS & $W_1$ & Mode error & RMSE & CRPS \\
\midrule
\method{} & $0.03134\pm0.00047$ & $0.05936\pm0.00053$ & $0.03503\pm0.00009$ & $0.14417\pm0.00035$ & $0.09915\pm0.00002$ \\
EnKF-MDA & $0.18521\pm0.00001$ & $0.31992\pm0.00004$ & $0.25878\pm0.00001$ & $0.43262\pm0.00003$ & $0.37445\pm0.00001$ \\
EnKF & $0.18365\pm0.00035$ & $0.23281\pm0.00050$ & $0.15807\pm0.00010$ & $0.32489\pm0.00051$ & $0.25673\pm0.00017$ \\
EnSF & $0.31497\pm0.00161$ & $0.66648\pm0.00404$ & $0.35924\pm0.00398$ & $0.69883\pm0.00261$ & $0.37946\pm0.00186$ \\
EnFF-OT & $0.54998\pm0.00007$ & $0.20219\pm0.00003$ & $0.12996\pm0.00001$ & $0.26335\pm0.00004$ & $0.25411\pm0.00004$ \\
EnFF-F2P & $0.35776\pm0.00115$ & $0.15151\pm0.00243$ & $0.08547\pm0.00072$ & $0.23215\pm0.00183$ & $0.16993\pm0.00080$ \\
\bottomrule
\end{tabular}
\end{table}

\begin{table}[!htbp]
\centering
\scriptsize
\setlength{\tabcolsep}{1.7pt}
\caption{Lorenz--96 RMSE and CRPS (mean $\pm$ standard deviation over 10 runs). For the dense observation regime, all coordinates are observed, so only total RMSE and CRPS are reported. For the sparse regimes, total, observed-coordinate, and unobserved-coordinate RMSE are reported together with CRPS.}
\label{tab:l96-uncertainty}
\resizebox{\linewidth}{!}{%
\begin{tabular}{lcccccccccc}
\toprule
& \multicolumn{2}{c}{Dense abs-atan} & \multicolumn{4}{c}{Stride-2 atan} & \multicolumn{4}{c}{Stride-4 atan} \\
\cmidrule(lr){2-3}\cmidrule(lr){4-7}\cmidrule(lr){8-11}
Method & RMSE & CRPS & RMSE & Obs. & Unobs. & CRPS & RMSE & Obs. & Unobs. & CRPS \\
\midrule
\method{} & $0.490\pm0.038$ & $0.174\pm0.007$ & $0.615\pm0.060$ & $0.417\pm0.025$ & $0.757\pm0.082$ & $0.241\pm0.014$ & $1.575\pm0.085$ & $0.373\pm0.017$ & $1.805\pm0.098$ & $0.584\pm0.041$ \\
EnKF-MDA & $1.384\pm0.255$ & $0.586\pm0.107$ & $0.987\pm0.243$ & $0.673\pm0.153$ & $1.211\pm0.307$ & $0.232\pm0.006$ & $2.056\pm0.045$ & $0.366\pm0.021$ & $2.358\pm0.051$ & $0.955\pm0.024$ \\
EnKF & $3.053\pm0.155$ & $1.712\pm0.148$ & $1.380\pm0.153$ & $1.120\pm0.141$ & $1.573\pm0.168$ & $0.422\pm0.026$ & $2.684\pm0.089$ & $1.689\pm0.130$ & $2.928\pm0.078$ & $1.356\pm0.020$ \\
LETKF & $6.161\pm1.161$ & $3.921\pm0.329$ & $1.533\pm0.225$ & $1.243\pm0.192$ & $1.753\pm0.253$ & $0.494\pm0.051$ & $3.185\pm0.052$ & $1.853\pm0.069$ & $3.513\pm0.052$ & $1.648\pm0.022$ \\
EnSF & $3.901\pm0.012$ & $2.340\pm0.008$ & $2.027\pm0.051$ & $1.531\pm0.036$ & $2.422\pm0.064$ & $0.976\pm0.035$ & $2.472\pm0.031$ & $1.143\pm0.017$ & $2.777\pm0.035$ & $1.242\pm0.020$ \\
EnFF-OT & $4.341\pm0.206$ & $2.959\pm0.213$ & $2.817\pm0.085$ & $2.303\pm0.069$ & $3.247\pm0.100$ & $1.825\pm0.076$ & $3.749\pm0.109$ & $2.637\pm0.073$ & $4.051\pm0.119$ & $2.636\pm0.101$ \\
EnFF-F2P & $1.933\pm0.054$ & $0.721\pm0.029$ & $1.486\pm0.042$ & $1.045\pm0.029$ & $1.821\pm0.053$ & $0.609\pm0.024$ & $3.960\pm0.013$ & $4.412\pm0.012$ & $3.796\pm0.017$ & $2.243\pm0.008$ \\
\bottomrule
\end{tabular}
}
\end{table}

\begin{table}[!htbp]
\centering
\scriptsize
\setlength{\tabcolsep}{1.7pt}
\caption{Kolmogorov-flow total, observed-coordinate, and unobserved-coordinate RMSE and CRPS (mean $\pm$ standard deviation over ten trajectories).}
\label{tab:kf-uncertainty}
\resizebox{\linewidth}{!}{%
\begin{tabular}{lcccccccc}
\toprule
& \multicolumn{4}{c}{1\% atan} & \multicolumn{4}{c}{2\% abs-atan} \\
\cmidrule(lr){2-5}\cmidrule(lr){6-9}
Method & All & Obs. & Unobs. & CRPS & All & Obs. & Unobs. & CRPS \\
\midrule
\method{} & $0.095\pm0.005$ & $0.076\pm0.007$ & $0.095\pm0.005$ & $0.052\pm0.003$ & $0.422\pm0.101$ & $0.410\pm0.105$ & $0.422\pm0.101$ & $0.209\pm0.036$ \\
EnKF-MDA & $0.148\pm0.007$ & $0.115\pm0.012$ & $0.148\pm0.007$ & $0.081\pm0.005$ & $1.121\pm0.088$ & $1.115\pm0.105$ & $1.121\pm0.088$ & $0.797\pm0.070$ \\
EnKF & $0.377\pm0.063$ & $0.372\pm0.070$ & $0.377\pm0.063$ & $0.200\pm0.051$ & $0.987\pm0.123$ & $0.984\pm0.123$ & $0.987\pm0.123$ & $0.681\pm0.100$ \\
LETKF & $0.218\pm0.008$ & $0.177\pm0.012$ & $0.218\pm0.008$ & $0.117\pm0.006$ & $0.970\pm0.062$ & $1.088\pm0.072$ & $0.967\pm0.062$ & $0.572\pm0.031$ \\
EnSF & $0.942\pm0.060$ & $0.568\pm0.067$ & $0.945\pm0.060$ & $0.547\pm0.035$ & $0.974\pm0.053$ & $0.967\pm0.085$ & $0.975\pm0.052$ & $0.569\pm0.030$ \\
EnFF-OT & $1.288\pm0.072$ & $1.148\pm0.081$ & $1.290\pm0.072$ & $1.024\pm0.060$ & $1.342\pm0.041$ & $1.342\pm0.052$ & $1.342\pm0.041$ & $1.069\pm0.030$ \\
EnFF-F2P & $0.974\pm0.060$ & $0.776\pm0.081$ & $0.976\pm0.059$ & $0.567\pm0.034$ & $0.980\pm0.051$ & $0.977\pm0.069$ & $0.980\pm0.051$ & $0.570\pm0.028$ \\
\bottomrule
\end{tabular}
}
\end{table}
\FloatBarrier

\paragraph{Runtime.}
Table~\ref{tab:runtime} reports mean analysis time per assimilation step for
the reported Lorenz--96 and Kolmogorov-flow configurations. The Kolmogorov-flow
entries use the main \(N_{\mathrm{IS}}=5\) configuration. Timings are
measured on a laptop equipped with an NVIDIA GeForce RTX 5070 Laptop GPU.

\begin{table}[!htbp]
\centering
\small
\setlength{\tabcolsep}{5pt}
\caption{Mean analysis time per assimilation step in seconds.}
\label{tab:runtime}
\begin{tabular}{lccccc}
\toprule
Method & L96 dense & L96 stride-2 & L96 stride-4 & KF $\arctan$ & KF $|\arctan|$ \\
\midrule
\textbf{\method{}} (ours) & 1.377 & 0.478 & 0.251 & 2.704 & 3.397 \\
EnKF-MDA & 0.216 & 0.083 & 0.040 & 0.232 & 0.394 \\
EnKF & 0.012 & 0.005 & 0.003 & 0.082 & 0.194 \\
LETKF & 0.009 & 0.021 & 0.025 & 0.208 & 0.200 \\
EnSF & 0.343 & 0.301 & 0.286 & 0.369 & 0.385 \\
EnFF-OT & 0.021 & 0.022 & 0.020 & 0.105 & 0.102 \\
EnFF-F2P & 0.020 & 0.020 & 0.019 & 0.103 & 0.101 \\
\bottomrule
\end{tabular}
\end{table}

\FloatBarrier

\subsection{Computational Complexity and High-Dimensional Scaling}
\label{app:highdim-scaling}

We further examine computational scaling using fully observed Lorenz--96,
\[
y_i=\arctan(x_i)+\epsilon_i,\qquad
\epsilon_i\sim\mathcal N(0,0.05^2),
\]
with $d\in\{10^4,10^5,10^6\}$ and ensemble size $M=20$. For each dimension,
three independent runs use matched initializations and observation-noise
sequences across methods, with 51 analysis times per run. We compare EnSF and
EnFF-F2P with \textbf{FREESIA-Diag}, a lightweight specialization of FREESIA
for extremely high-dimensional fully observed settings. FREESIA-Diag uses a
diagonal forecast covariance, $\gamma=0$, $N_{\mathrm{IS}}=1$, and $T=20$;
EnFF-F2P uses 20 flow points, and EnSF uses 50 reverse-diffusion steps.

\begin{table}[!htbp]
\centering
\small
\caption{Leading analysis complexity of the scaling-study implementations.
$M$, $T$, $d$, and $m$ denote ensemble size, generative steps, state dimension,
and observation dimension, respectively. For structured FREESIA,
$C_{\mathrm{cond}}$, $C_{\mathrm{end}}$, and $C_{\mathrm{like}}$ are defined in
Appendix~\ref{app:covariance-implementation}; $S_{\Sigma}$ denotes
structured-covariance storage.}
\label{tab:scaling_complexity}
\begin{tabular}{lcc}
\toprule
Method & Computation & Memory \\
\midrule
EnSF (diagonal score) & $O(MTd)$ & $O(Md)$ \\
EnFF-F2P & $O(M^2Td)$ & $O(Md+M^2)$ \\
FREESIA (structured) & \shortstack[c]{$O(TM^2C_{\mathrm{cond}}+{}$\\$TMN_{\mathrm{IS}}(C_{\mathrm{end}}+C_{\mathrm{like}}))$} & $O(Md+S_{\Sigma}+m^2)$ \\
FREESIA-Diag & $O(MTd)$ & $O(Md)$ \\
\bottomrule
\end{tabular}
\end{table}

Table~\ref{tab:scaling_complexity} separates the general structured
implementation from the diagonal specialization used in this scaling study.
As detailed in Appendix~\ref{app:covariance-implementation}, general FREESIA
retains explicit dependence on observation dimension through the $m\times m$
innovation covariance $S_{t,j}$. Under structured covariance actions,
row-sparse observations, and a direct dense observation-space solve,
$C_{\mathrm{cond}}=O(dm+m^3)$, $C_{\mathrm{end}}=O(d+m^2)$, and
$C_{\mathrm{like}}=O(m)$ for diagonal $R$. In FREESIA-Diag, the forecast
covariance and component-wise observation Jacobian
$H=\operatorname{Diag}(h'_1,\ldots,h'_d)$ are diagonal, as is $R$; hence
$S_{t,j}=H\Sigma_{1\mid t}H^{\top}+R$ is diagonal and its solve reduces to
coordinatewise scalar operations. Together with $\gamma=0$, which makes the
mixture components coincide, this gives $O(MTd)$ computation and $O(Md)$
memory for fixed $N_{\mathrm{IS}}$, even when $m=d$.

\begin{figure}[!htbp]
\centering
\includegraphics[width=\linewidth]{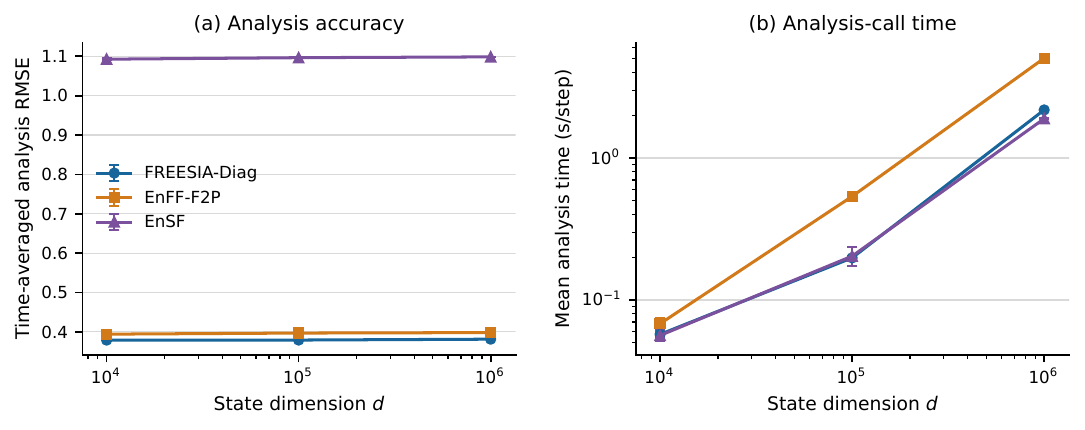}
\caption{Fully observed Lorenz--96 scaling with $M=20$. Left: time-averaged
analysis RMSE. Right: mean analysis time per assimilation step. Each point
averages 51 analyses over three independent runs; error bars denote one
standard deviation across run means.}
\label{fig:l96-highdim-scaling}
\end{figure}

Figure~\ref{fig:l96-highdim-scaling} reports the corresponding accuracy and
analysis-time scaling. All three implementations scale to $d=10^6$ in the
tested configuration. FREESIA-Diag maintains a nearly dimension-independent
time-averaged RMSE of 0.379--0.382; at $d=10^6$, the RMSEs are 0.382, 0.398,
and 1.098 for FREESIA-Diag, EnFF-F2P, and EnSF, respectively. Its analysis
time grows approximately linearly with state dimension over the tested range,
consistent with the $O(MTd)$ complexity in Table~\ref{tab:scaling_complexity}.
These results show that the diagonal specialization preserves useful filtering
accuracy while extending FREESIA to million-dimensional state spaces with
linear state-dimensional storage and computation.
\FloatBarrier

\end{document}